\DeclareMathOperator*{\argmax}{arg\,max}
\theoremstyle{plain}
\newtheorem{theorem}{Theorem}[section]
\newtheorem{lemma}[theorem]{Lemma}
\theoremstyle{definition}
\newtheorem{definition}[theorem]{Definition}
\theoremstyle{remark}
\title{Explainable Graph Representation Learning via Graph Pattern Analysis}
\author{
Xudong Wang$^{1}$
\and
Ziheng Sun$^{1,2}$ \and
Chris Ding$^{1}$\And
Jicong Fan$^{1,2,}$\footnote{Corresponding author.\\
This is the full version of the paper published in the 
\emph{Proceedings of the Thirty-Fourth International Joint Conference on Artificial Intelligence (IJCAI-25)}, 
Main Track, pages~3426--3434.\\
\href{https://doi.org/10.24963/ijcai.2025/381}{https://doi.org/10.24963/ijcai.2025/381}}\\
\affiliations
$^1$School of Data Science, The Chinese University of Hong Kong, Shenzhen (CUHK-Shenzhen), China\\
$^2$Shenzhen Research Institute of Big Data, Shenzhen, China\\
\emails
\{xudongwang, zihengsun\}@link.cuhk.edu.cn,
\{chrisding, fanjicong\}@cuhk.edu.cn
}
\begin{document}

\maketitle

\begin{abstract}
Explainable artificial intelligence (XAI) is an important area in the AI community, and interpretability is crucial for building robust and trustworthy AI models. While previous work has explored model-level and instance-level explainable graph learning, there has been limited investigation into explainable graph representation learning. In this paper, we focus on representation-level explainable graph learning and ask a fundamental question: What specific information about a graph is captured in graph representations? Our approach is inspired by graph kernels, which evaluate graph similarities by counting substructures within specific graph patterns. Although the pattern counting vector can serve as an explainable representation, it has limitations such as ignoring node features and being high-dimensional. To address these limitations, we introduce a framework (PXGL-GNN) for learning and explaining graph representations through graph pattern analysis. We start by sampling graph substructures of various patterns. Then, we learn the representations of these patterns and combine them using a weighted sum, where the weights indicate the importance of each graph pattern's contribution. We also provide theoretical analyses of our methods, including robustness and generalization. In our experiments, we show how to learn and explain graph representations for real-world data using pattern analysis. Additionally, we compare our method against multiple baselines in both supervised and unsupervised learning tasks to demonstrate its effectiveness.
\end{abstract}

{\setlength\abovedisplayskip{4pt}
\setlength\belowdisplayskip{0pt}
\section{Introduction}
The research field of explainable artificial intelligence (XAI) \cite{adadi2018peeking,angelov2021explainable,hassija2024interpreting} is gaining significant attention in both AI and science communities. Interpretability is crucial for creating robust and trustworthy AI models, especially in critical domains like transportation, healthcare, law, and finance. Graph learning \cite{sun2024lovasz,sun2024mmd,NEURIPS2024_f900437f} is an important area of AI that particularly focuses on graph-structured data that widely exists in social science, biology, chemistry, etc. Explainable graph learning (XGL) \cite{kosan2023gnnx} can be generally classified into two categories: model-level methods and instance-level methods.

Model-level methods of XGL provide transparency by analyzing the model behavior. Examples include XGNN \cite{yuan2020xgnn}, GLG-Explainer \cite{azzolin2022global}, and GCFExplainer \cite{huang2023global}. Instance-level methods of XGL offer explanations tailored to specific predictions, focusing on why particular instances are classified in a certain manner. For instance, GNNExplainer \cite{ying2019gnnexplainer} identifies a compact subgraph structure crucial for a GNN's prediction. PGExplainer \cite{luo2020parameterized} trains a graph generator to incorporate global information and parameterize the explanation generation process. AutoGR \cite{wang2021explainable} introduces an explainable AutoML approach for graph representation learning. 

However, these works mainly focus on enhancing the transparency of GNN models or identifying the most important substructures that contribute to predictions. The exploration of representation-level explainable graph learning (XGL) is limited. We propose explainable graph representation learning and ask a fundamental question: \textbf{What specific information about a graph is captured in graph representations?}
Formally, if we represent a graph $G$ as a $d$-dimensional vector $\bm{g}$, our goal is to understand what specific information about the graph $G$ is embedded in the representation $\bm{g}$. This problem is important and has practical applications. Some graph patterns are highly practical and crucial in various real-world tasks, and we want this information to be captured in representations. For instance, in molecular chemistry, bonds between atoms or functional groups often form cycles (rings), which indicate a molecule’s properties and can be used to generate molecular fingerprints \cite{morgan1965generation,alon2008biomolecular,rahman2009small,o2016comparing}. Similarly, cliques characterize protein complexes in Protein-Protein Interaction networks and help identify community structures in social networks \cite{girvan2002community,jiang2010finding,fox2020finding}.

Although some previous works such as \cite{kosan2023gnnx} aimed to find the most critical subgraph $S$ by solving optimization problems based on perturbation-based reasoning, either factual or counterfactual, this kind of approach assumes that the most important subgraph $S$ mainly contributes to the representation $\bm{g}$, neglecting other aspects of the graph, which doesn't align well with our goal of thoroughly understanding graph representations. Analyzing all subgraphs of a graph $G$ is impractical due to their vast number. To address the challenge, we propose to group the subgraphs into different graph patterns, like paths, trees, cycles, cliques, etc, and then analyze the contribution of each graph pattern to the graph representation $\bm{g}$.

Our idea of pattern analysis is inspired by graph kernels, which compare substructures of specific graph patterns to evaluate the similarity between two graphs \cite{kriege2020survey}. For example, random walk kernels \cite{borgwardt2005protein,gartner2003graph} use path patterns, sub-tree kernels \cite{da2012tree,smola2002fast} examine tree patterns, and graphlet kernels \cite{prvzulj2007biological} focus on graphlet patterns. The graph kernel involves learning a pattern counting representation vector $\bm{h}$, which counts the occurrences of substructures of a specific pattern within the graph $G$. While the pattern counting vector $\bm{h}$ is an explainable representation, it has some limitations, such as the high dimensionality and ignorance of node features. 

There also exist some representation methods based on subgraphs and substructures, such as Subgraph Neural Networks (SubGNN) \cite{kriege2012subgraph}, Substructure Assembling Network (SAN) \cite{zhao2018substructure}, Substructure Aware Graph Neural Networks (SAGNN) \cite{zeng2023substructure}, and Mutual Information (MI) Induced Substructure-aware GRL \cite{wang2020exploiting}. However, these methods mainly focus on increasing expressiveness and do not provide explainability for representation learning. 

In this work, we propose a novel framework to learn and explain graph representations via graph pattern analysis. We start by sampling graph substructures of various patterns. Then, we learn the representations of these patterns and combine them adaptively, where the weights indicate the importance of each graph pattern's contribution. We also provide theoretical analyses of our methods, including robustness and generalization. Additionally, we compare our method against multiple baselines in both supervised and unsupervised learning tasks to demonstrate its effectiveness and superiority.
Our contributions are summarized as follows:
\begin{itemize}
    \item Unlike previous model-level and instance-level XGL, we introduce a new issue --- representation-level explainable graph learning. This issue focuses on understanding what specific information about a graph is embedded within its representations.
    \item We propose two strategies to learn and explain graph representations, including a graph ensemble kernel method~(\textbf{PXGL-EGK}) and a pattern analysis GNN method~(\textbf{PXGL-GNN}). The latter involves using GNNs to learn the representations of each pattern and evaluate its contribution to the ensemble graph representation.
    \item We provide theoretical analyses of our methods, including robustness and generalization.
\end{itemize}

\section{Notations}
In this work, we use $x$, $\bm{x}$, $\bm{X}$, and $\mathcal{X}$ (or $X$) to denote scalar, vector, matrix, and set, respectively. We denote $[n] = \{1, 2, ..., n\}$. Let $G = (V, E)$ be a graph with $n$ nodes and $d$-dimensional node features $\{\bm{x}_v \in \mathbb{R}^d \mid v \in V\}$. We denote $\bm{A} \in \{0, 1\}^{n \times n}$ the adjacency matrix and $\bm{X} = [\bm{x}_1, \ldots, \bm{x}_n]^\top \in \mathbb{R}^{n \times d}$ the node features matrix. Let $\mathcal{G} = \{G_1, \dots, G_N\}$ be a dataset of $N$ graphs belonging $C$ classes, where $G_i = (V_i, E_i)$. For $G_i$, we denote its number of nodes as $n_i$, the one-hot graph label as $\bm{y}_i \in \{0, 1\}^C$, the graph-level representation as a vector $\bm{g}_i \in \mathbb{R}^d$, the adjacency matrix as $\bm{A}_i$, and the node feature matrix as $\bm{X}_i$.
Let $S = (V_{S}, E_{S})$ be a subgraph of graph $G = (V, E)$ such that $V_{S} \subseteq V$ and $E_{S} \subseteq E$. The the adjacency matrix of $S$ is denoted as $\bm{A}_S \in \{0, 1\}^{|V_{S}|\times |V_{S}|}$ and the node feature matrix of $S$ is sampled from the rows of $\bm{X}$,  denoted as  $\bm{X}_S \in \mathbb{R}^{|V_{S}|\times d}$.

The graph pattern is defined as a set of all graphs that share certain properties, denoted as $\mathcal{P} = \{P_1, P_2, \ldots, P_i, \ldots\}$, where $P_i$ is the $i$-th example of this pattern. In this work, the graph patterns are basic graph families such as paths, trees, cycles, cliques, etc.  For example:
\begin{itemize}
    \item $\mathcal{P}_{\text{path}} = \{\text{ph}_1, \text{ph}_2, \ldots, \text{ph}_i, \ldots\}$ is a path pattern with $\text{ph}_i$ as a path of length $i$.
    \item $\mathcal{P}_{T} = \{T_1, T_2, \ldots, T_i, \ldots\}$ is a tree pattern where $T_i$ is the $i$-th tree.
    \item $\mathcal{P}_{\text{gl}} = \{\text{gl}_1, \text{gl}_2, \ldots, \text{gl}_i, \ldots\}$ is a graphlet pattern where $\text{gl}_i$ is the $i$-th graphlet.
\end{itemize}
\begin{figure*}[htb]
    \centering
    \includegraphics[width = \linewidth,trim={0 15 0 0},clip]{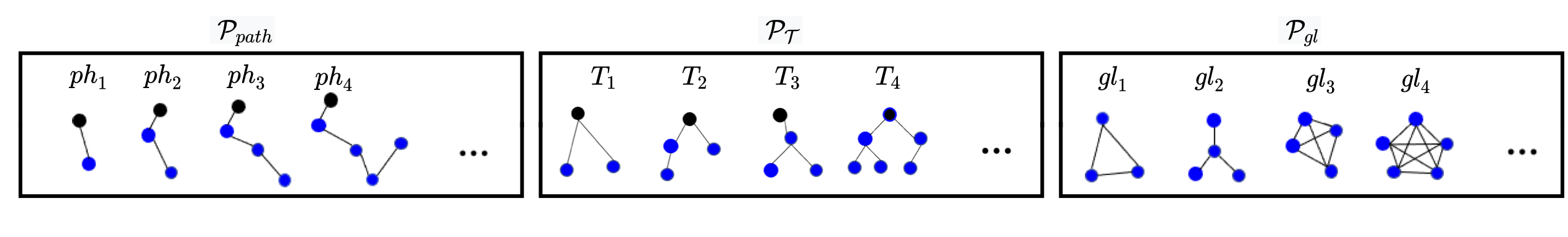}
    \caption{Examples of graph patterns:~$\mathcal{P}_{\text{path}}$, $\mathcal{P}_{\text{T}}$ and $\mathcal{P}_{\text{gl}}$}
    \label{fig:graphpatternexamples}
\end{figure*}
Figure~\ref{fig:graphpatternexamples} illustrates some intuitive examples of graph patterns. Notably, there are overlaps among different patterns; for instance, the graph $T_3 \in \mathcal{P}_{T}$ and $\text{gl}_2 \in \mathcal{P}_{\text{gl}}$ are identical, being both a tree and a graphlet. Overlaps are inevitable due to the predefined nature of these basic graph families in graph theory. 
We denote a set of $M$ different patterns as $\{ \mathcal{P}_1, \mathcal{P}_2, \ldots, \mathcal{P}_m, \ldots, \mathcal{P}_M\}$. Given the pattern $\mathcal{P}_m$ and the graph $G_i$, 
the pattern sampling set is denoted as $\mathcal{S}_i^{(m)}$ and the pattern representation is denoted as $\bm{z}^{(m)}_i \in \mathbb{R}^d$.

\section{XGL via Ensemble Graph Kernel}
In this section, we learn and explain the pattern counting graph representation via graph kernels. 

\subsection{Pattern Counting Kernel}
A graph kernel $K: \mathbb{G} \times \mathbb{G} \rightarrow \mathbb{R}$ aims to evaluate the similarity between two graphs. Let $G_i$ and $G_j$ be two graphs in the graph dataset $\mathcal{G}$ and let $\mathcal{H}$ be a high-dimensional vector space. The key to a graph kernel is defining a mapping from the graph space to the high-dimensional vector space as $\phi: \mathbb{G} \rightarrow \mathcal{H}$, where $\bm{h}_i = \phi(G_i)$ and $\bm{h}_j = \phi(G_j)$. Then, the graph kernel can be defined as the inner product of $\bm{h}_i$ and $\bm{h}_j$, i.e., $K(G_i, G_j):= \bm{h}_i^\top \bm{h}_j$. The most widely used mapping $\phi$ is the one counting the occurrences of each example in the pattern $\mathcal{P}$ within graph $G$. The corresponding pattern counting vector is defined as follows:
\begin{definition}[Pattern Counting Vector]\label{def:pc-vec}
Given a graph $G$ and a pattern $\mathcal{P} = \{P_1, P_2, \ldots, P_i, \ldots\}$, a pattern counting mapping $\phi: \mathbb{G} \rightarrow \mathcal{H}$ is defined as 
\begin{equation}\label{eqn:pc-vec}
    \bm{h} = \phi(G; \mathcal{P}), ~~\text{with}~~ \bm{h} = [h^{(1)}, h^{(2)}, \ldots, h^{(i)}, \ldots],
\end{equation}
where $h^{(i)}$ is the number of occurrences of pattern example $P_i$ as a substructure within graph $G$. We call $\bm{h}$ a pattern counting vector of $G$ related to pattern $\mathcal{P}$. 
\end{definition}
Then the pattern counting kernel $K_{\mathcal{P}}: \mathbb{G} \times \mathbb{G} \rightarrow \mathbb{R}$ based on pattern $\mathcal{P}$ is defined as:
\begin{definition}[Pattern Counting Kernel]\label{def:pc-ker}
Given the a pattern counting mapping $\phi(G; \mathcal{P})$, a pattern counting kernel is defined as 
\begin{equation}\label{eqn:pc-ker}
    K_{\mathcal{P}}(G_i, G_j) := \langle \phi(G_i; \mathcal{P}), \phi(G_j; \mathcal{P}) \rangle 
        = \bm{h}_i^\top \bm{h}_j
\end{equation}
\end{definition}
The pattern counting kernel $K_{\mathcal{P}}$ is uniquely determined by the pattern $\mathcal{P}$. For example, if $\mathcal{P}$ is selected as the path pattern $\mathcal{P}_{\text{path}}$, we obtain a random walk kernel \cite{borgwardt2005protein,gartner2003graph}. If $\mathcal{P}$ is the tree pattern $\mathcal{P}_{T}$, we get a sub-tree kernel \cite{da2012tree,smola2002fast}. Similarly, if $\mathcal{P}$ is the graphlet pattern $\mathcal{P}_{\text{gl}}$, we derive a graphlet kernel \cite{prvzulj2007biological}. 

\subsection{Pattern Analysis Using Graph Kernels}
Let $\{ \mathcal{P}_1, \mathcal{P}_2, \ldots, \mathcal{P}_M\}$ be a set of $M$ different graph patterns. For instance, $\mathcal{P}_1$ represents the path pattern and $\mathcal{P}_2$ represents the tree pattern. Then, we can define a set of $M$ different graph kernels as $\{ K_{\mathcal{P}_1}, K_{\mathcal{P}_2}, \ldots, K_{\mathcal{P}_M} \}$. Since the pattern counting kernel $K_{\mathcal{P}_m}$ is uniquely determined by the pattern $\mathcal{P}_m$, we can analyse the importance of pattern $\mathcal{P}_m$ by evaluating the importance of its pattern counting kernel $K_{\mathcal{P}_m}$. To achieve this, we define a learnable ensemble kernel as follows:
\begin{definition}[Learnable Ensemble Kernel]\label{def:ensemble-ker}
Let $\bm{\lambda} = [\lambda_1, \lambda_2, ..., \lambda_m,..., \lambda_M]^\top$ be a positive weight parameter vector. The ensemble kernel matrix $\bm{K}(\bm{\lambda}) \in \mathbb{R}^{|\mathcal{G}| \times |\mathcal{G}|}$ is defined as the weighted sum of $M$ different kernels $\{ K_{\mathcal{P}_1}, K_{\mathcal{P}_2}, \ldots, K_{\mathcal{P}_M} \}$. Given two graphs $G_i$ and $G_j$ in $\mathcal{G}$, the element at the $i$-th row and $j$-th column of $\bm{K}(\bm{\lambda})$ is given by
\begin{equation}
    \label{eqn:ensemble-ker}
    \begin{aligned}
        & K_{ij}(\bm{\lambda}) := \sum_{m = 1}^M \lambda_m ~ K_{\mathcal{P}_m}(G_i, G_j), ~~ \text{s.t}~~ \sum_{m=1}^M \lambda_m = 1,\\~~&\text{and}~~ \lambda_m\ge 0, ~~\forall m \in [M]. 
    \end{aligned}
\end{equation}
\end{definition}
Here, the weight parameter $\lambda_m$ indicates the importance of the kernel $K_{\mathcal{P}_m}$ as well as the corresponding graph pattern $\mathcal{P}_m$ within the dataset $\mathcal{G}$. Instead of the constrained optimization \eqref{eqn:ensemble-ker}, we may consider replacing $\lambda_m$ with $\exp(w_m)/\sum_{m=1}^M\exp(w_m)$ such that the constraints are satisfied inherently, which leads to an unconstrained optimization in terms of $\bm{w} = [w_1,\ldots,w_M]^\top$. In the following context, for convenience, we focus on \eqref{eqn:ensemble-ker}, though all results are applicable to the unconstrained optimization. To obtain the weight parameter $\bm{\lambda}$, we provide the supervised and unsupervised loss functions as follows.

\paragraph{Supervised Contrastive Loss.} Following \cite{oord2018representation}, given a kernel matrix $\bm{K} \in \mathbb{R}^{N \times N}$, we define the supervised InfoNEC loss of $\bm{K}$ as follows 
\begin{equation}
    \label{eqn:loss-scl}
    \begin{aligned}
        &\mathcal{L}_{\text{SCL}}(\bm{K}(\bm{\lambda})) = - \sum_{i \neq j} \mathbb{I}_{[\bm{y}_i = \bm{y}_j]} 
        (\log K_{ij}(\bm{\lambda}) \\
        &- \log [\sum_{k} \mathbb{I}_{[\bm{y}_i = \bm{y}_k, i \neq k] }  
        K_{ik}(\bm{\lambda}) + \mu \sum_{k} \mathbb{I}_{[\bm{y}_i \neq \bm{y}_k]} K_{ik}(\bm{\lambda}) ] ),
    \end{aligned}
\end{equation}
where $\mathbb{I}_{[\cdot]}$ is an indicator function and $\mu>0$ is a hyperparameter. 

\paragraph{Unsupervised KL Divergence.} Inspired by \cite{xie2016unsupervised}, given a kernel matrix $\bm{K} \in \mathbb{R}^{N \times N}$, we define the unsupervised KL divergence loss as follows
\begin{equation}
    \label{eqn:loss-kl-ker}
    \begin{aligned}
        & \mathcal{L}_{\text{KL}}(\bm{K}(\bm{\lambda})) = \mathbb{K} \mathbb{L} (\bm{K}(\bm{\lambda}), \bm{K}'(\bm{\lambda})), \\
        &~~\text{with}~~ 
        \bm{K}_{ij}'(\bm{\lambda}) = \frac{K_{ij}^2(\bm{\lambda}) / r_j}{\sum_{j'} K_{ij'}^2(\bm{\lambda}) / r_{j'}} ~~\text{and}~~ r_j = \sum_{j} K_{ij}(\bm{\lambda}),
    \end{aligned}
\end{equation}
where $r_j$ are soft cluster frequencies. By minimizing the KL divergence, the model adjusts the parameters $\boldsymbol{\lambda}$ to more accurately represent the natural clustering property of the dataset.

We use the $\mathcal{L}_{\text{SCL}}$ or $\mathcal{L}_{\text{KL}}$ as our loss function, i.e., 
$\mathcal{L}_{\text{ker}}(\bm{\lambda}) = \mathcal{L}_{\text{SCL}}(\bm{K}(\bm{\lambda})) ~~\text{or}~~ \mathcal{L}_{\text{KL}}(\bm{K}(\bm{\lambda}))$, when the graphs are labeled or unlabeled.
Then the weight parameter $\bm{\lambda}$ can be obtain by solving
\begin{equation}
    \label{eqn:solve-ker-lambda}
    \begin{aligned}
    \bm{\lambda}^* = \mathop{\textup{argmin}}{_{\bm{1}_M^\top \bm{\lambda} = 1,~ \bm{\lambda} \ge 0}}
                     ~~~\mathcal{L}_{\text{ker}}(\bm{\lambda}),
    \end{aligned}
\end{equation}
where $\bm{\lambda}^* = [\lambda_1^*, ..., \lambda_m^*, ... \lambda_M^*]^\top$ and $\lambda_m^*$ indicates the importance of kernel $K_{\mathcal{P}_m}$ as well as pattern $\mathcal{P}_m$. In Figure \ref{fig:ker-pattern-protein}, we can see that the ensemble Kernel performs better than each single kernel and the pattern analysis identifies the importance of each kernel as well as the related graph pattern. We call this method pattern-based XGL with ensemble graph kernel, abbreviated as \textbf{PXGL-EGK}. This method not only yields explainable similarity learning but also provides an approach to selecting graph kernels and their hyperparameters automatically if we consider different kernel types with different hyperparameters.

\begin{figure*}[ht]
    \centering
    \begin{subfigure}[b]{0.135\textwidth}
        \centering
        \caption{$\boldsymbol{\lambda}$}
        \includegraphics[width=\textwidth]{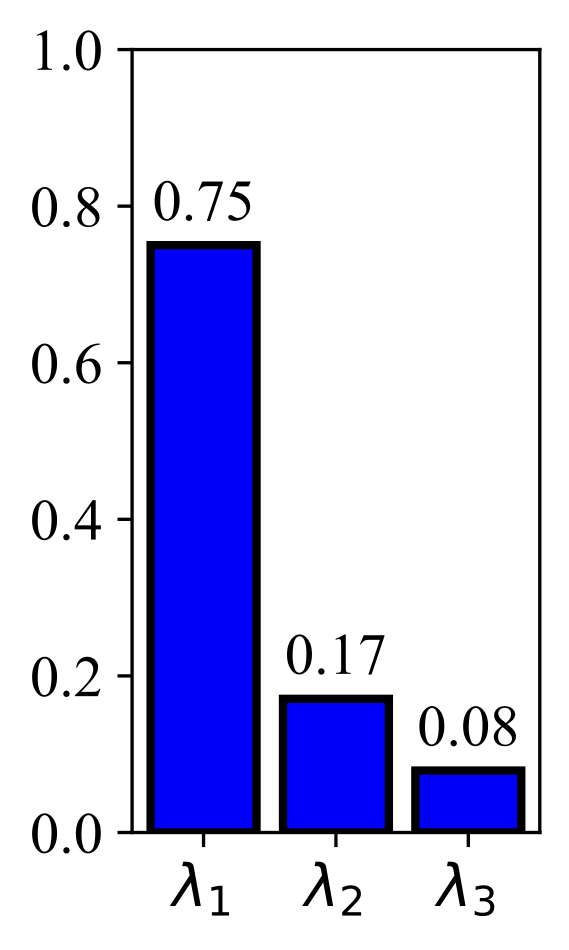}\\
        \label{fig:K_combined_lambda}
    \end{subfigure}
    \hfill
    \begin{subfigure}[b]{0.2\textwidth}
        \centering
        \caption{$K(\lambda)$: ensemble}
        \includegraphics[width=\textwidth]{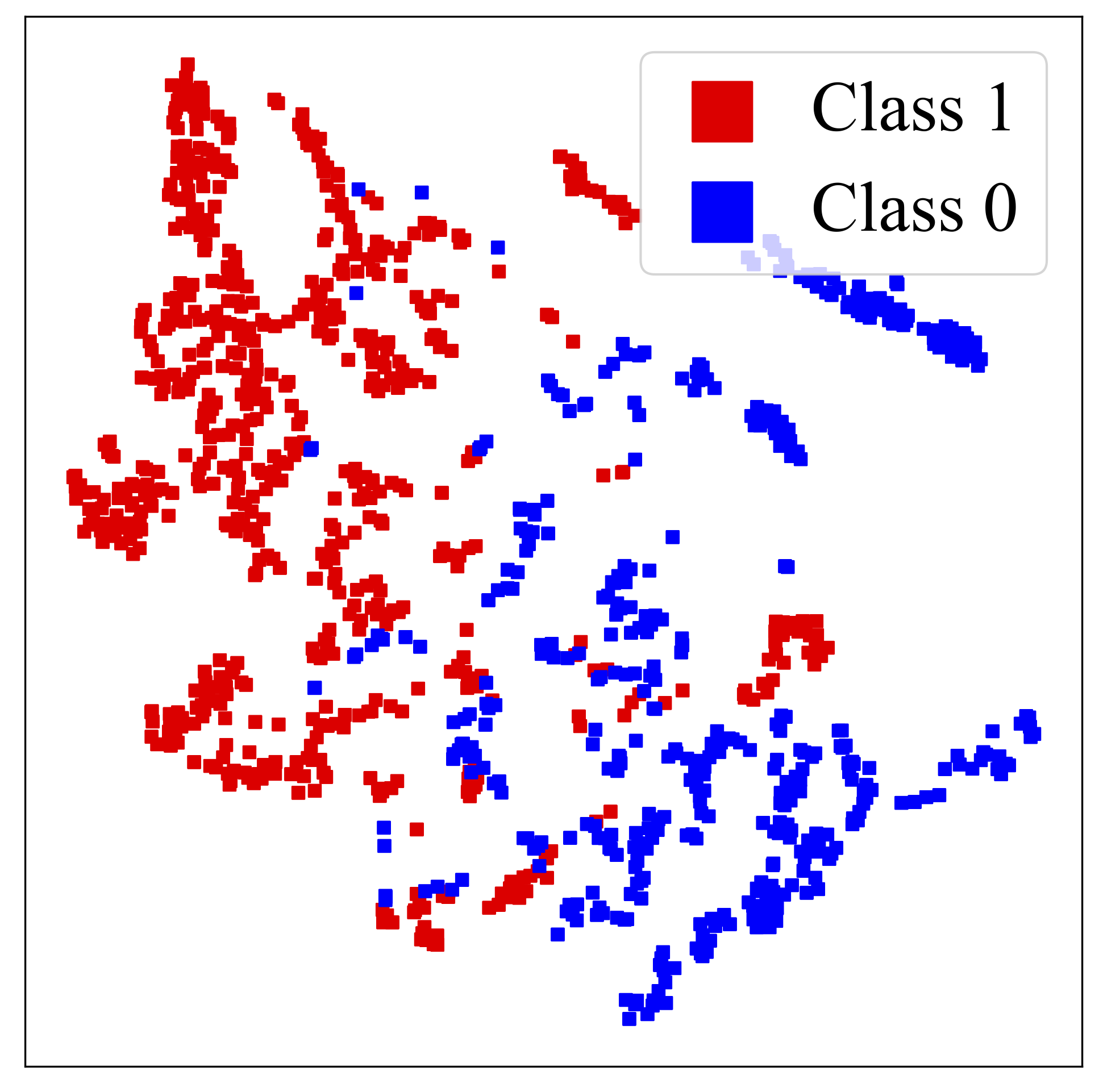}\\
        \footnotesize{$\sum_{m = 1}^M \lambda_m ~ K_{\mathcal{P}_m}$}
        \label{fig:K_combined}
    \end{subfigure}
    \hfill
    \begin{subfigure}[b]{0.2\textwidth}
        \centering
        \caption{$K_{\mathcal{P}_1}$: path}
        \includegraphics[width=\textwidth]{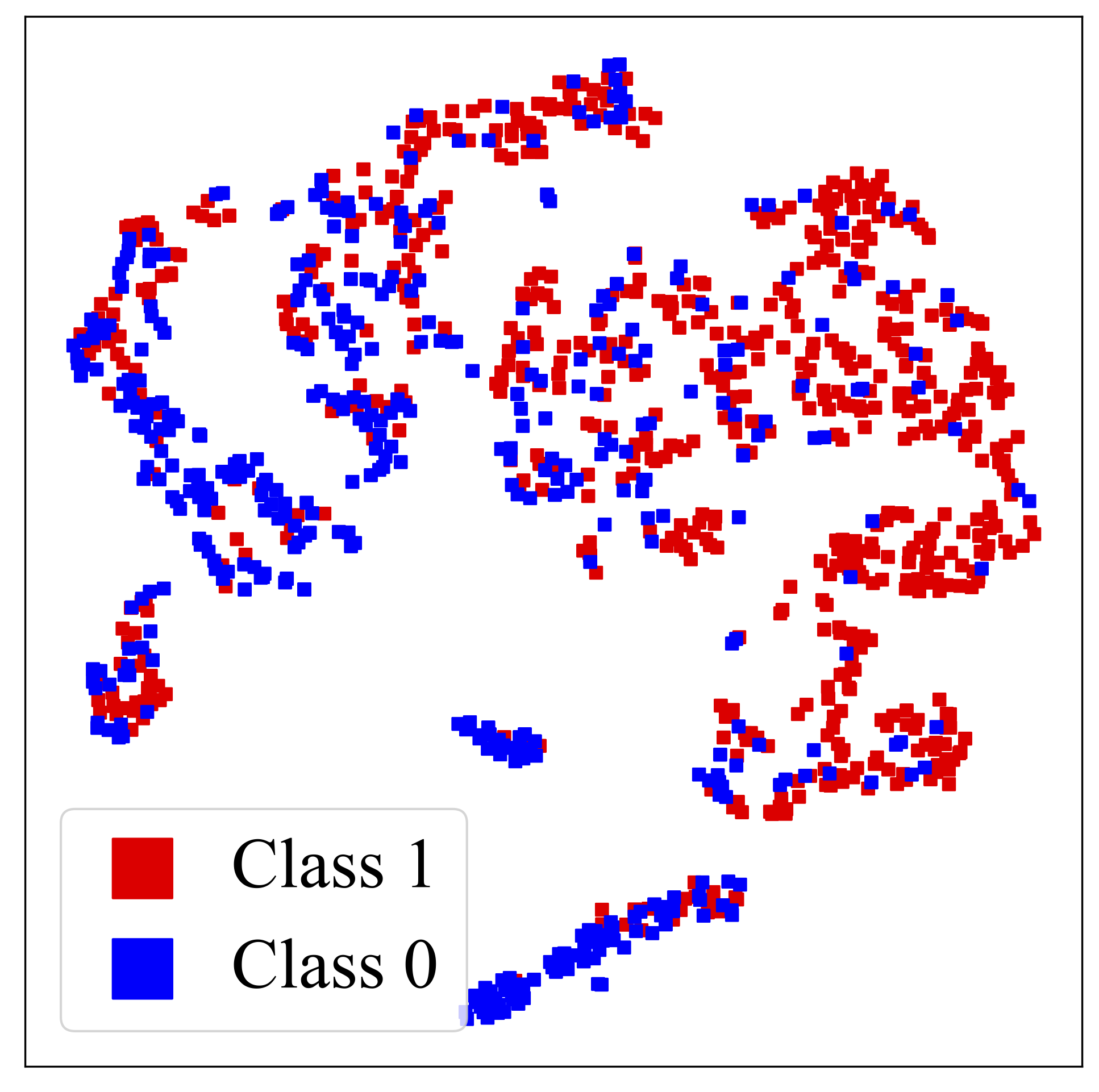}\\
        \footnotesize{$(\lambda_1 = 0.7502)$}
        \label{fig:K_rw}
    \end{subfigure}
    \hfill
    \begin{subfigure}[b]{0.2\textwidth}
        \centering
        \caption{$K_{\mathcal{P}_2}$: tree}
        \includegraphics[width=\textwidth]{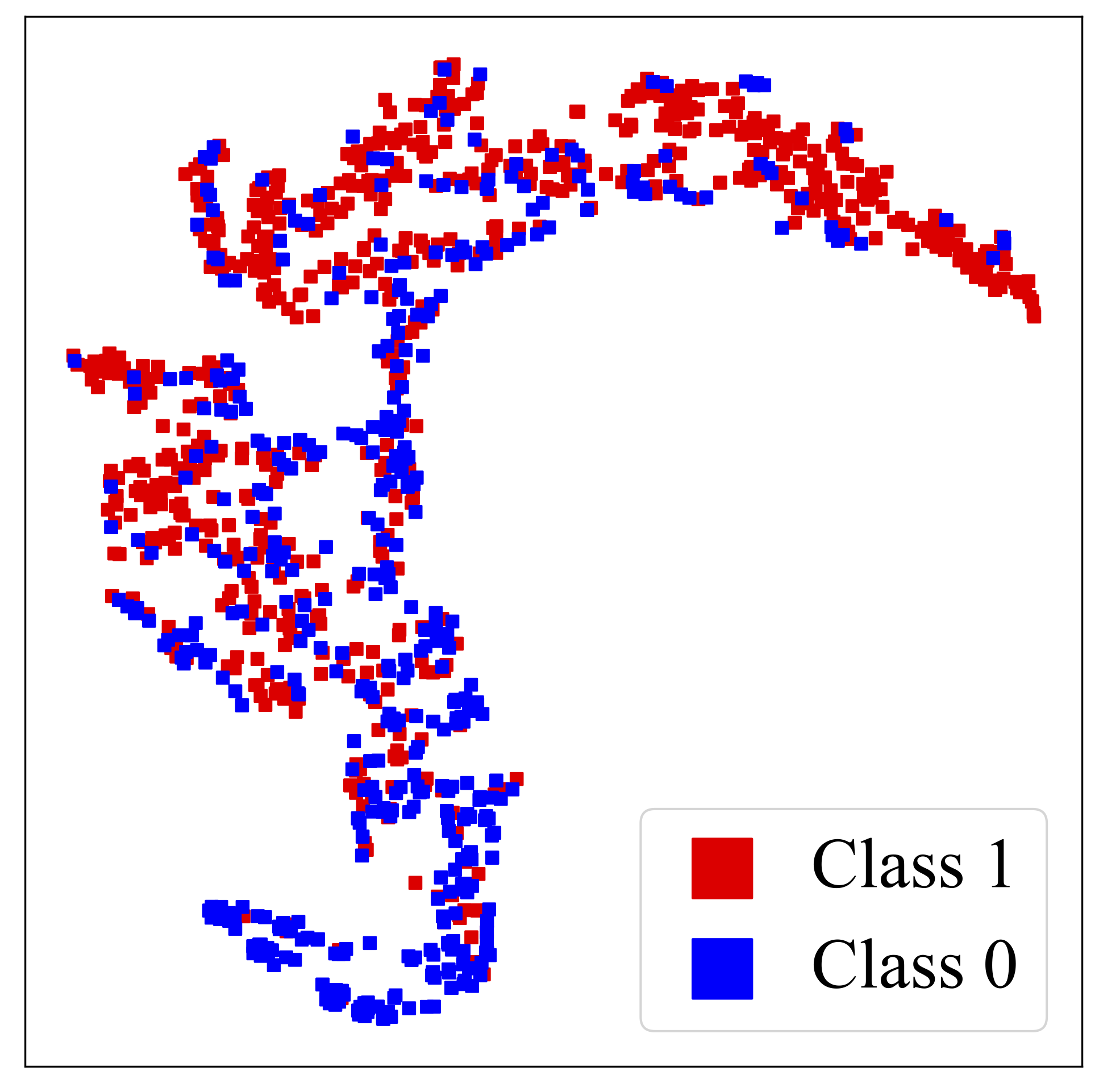}
        \footnotesize{$(\lambda_2 = 0.1707)$}
        \label{fig:K_subtree}
    \end{subfigure}
    \hfill
    \begin{subfigure}[b]{0.2\textwidth}
        \centering
        \caption{$K_{\mathcal{P}_3}$: graphlet}
        \includegraphics[width=\textwidth]{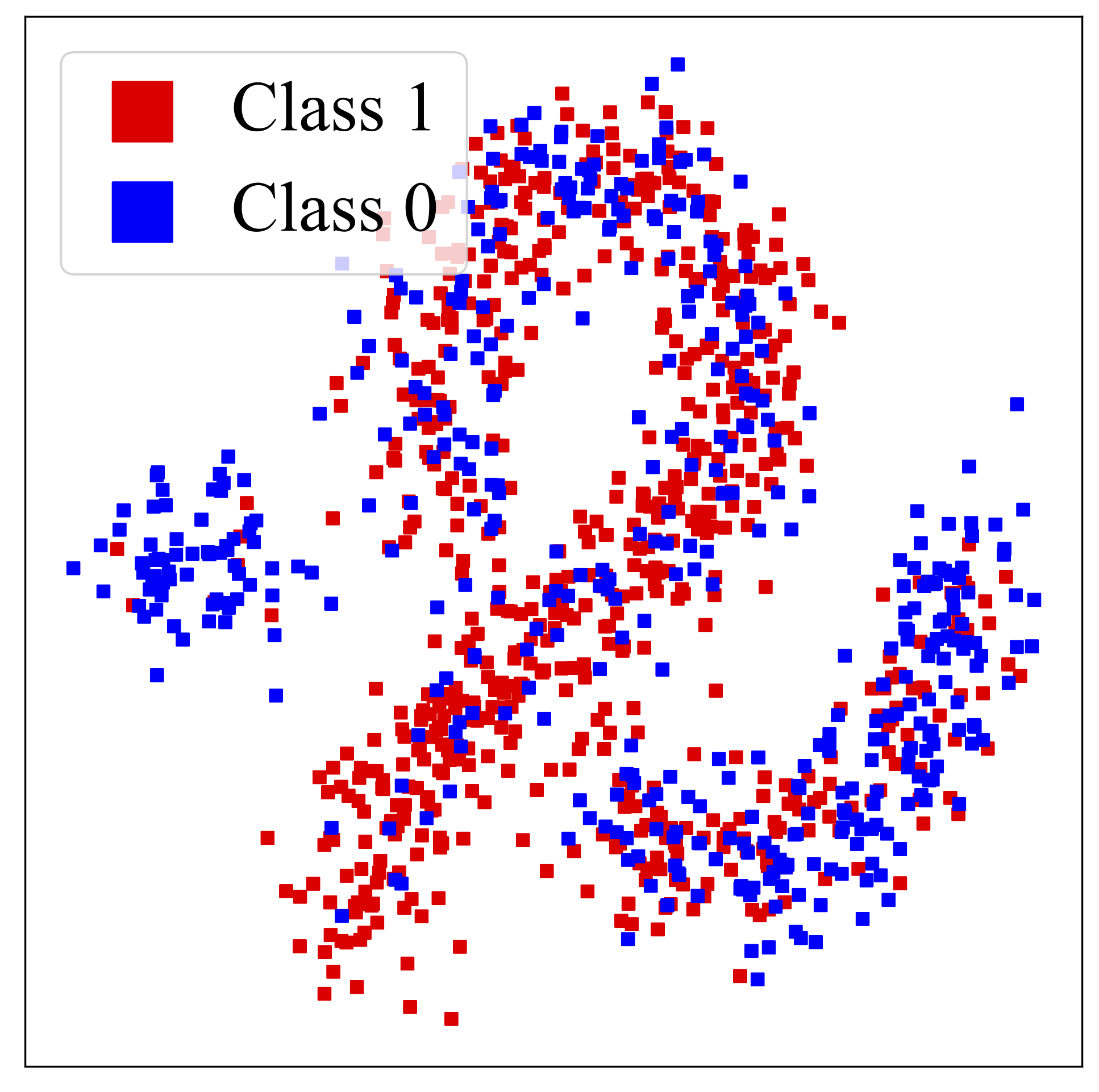}\\
        \footnotesize{$(\lambda_3 = 0.07912)$}
        \label{fig:K_graphlet}
    \end{subfigure}
    \caption{t-SNE visualizations of PXGL-EGK's different kernel embeddings for the dataset PROTEINS.}
    \label{fig:ker-pattern-protein}
\end{figure*}

\subsection{Limitations of Pattern Counting Vector}\label{sec_limit}
The pattern counting vector $\bm{h}$ from Definition \ref{def:pc-vec} is easy to understand and its importance can be evaluated using the weight parameter $\bm{\lambda}^*$ from \eqref{eqn:solve-ker-lambda}. However, it cannot directly explain the representation of graph $G$ due to the following limitations:
\begin{itemize}
    \item \textbf{Ignoring Node Features:} $\bm{h}$ captures the topology of $G$ but ignores node features $\bm{X}$. As shown by previous GNN works, node features are crucial for learning graph representations.
    \item \textbf{High Dimensionality:} The pattern set $\mathcal{P} = \{P_1, P_2, \ldots, P_i, \ldots\}$ can be vast, making $\bm{h}$ high-dimensional and impractical for many tasks. 
    \item \textbf{Time Complexity:} Counting patterns $P_i$ in $G$ is time-consuming due to the large number of patterns in $\mathcal{P}$. The function $\phi(G; \mathcal{P})$ needs to be run for each new graph.
    \item \textbf{Lacking Implicit Information and Strong Expressiveness:} $\bm{h}$ is fixed and not learnable. GNN \cite{kipf2016semi} shows that message passing can learn implicit information and provide better representations, which should be considered if possible.
\end{itemize}

\begin{figure*}[t] 
    \centering
    \includegraphics[width = 0.95\linewidth]{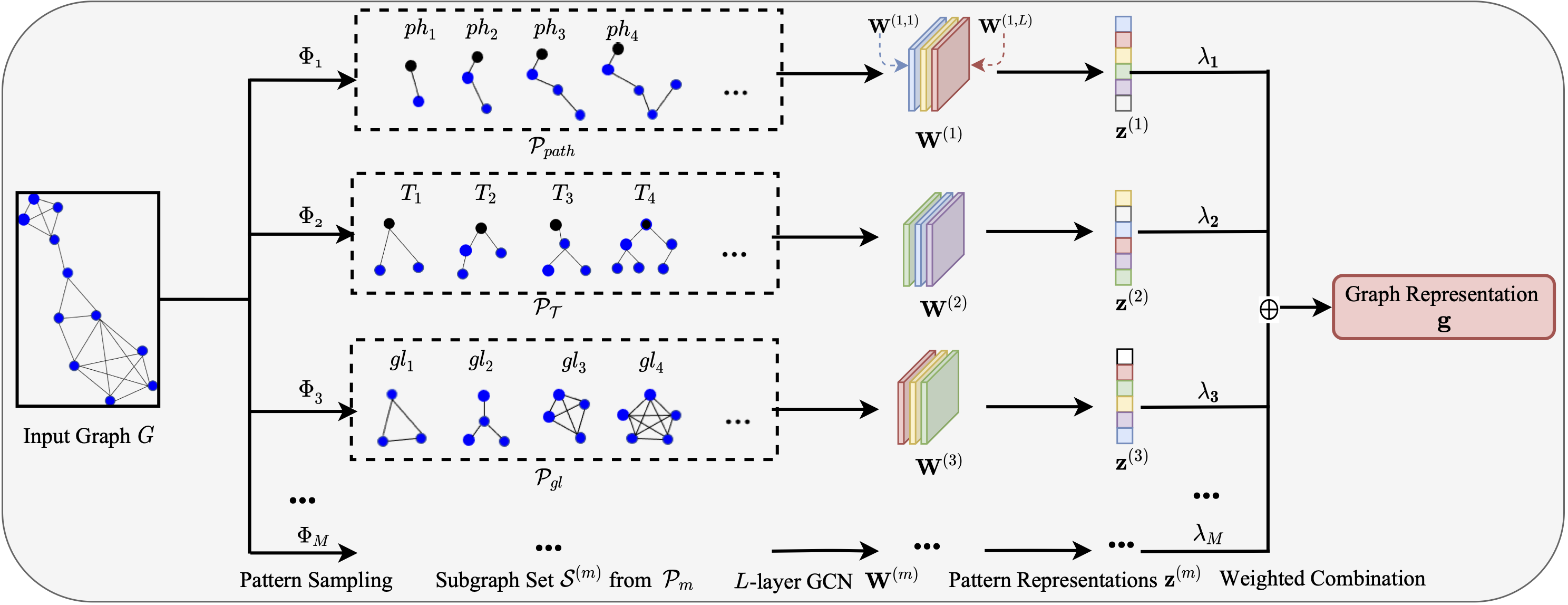}
    \caption{Framework of our proposed Pattern-based Explainable Graph Representation Learning with GNNs (\textbf{PXGL-GNN})}
    \label{fig:graphpatternframework}
\end{figure*}

\section{Learning Explainable Graph Representations via GNNs}\label{sec:PXGL-GNN}
In this section, we address the limitations pointed out in Section \ref{sec_limit} by proposing a GNN framework to learn and explain graph representations via pattern analysis. First, we sample graph substructures of various patterns from graph $G$. 
Given that overlaps may occur between patterns, we use the WL-test \cite{huang2021short} in each sampling phase to ensure that new samples are unique from existing ones.
The pattern sampling set $\mathcal{S}$ is defined as follows.
\begin{definition}[Pattern Sampling Set]\label{def:ps-set}
Let $S$ be a subgraph sampled from graph $G$. Given a graph pattern $\mathcal{P}$, the pattern sampling set $\mathcal{S}$ with $Q$ subgraphs of $G$ is defined as 
\begin{equation}\label{eqn:ps-set}
    \mathcal{S} := \{S_1, S_2, \ldots, S_q, \ldots, S_Q\},~~\text{where}~~S_q \in \mathcal{P}, ~ \forall~ q \in [Q]. 
\end{equation}
\end{definition}
Then, the pattern representation $\bm{z}$ is learned from the pattern sampling set as follows.
\begin{definition}[Pattern Representation]\label{def:pr-vec}
Given a graph $G$ and a pattern $\mathcal{P}$, we can obtain a pattern sampling set $\mathcal{S}$ using a sampling function $\Phi$. For each subgraph $S$ in the set $\mathcal{S}$, its adjacency matrix is $\bm{A}_S$ and its node feature matrix is $\bm{X}_S$. Let $F: \{0, 1\}^{|V_{S}|\times |V_{S}|} \times \mathbb{R}^{|V_{S}|\times d} \rightarrow \mathbb{R}^d$ be a pattern representation learning function with parameter $\mathcal{W}$, then the pattern representation $\bm{z} \in \mathbb{R}^d$ related to $G$ and $\mathcal{P}$ is defined as
\begin{equation}\label{eqn:pr-vec} 
    \bm{z} = \frac{1}{|\mathcal{S}|} \sum_{S \in \mathcal{S}} F(\bm{A}_S, \bm{X}_S; \mathcal{W}).
\end{equation}
\end{definition}
Finally, the ensemble representation $\bm{g}$ is the weighted sum of the $M$ pattern representations as follows.
\begin{definition}[Ensemble Representation]\label{def:ensemble-vec}
Given a set of patterns $\{ \mathcal{P}_1, \mathcal{P}_2, \ldots, \mathcal{P}_m, \ldots, \mathcal{P}_M\}$ and a graph $G$, the pattern sampling set $\mathcal{S}^{(m)}$ and the pattern representation $\bm{z}^{(m)}$ are related to the $m$-th pattern $\mathcal{P}_m$. The representation learning function $F(\cdot, \cdot; \mathcal{W}^{(m)})$ is used to learn $\bm{z}^{(m)}$ from $\mathcal{S}^{(m)}$. Let $\bm{\lambda} = [\lambda_1, \lambda_2, \ldots, \lambda_m, \ldots, \lambda_M]^\top$ be a positive weight parameter vector, then the ensemble representation $\bm{g} \in \mathbb{R}^d$ of graph $G$ is defined as
\begin{equation}\label{eqn:ensemble-vec} 
\begin{aligned}
    &\bm{g} = \sum_{m = 1}^M \lambda_m \bm{z}^{(m)}, ~~\text{with}~~ \\
    &\bm{z}^{(m)} = \frac{1}{|\mathcal{S}^{(m)}|} \sum_{S \in \mathcal{S}^{(m)}} F(\bm{A}_S, \bm{X}_S; \mathcal{W}^{(m)}), ~~\forall~ m \in [M]. 
\end{aligned}
\end{equation}
The weight parameter vector $\bm{\lambda}$ is constrained by $\bm{1}_M^\top \bm{\lambda} = 1$ and $\bm{\lambda} \ge 0$, and we can use the same softmax trick in computing the ensemble kernel \eqref{eqn:ensemble-ker} to remove this constraint. 
\end{definition}

Let $\mathbb{W} := \{\mathcal{W}^{(1)}, \mathcal{W}^{(2)}, \ldots, \mathcal{W}^{(m)}, \ldots, \mathcal{W}^{(M)} \}$ denote the trainable parameters of the GNN framework. To obtain the GNN parameters and the weight parameter $\bm{\lambda}$ in ensemble representation learning \eqref{eqn:ensemble-vec}, we define the supervised loss and unsupervised loss functions as follows.
\paragraph{Supervised Classification Loss.} Given a graph $G$, let $\bm{y} = [y_1, y_2, \ldots, y_c, \ldots, y_C]^\top \in \{0, 1\}^C$ be the ground truth label and  $\bm{\hat{y}} = [\hat{y}_1, \hat{y}_2, \ldots, \hat{y}_c, \ldots, \hat{y}_C]^\top \in \mathbb{R}^C$ be the predicted label. Let $f_c: \mathbb{R}^d \rightarrow \mathbb{R}^C$ be a classifier with softmax, i.e., $\bm{\hat{y}} = f_c(\bm{g})$, where the parameter is $\bm{W}_C$. Then the multi-class cross-entropy loss is defined as:  
\begin{equation}\label{eqn:loss-class}
    \begin{aligned}
            & \mathcal{L}_{\text{CE}}(\bm{\lambda}, \mathbb{W})=\frac{1}{|\mathcal{G}|}\sum_{G\in\mathcal{G}}\ell_{\text{CE}}(\bm{\lambda}, \mathbb{W};G) \\
            & =  - \frac{1}{|\mathcal{G}|}\sum_{G\in\mathcal{G}}\sum_{c = 1}^C y_c \log \hat{y}_c,  ~~\text{with}~~ \bm{\hat{y}} = f_c(\bm{g}).
    \end{aligned}
\end{equation}
\paragraph{Unsupervised KL Divergence.} We use the same KL divergence defined in Eq. \eqref{eqn:loss-kl-ker} as follows:
\begin{equation}\label{eqn:loss-kl-gau}
\begin{aligned}
        &\mathcal{L}_{\text{KL}}(\bm{K}(\bm{\lambda}, \mathbb{W})) = \mathbb{KL} (\bm{K}(\bm{\lambda}, \mathbb{W}), \bm{K}'(\bm{\lambda}, \mathbb{W})), \\
    &\text{with}~~K_{ij}(\bm{\lambda}, \mathbb{W}) = \exp\left(-\frac{\|\bm{g}_i - \bm{g}_j\|^2}{\gamma}\right),
\end{aligned}
\end{equation}
where $\mathcal{L}_{\text{KL}}(\bm{K}(\bm{\lambda}, \mathbb{W}))$ is a Gaussian kernel matrix of graph representations and  $\gamma$ is a positive parameter.

Finally, let $\mathcal{L}(\bm{\lambda}, \mathbb{W})$ be the supervised or unsupervised loss function when the graphs are labeled or unlabeled. The GNN parameters $\mathcal{W}$ and the weight parameters $\bm{\lambda}$ can be computed by solving
\begin{equation}
    \label{eqn:solve-ensemble-lambda}
    \begin{aligned}
    \bm{\lambda}^*, \mathbb{W}^* = \mathop{\textup{argmin}}{_{\mathbb{W}, \bm{1}_M^\top \bm{\lambda} = 1,~ \bm{\lambda} \ge 0}} ~~~\mathcal{L}(\bm{\lambda}, \mathbb{W}),
    \end{aligned}
\end{equation}
where $\bm{\lambda}^* = [\lambda_1^*, \ldots, \lambda_m^*, \ldots, \lambda_M^*]^\top$ and $\lambda_m^*$ indicates the contribution of the pattern representation $\bm{z}^{(m)}$ to the ensemble graph representation $\bm{g}$. For convenience, we call this method pattern-based XGL with GNNs, abbreviated as \textbf{PXGL-GNN}.

\section{Theoretical Analysis}
In this section, we provide a theoretical analysis of our method, focusing on robustness, generality, and complexity. We provide all detailed proof in the supplementary materials.

\subsection{Robustness Analysis}
Following \cite{o2021evaluation}, a learning method should be robust to small perturbations. Let $\Delta_A$ and $\Delta_X$ be perturbations on the adjacency matrix and node attributes. The perturbed graph is $\tilde{G} = (\bm{A} + \Delta_A, \bm{X} + \Delta_X)$, of which the representation is denoted as $\bm{\tilde{g}}$. We seek the upper bound of $\|\bm{\tilde{g}} - \bm{g}\|$.
Assume the representation learning function $F$ is an $L$-layer GCN \cite{kipf2016semi} with activation function $\sigma(\cdot)$ and average pooling as the output. For pattern $\mathcal{P}_m$, $F(\bm{A}, \bm{X}; \mathcal{W}^{(m)})$ has parameters $\mathcal{W}^{(m)} = \{\bm{W}^{(m, 1)}, \ldots, \bm{W}^{(m, L)}\}$, where $\bm{W}^{(m, L)}$ is the parameter in the $l$-th layer.

\begin{theorem}
Let $\bm{\tilde{A}} = \bm{A} + \Delta_A$ and $\bm{\tilde{X}} = \bm{X} + \Delta_X$. Suppose $\|\bm{A}\|_2 \leq \beta_A$, $\|\bm{X}\|_F \leq \beta_X$, $\|\bm{W}^{(m, l)}\|_2 \leq \beta_W$ for all $m \in [M]$ and $l \in [L]$, and $\sigma(\cdot)$ is $\rho$-Lipschitz continuous. Let $\alpha$ be the minimum node degree, and $\Delta_D := \bm{I} - \text{diag}(\bm{1}^\top (\bm{I} + \bm{A} + \Delta_A))^{\frac{1}{2}} \text{diag}(\bm{1}^\top \bm{A})^{-\frac{1}{2}}$. Then, 
\begin{equation*}
\begin{aligned}
        \|&\bm{\tilde{g}} - \bm{g}\|  \leq  \frac{1}{\sqrt{n}}  \rho^L \beta_W^L     (1 + \beta_A + \|\Delta_A \|_2)^{L-1} (1 + \alpha)^{-L} \\
        &\cdot\left[ (1 + \beta_A + 2 \|\Delta_A \|_2)\|\Delta_{X} \|_F + 2L\beta_X(1 + \beta_A) \|\Delta_D \|_2  \right]
\end{aligned}
\end{equation*}
\end{theorem}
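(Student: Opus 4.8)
The plan is to reduce the ensemble bound to a stability estimate for a single $L$-layer GCN and then propagate perturbations layer by layer. First I would use convexity: since $\bm{g}=\sum_{m=1}^M\lambda_m\bm{z}^{(m)}$ with $\lambda_m\ge 0$ and $\sum_m\lambda_m=1$, we get $\|\bm{\tilde g}-\bm{g}\|\le\sum_m\lambda_m\|\bm{\tilde z}^{(m)}-\bm{z}^{(m)}\|\le\max_m\|\bm{\tilde z}^{(m)}-\bm{z}^{(m)}\|$, and each $\bm{\tilde z}^{(m)}-\bm{z}^{(m)}$ is an average over the sampled substructures of the per-subgraph differences $F(\bm{\tilde A}_S,\bm{\tilde X}_S;\mathcal{W}^{(m)})-F(\bm{A}_S,\bm{X}_S;\mathcal{W}^{(m)})$, where every sampled $\bm{A}_S,\bm{X}_S$ inherits the spectral/Frobenius bounds of the full graph. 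So it suffices to bound $\|F(\bm{\tilde A},\bm{\tilde X};\mathcal{W})-F(\bm{A},\bm{X};\mathcal{W})\|$ for one generic GCN under the stated constants, and since the output is average pooling, $\|\bm{\tilde z}-\bm{z}\|\le\frac{1}{\sqrt n}\|\bm{\tilde H}^{(L)}-\bm{H}^{(L)}\|_F$, which is the source of the $1/\sqrt n$ factor.

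Next I would set up the layer recursion. Writing $\hat{\bm{A}}$, $\hat{\bm{\tilde A}}$ for the symmetrically normalized (self-loop augmented) adjacencies, $\bm{H}^{(l)}=\sigma(\hat{\bm{A}}\bm{H}^{(l-1)}\bm{W}^{(l)})$, $\bm{E}^{(l)}=\bm{\tilde H}^{(l)}-\bm{H}^{(l)}$ with $\bm{E}^{(0)}=\Delta_X$, I would split
\[\hat{\bm{\tilde A}}\bm{\tilde H}^{(l-1)}\bm{W}^{(l)}-\hat{\bm{A}}\bm{H}^{(l-1)}\bm{W}^{(l)}=\hat{\bm{\tilde A}}\bm{E}^{(l-1)}\bm{W}^{(l)}+(\hat{\bm{\tilde A}}-\hat{\bm{A}})\bm{H}^{(l-1)}\bm{W}^{(l)},\]
and apply $\rho$-Lipschitzness of $\sigma$, sub-multiplicativity and $\|\bm{W}^{(m,l)}\|_2\le\beta_W$ to obtain $\|\bm{E}^{(l)}\|_F\le\rho\beta_W\big(\|\hat{\bm{\tilde A}}\|_2\|\bm{E}^{(l-1)}\|_F+\|\hat{\bm{\tilde A}}-\hat{\bm{A}}\|_2\|\bm{H}^{(l-1)}\|_F\big)$. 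In parallel, assuming $\sigma(\bm{0})=\bm{0}$ as for ReLU, $\|\bm{H}^{(l)}\|_F\le(\rho\beta_W\|\hat{\bm{A}}\|_2)^l\beta_X$; and from the minimum-degree assumption the self-loop degree is $\ge 1+\alpha$, so $\|\hat{\bm{A}}\|_2\le(1+\beta_A)(1+\alpha)^{-1}$ and $\|\hat{\bm{\tilde A}}\|_2\le(1+\beta_A+\|\Delta_A\|_2)(1+\alpha)^{-1}$ by bounding the numerators $\|\bm{I}+\bm{A}\|_2\le 1+\beta_A$, $\|\bm{I}+\bm{A}+\Delta_A\|_2\le 1+\beta_A+\|\Delta_A\|_2$, and each diagonal scaling by $(1+\alpha)^{-1/2}$.

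The hard part will be bounding $\|\hat{\bm{\tilde A}}-\hat{\bm{A}}\|_2$, because the symmetric normalization is nonlinear in $\bm{A}$: both the numerator $\bm{I}+\bm{A}$ and the degree matrix change. I would decompose
\[\hat{\bm{\tilde A}}-\hat{\bm{A}}=\bm{D}_{\tilde A}^{-1/2}\Delta_A\bm{D}_{\tilde A}^{-1/2}+\big(\bm{D}_{\tilde A}^{-1/2}(\bm{I}+\bm{A})\bm{D}_{\tilde A}^{-1/2}-\bm{D}_{A}^{-1/2}(\bm{I}+\bm{A})\bm{D}_{A}^{-1/2}\big),\]
with $\bm{D}_A,\bm{D}_{\tilde A}$ the self-loop degree matrices. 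The first term is $\le(1+\alpha)^{-1}\|\Delta_A\|_2$. For the second, I would factor out $\bm{D}_A^{\pm 1/2}$ and express the residual diagonal discrepancy through the matrix $\Delta_D=\bm{I}-\mathrm{diag}(\bm{1}^\top(\bm{I}+\bm{A}+\Delta_A))^{1/2}\mathrm{diag}(\bm{1}^\top\bm{A})^{-1/2}$ from the statement, telescoping into $\hat{\bm{A}}$ sandwiched between $(\bm{I}-\Delta_D)$-type factors, to get a term of order $(1+\beta_A)(1+\alpha)^{-1}\|\Delta_D\|_2$. Choosing this factorization so the entire degree change is absorbed cleanly into $\|\Delta_D\|_2$ is the main technical obstacle; everything after is bookkeeping.

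Finally I would unroll the recursion: iterating from $l=1$ to $L$ gives $\|\bm{E}^{(L)}\|_F\le(\rho\beta_W\|\hat{\bm{\tilde A}}\|_2)^L\|\Delta_X\|_F+\sum_{k=1}^L(\rho\beta_W)^{L-k+1}\|\hat{\bm{\tilde A}}\|_2^{L-k}\|\hat{\bm{\tilde A}}-\hat{\bm{A}}\|_2\|\bm{H}^{(k-1)}\|_F$. Then substitute the bounds on $\|\hat{\bm{\tilde A}}\|_2$, $\|\bm{H}^{(k-1)}\|_F$, and the Paragraph-3 bound on $\|\hat{\bm{\tilde A}}-\hat{\bm{A}}\|_2$, crudely bound the geometric sum by $L$ times its largest term (this yields the factor $L$ on the $\|\Delta_D\|_2$ term), use $(1+\beta_A+\|\Delta_A\|_2)\le(1+\beta_A+2\|\Delta_A\|_2)$ to peel off one power, multiply by $1/\sqrt n$ from the pooling step, and collect terms to match the stated bound. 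The extension from the single-GCN case back to the ensemble $\bm{g}$ is then immediate from the convexity step in Paragraph 1.
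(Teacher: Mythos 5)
Your proposal is correct and follows essentially the same route as the paper's proof: convexity over the ensemble weights and averaging over sampled subgraphs (with submatrix norm inheritance), the layer-wise error recursion from the splitting $\hat{\bm{\tilde A}}\bm{E}^{(l-1)}\bm{W}+(\hat{\bm{\tilde A}}-\hat{\bm{A}})\bm{H}^{(l-1)}\bm{W}$, the same decomposition of $\hat{\bm{\tilde A}}-\hat{\bm{A}}$ into a $\|\Delta_A\|_2$ term plus a degree-renormalization term controlled via $\Delta_D$, and the same unrolling with the geometric sum bounded by $L$ times its largest term. The only differences are cosmetic (you reduce to a single generic GCN before invoking the subgraph/ensemble structure, whereas the paper proves the whole-graph lemma first, then transfers it to subgraphs via eigenvalue interlacing, then assembles the ensemble), so no substantive gap.
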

The bound reveals that the method is sensitive to the perturbation on the graph structure, i.e., $\bm{A}$, when $L$ is large. It is relatively insensitive to the perturbation on $\bm{X}$. On the other hand,  when $\alpha$, the minimum node degree, is larger, the method is more robust.

\subsection{Generalization Analysis}
Following \cite{bousquet2002stability,feldman2019high}, we use uniform stability to derive the generalization bound for our model. Let $\bm{\lambda}$ and $\mathbb{W}$ be known parameters. The supervised loss $\ell_{\text{CE}}$ in Eq.\eqref{eqn:loss-class} is guaranteed with a uniform stability parameter $\eta$. The empirical risk $\mathcal{E}[\ell_{\text{CE}}(\bm{\lambda}, \mathbb{W};\mathcal{G})] := \frac{1}{N}\sum_{i=1}^N\ell_{\text{CE}}(\bm{\lambda}, \mathbb{W};G_i)$ and true risk $\mathbb{E}[\ell_{\text{CE}}(\bm{\lambda}, \mathbb{W};G)]$ have a high-probability generalization bound: for constant $c$ and $\delta \in (0, 1)$,
\begin{equation}\label{eqn:estimate-error}
\begin{aligned}
        \textbf{Pr} \Bigg[& |\mathbb{E}[\ell_{\text{CE}}(\bm{\lambda}, \mathbb{W};G) -  \mathcal{E}[\ell_{\text{CE}}(\bm{\lambda}, \mathbb{W};\mathcal{G})]| \geq \\
        &c \left(\eta \log(N) \log\left(\frac{N}{\delta}\right) + \sqrt{\frac{\log(1/\delta)}{N}} \right) \Bigg] \leq \delta.
\end{aligned}
\end{equation}
Let $\mathcal{D} := \{G_1, \ldots, G_{N}\}$ be the training data. By removing the $i$-th graph $G_i$, we get $\mathcal{D}^{\backslash i} = \{G_1, \ldots, G_{i-1}, G_{i+1}, \ldots, G_{N}\}$. Let $\bm{\lambda}_{\mathcal{D}}$ and $\mathcal{W}_{\mathcal{D}} := \{\bm{W}_C, \bm{W}_{\mathcal{D}}^{(m, l)}, \forall m \in [M], l \in [L]\}$ be the parameters trained on $\mathcal{D}$. Let $\bm{\lambda}_{\mathcal{D}^{\backslash i}}$ and $\mathcal{W}_{\mathcal{D}^{\backslash i}} := \{\bm{W}_{C^{\backslash i}}, \bm{W}_{\mathcal{D}^{\backslash i}}^{(m, l)}, \forall m \in [M], l \in [L]\}$ be the parameters trained on $\mathcal{D}^{\backslash i}$. We aim to find $\eta$ such that
\begin{equation}\label{eqn:uniform-stab}
     |\ell_{\text{CE}}(\bm{\lambda}_{\mathcal{D}}, \mathcal{W}_{\mathcal{D}}; G) - \ell_{\text{CE}}(\bm{\lambda}_{\mathcal{D}^{\backslash i}}, \mathcal{W}_{\mathcal{D}^{\backslash i}}; G)| \leq \eta
\end{equation}

\begin{theorem}
Suppose 
$$\max_{_{m\in[M],\,l\in[L]}}\Bigl\{ \|\bm{W}^{(m, l)}_{\mathcal{D}}\|_2,\quad \|\bm{W}^{(m, l)}_{\mathcal{D}^{\backslash i}}\|_2\Bigr\}\leq \hat{\beta}_W$$
$$\max_{m \in [M], l \in [L]} \|\bm{W}^{(m, l)}_{\mathcal{D}} -  \bm{W}^{(m, l)}_{\mathcal{D}^{\backslash i}}\|_2\leq \hat{\beta}_{\Delta W}$$
$$\| \bm{W}_C - \bm{W}_{C^{\backslash i}} \|_2\leq \gamma_{\Delta C},\quad \|\bm{W}_{C^{\backslash i}} \|_2\leq \gamma_C$$
Suppose the $f_c$ in $\ell_{\text{CE}}$ \eqref{eqn:loss-class} is a linear classifier, which is $\tau$-Lipschitz continuous. Suppose 
Thus the $\eta$ for estimation error \eqref{eqn:estimate-error} and uniform stability \eqref{eqn:uniform-stab} is:
\begin{equation}\label{eta}
\begin{aligned}
    \eta = &\frac{\tau}{\sqrt{n}} \rho^L \hat{\beta}_W^{L-1} \beta_X (1 + \beta_A)^L (1 + \alpha)^{-L} \\
    &\left[ \hat{\beta}_W \gamma_{\Delta C} + \gamma_{C} \left(2\hat{\beta}_W + L \hat{\beta}_{\Delta W}\right) \right]
\end{aligned}
\end{equation}
\end{theorem}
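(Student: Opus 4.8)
The plan is to bound the left side of \eqref{eqn:uniform-stab} by separating the perturbation of $\ell_{\text{CE}}$ into a part caused by the classifier weights and a part caused by the GNN-produced representation, and then to control each part using (i) the $\tau$-Lipschitzness of the loss in the classifier's linear output, (ii) an a priori norm bound on the ensemble representation $\bm{g}$, and (iii) a stability bound on $\bm{g}$ itself. For a graph $G$, write $\bm{g}_{\mathcal{D}}$ and $\bm{g}_{\mathcal{D}^{\backslash i}}$ for its ensemble representations under the parameters trained on $\mathcal{D}$ and on $\mathcal{D}^{\backslash i}$. Inserting the mixed quantity $\ell_{\text{CE}}$ evaluated at classifier $\bm{W}_{C^{\backslash i}}$ but representation $\bm{g}_{\mathcal{D}}$ and applying the triangle inequality gives
\[
|\ell_{\text{CE}}(\bm{\lambda}_{\mathcal{D}},\mathcal{W}_{\mathcal{D}};G)-\ell_{\text{CE}}(\bm{\lambda}_{\mathcal{D}^{\backslash i}},\mathcal{W}_{\mathcal{D}^{\backslash i}};G)|\le A+B,
\]
where $A\le \tau\,\gamma_{\Delta C}\,\|\bm{g}_{\mathcal{D}}\|$ handles replacing $\bm{W}_C$ (using $\|\bm{W}_C-\bm{W}_{C^{\backslash i}}\|_2\le\gamma_{\Delta C}$) and $B\le \tau\,\gamma_{C}\,\|\bm{g}_{\mathcal{D}}-\bm{g}_{\mathcal{D}^{\backslash i}}\|$ handles replacing the representation (using $\|\bm{W}_{C^{\backslash i}}\|_2\le\gamma_C$). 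It then remains to bound $\|\bm{g}_{\mathcal{D}}\|$ and $\|\bm{g}_{\mathcal{D}}-\bm{g}_{\mathcal{D}^{\backslash i}}\|$.

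For the norm bound I would analyze a single GCN branch $F(\bm{A}_S,\bm{X}_S;\mathcal{W}^{(m)})$ layer by layer. Writing the normalized propagation matrix as $\hat{\bm{A}}=\tilde{\bm{D}}^{-1/2}(\bm{I}+\bm{A}_S)\tilde{\bm{D}}^{-1/2}$, we have $\|\bm{I}+\bm{A}_S\|_2\le 1+\beta_A$ and $\|\tilde{\bm{D}}^{-1/2}\|_2\le(1+\alpha)^{-1/2}$, hence $\|\hat{\bm{A}}\|_2\le(1+\beta_A)(1+\alpha)^{-1}$. Using submultiplicativity over the $L$ layers, $\rho$-Lipschitzness of $\sigma$ with $\sigma(0)=0$, $\|\bm{W}^{(m,l)}\|_2\le\hat{\beta}_W$, $\|\bm{X}_S\|_F\le\beta_X$, and the $\frac{1}{\sqrt n}$ factor from average pooling, each branch output has norm at most $\frac{1}{\sqrt n}\rho^L\hat{\beta}_W^{L}\beta_X(1+\beta_A)^L(1+\alpha)^{-L}$; averaging over $S\in\mathcal{S}^{(m)}$ and over the simplex weights $\bm{\lambda}$ preserves this, so $\|\bm{g}\|$ satisfies the same bound. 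Consequently $A\le \tau\gamma_{\Delta C}\hat{\beta}_W\cdot\frac{1}{\sqrt n}\rho^L\hat{\beta}_W^{L-1}\beta_X(1+\beta_A)^L(1+\alpha)^{-L}$, which is exactly the $\hat{\beta}_W\gamma_{\Delta C}$ term inside the bracket of \eqref{eta}.

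For the stability of $\bm{g}$, write $\bm{g}_{\mathcal{D}}-\bm{g}_{\mathcal{D}^{\backslash i}}=\sum_m\lambda_{m,\mathcal{D}}\big(\bm{z}^{(m)}_{\mathcal{D}}-\bm{z}^{(m)}_{\mathcal{D}^{\backslash i}}\big)+\sum_m\big(\lambda_{m,\mathcal{D}}-\lambda_{m,\mathcal{D}^{\backslash i}}\big)\bm{z}^{(m)}_{\mathcal{D}^{\backslash i}}$. The second sum is at most $\|\bm{\lambda}_{\mathcal{D}}-\bm{\lambda}_{\mathcal{D}^{\backslash i}}\|_1\max_m\|\bm{z}^{(m)}_{\mathcal{D}^{\backslash i}}\|\le 2\cdot\frac{1}{\sqrt n}\rho^L\hat{\beta}_W^{L}\beta_X(1+\beta_A)^L(1+\alpha)^{-L}$, since the probability simplex has $\ell_1$-diameter $2$; this gives the $2\hat{\beta}_W$ contribution. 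For the first sum I would bound $\|\bm{z}^{(m)}_{\mathcal{D}}-\bm{z}^{(m)}_{\mathcal{D}^{\backslash i}}\|$ by a hybrid argument that swaps one weight matrix $\bm{W}^{(m,l)}$ at a time: before layer $l$ the two networks coincide, replacing that layer's weight contributes a factor $\hat{\beta}_{\Delta W}$ in place of a $\hat{\beta}_W$, and the remaining $L-l$ layers are each $\rho\|\hat{\bm{A}}\|_2\hat{\beta}_W$-Lipschitz, so each of the $L$ hybrid terms is at most $\rho^L\|\hat{\bm{A}}\|_2^L\hat{\beta}_W^{L-1}\hat{\beta}_{\Delta W}\beta_X$; summing over $l$, averaging over $\mathcal{S}^{(m)}$ and $\bm{\lambda}$, and including the pooling factor gives the $L\hat{\beta}_{\Delta W}$ contribution. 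Altogether $\|\bm{g}_{\mathcal{D}}-\bm{g}_{\mathcal{D}^{\backslash i}}\|\le\frac{1}{\sqrt n}\rho^L\hat{\beta}_W^{L-1}\beta_X(1+\beta_A)^L(1+\alpha)^{-L}\big(2\hat{\beta}_W+L\hat{\beta}_{\Delta W}\big)$, so $B\le\tau\gamma_C$ times this; adding $A$ and factoring out $\frac{\tau}{\sqrt n}\rho^L\hat{\beta}_W^{L-1}\beta_X(1+\beta_A)^L(1+\alpha)^{-L}$ reproduces \eqref{eta}.

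The step I expect to be the main obstacle is the hybrid/telescoping estimate for the deep GCN difference: one has to expand $F(\bm{A}_S,\bm{X}_S;\mathcal{W}^{(m)}_{\mathcal{D}})-F(\bm{A}_S,\bm{X}_S;\mathcal{W}^{(m)}_{\mathcal{D}^{\backslash i}})$ into $L$ hybrid terms and check carefully that exactly $L-1$ layer weights contribute a $\hat{\beta}_W$ factor while a single one contributes $\hat{\beta}_{\Delta W}$, repeatedly invoking $\sigma(0)=0$ to keep the Lipschitz bound constant-free; this is the weight-perturbation counterpart of the computation behind the robustness theorem above. A secondary point that still needs care is the sampling bookkeeping: because $\mathcal{S}^{(m)}$ and $|\mathcal{S}^{(m)}|$ are generated by the sampling function applied to $G$ alone, they are identical under both trained models and cancel, so only the parameter gaps $\hat{\beta}_{\Delta W},\gamma_{\Delta C}$ and the simplex-diameter term survive in the final bound.
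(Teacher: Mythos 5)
Your proposal is correct and follows essentially the same route as the paper: the same insertion of the mixed term to split the loss perturbation into a classifier-weight part ($\tau\gamma_{\Delta C}\|\bm{g}\|$) and a representation part ($\tau\gamma_C\|\bm{g}-\bm{g}^{\backslash i}\|$), the same layer-wise norm bound $\|\bm{g}\|\le\frac{1}{\sqrt n}\rho^L\hat{\beta}_W^L\beta_X(1+\beta_A)^L(1+\alpha)^{-L}$, and the same split of $\bm{g}-\bm{g}^{\backslash i}$ into a $\bm{\lambda}$-difference term (simplex diameter $2$) plus a weight-perturbation term handled by the one-layer-at-a-time telescoping recursion yielding the factor $L\hat{\beta}_{\Delta W}$. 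The only differences are cosmetic (which factor carries the "old" versus "new" parameters in the two splits, and your Hölder-type bound via $\|\bm{\lambda}_{\mathcal{D}}-\bm{\lambda}_{\mathcal{D}^{\backslash i}}\|_1\max_m\|\bm{z}^{(m)}\|$ in place of the paper's bound through $\|\bm{Z}_{\mathcal{D}}\|_2$), and they lead to the identical final expression for $\eta$.
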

Invoking \eqref{eta} into \eqref{eqn:estimate-error}, we obtain the generalization error bound of our model. We see that when $\alpha$ is larger and $\beta_A,\beta_X$ are smaller, the generalization ability is stronger.

\subsection{Time and Space Complexity}

Given a dataset with $N$ graphs (each has $n$ nodes and $e$ edges), we select $M$ different patterns and sample $Q$ subgraphs of each pattern. First, our PXGL-EGK requires computing $M$ kernel matrices, of which the space complexity is $\mathcal{O}(MN^2)$, and the time complexity is related to those of different graph kernels. Assume $\psi_m$ is the time complexity of the $m$-th kernel, the total time complexity of PXGL-EGK is $\mathcal{O}(N^2\sum_{m=1}^M\psi_m)$. When $N$ is large, the method has high time and space complexities.

Regarding PXGL-GNN, suppose each representation learning function $F_m$ is an $L$-layer GCN, of which the width is linear with $d$. Let the batch size in the optimization be $B$ for both supervised and unsupervised learning. In supervised learning, the space complexity and time complexity of supervised learning are $\mathcal{O}(BMQ(e+nd)+MLd^2+Cd)$ and $\mathcal{O}(BMQL(ed+nd^2))$ respectively. In unsupervised learning, the space complexity and time complexity of supervised learning are $\mathcal{O}(BMQ(e+nd)+MLd^2+Cd+B^2)$ and $\mathcal{O}(BMQL(ed+nd^2)+B^2)$ respectively. This method is scalable to large graph datasets because the complexities are linear with $BMQ$ and $B^2$, where the $B^2$ term, referring to Eq.~\eqref{eqn:loss-kl-gau}, comes from computing the Gaussian kernel matrix between all pairs of examples in a batch.

\section{Related Works}
Due to space limitations, we introduce previous works on explainable graph learning (XGL), graph representation learning (GCL), and graph kernels in the supplementary materials. 

\section{Experiments}
\begin{table}[t]
      \centering
      \small 
      \resizebox{0.45\textwidth}{!}{
      \renewcommand{\arraystretch}{1}
      \begin{tabular}{c|c|c|c|c|c}
        \toprule
         Name & \makecell{\# of \\graphs} & \makecell{\# of \\classes} & \makecell{\# of \\nodes } & \makecell{node \\labels} & \makecell{node\\ attributes} \\ \midrule
         MUTAG & 188 & 2  & 17.9  & yes & no\\ 
         PROTEINS &  1113 & 2  & 39.1  & yes & yes\\
         DD &1178 &2 &284.32 & yes & no \\
         NCI1 & 4110 & 2 & 29.9  & yes  &  no  \\  
         COLLAB  &5000 &    3 &74.49 & no & no \\
         IMDB-B & 1000 & 2  & 19.8  & no  & no\\
         REDDIT-B & 2000 & 2 & 429.63 &no &no \\
         REDDIT-M5K & 4999 & 5 & 508.52 &no  &no \\ 
         \bottomrule
      \end{tabular}}
      \caption{Statistics of Datasets}
      \label{tab:TUdataset}
\end{table}

We test our method on the TUdataset \cite{Morris+2020} for both supervised and unsupervised learning tasks, as shown in Table \ref{tab:TUdataset}. Our goal is to learn explainable graph representations. We provide the weight parameter $\bm{\lambda}$ and visualize the ensemble representation $\bm{g}$ and the pattern representation $\bm{z}^{(m)}$. We use seven graph patterns: paths, trees, graphlets, cycles, cliques, wheels, and stars, sampling $Q = 10$ subgraphs for each. We select these patterns based on their discriminative power and computational feasibility. In practice, one could use a subset of these seven patterns and adjust the sampling cardinality $Q$ based on domain knowledge or computational constraints. We use a 5-layer GCN for the representation learning function $F$ and a 3-layer DNN with softmax for the classification function $f_c$. Experiments are repeated ten times and the average value and standard deviation are reported. Due to the space limitation, the results of PXGL-EGK and other details are shown in the supplementary materials.

\begin{figure*}[ht]
    \centering
    \begin{subfigure}[b]{0.2\textwidth}
        \centering
        \caption{$\bm{g}$: ensemble}
        \includegraphics[width=\textwidth]{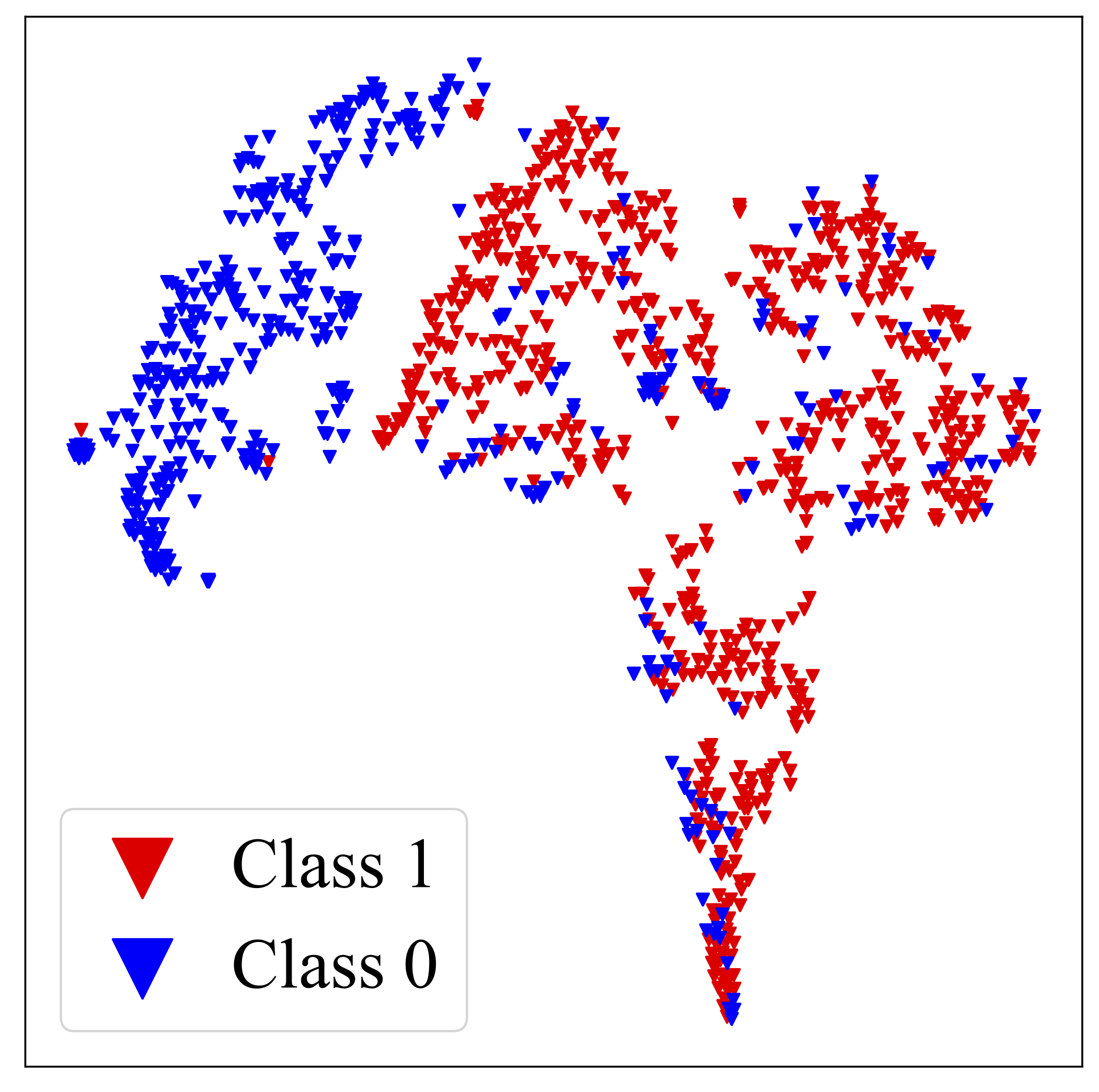}\\
        \footnotesize{$ \sum_{m = 1}^M \lambda_m \bm{z}^{(m)}$}
        \label{fig:GNN_sup_K_combined}
    \end{subfigure}
    \hfill
    \begin{subfigure}[b]{0.2\textwidth}
        \centering
        \caption{$\bm{z}^{(1)}$: path}
        \includegraphics[width=\textwidth]{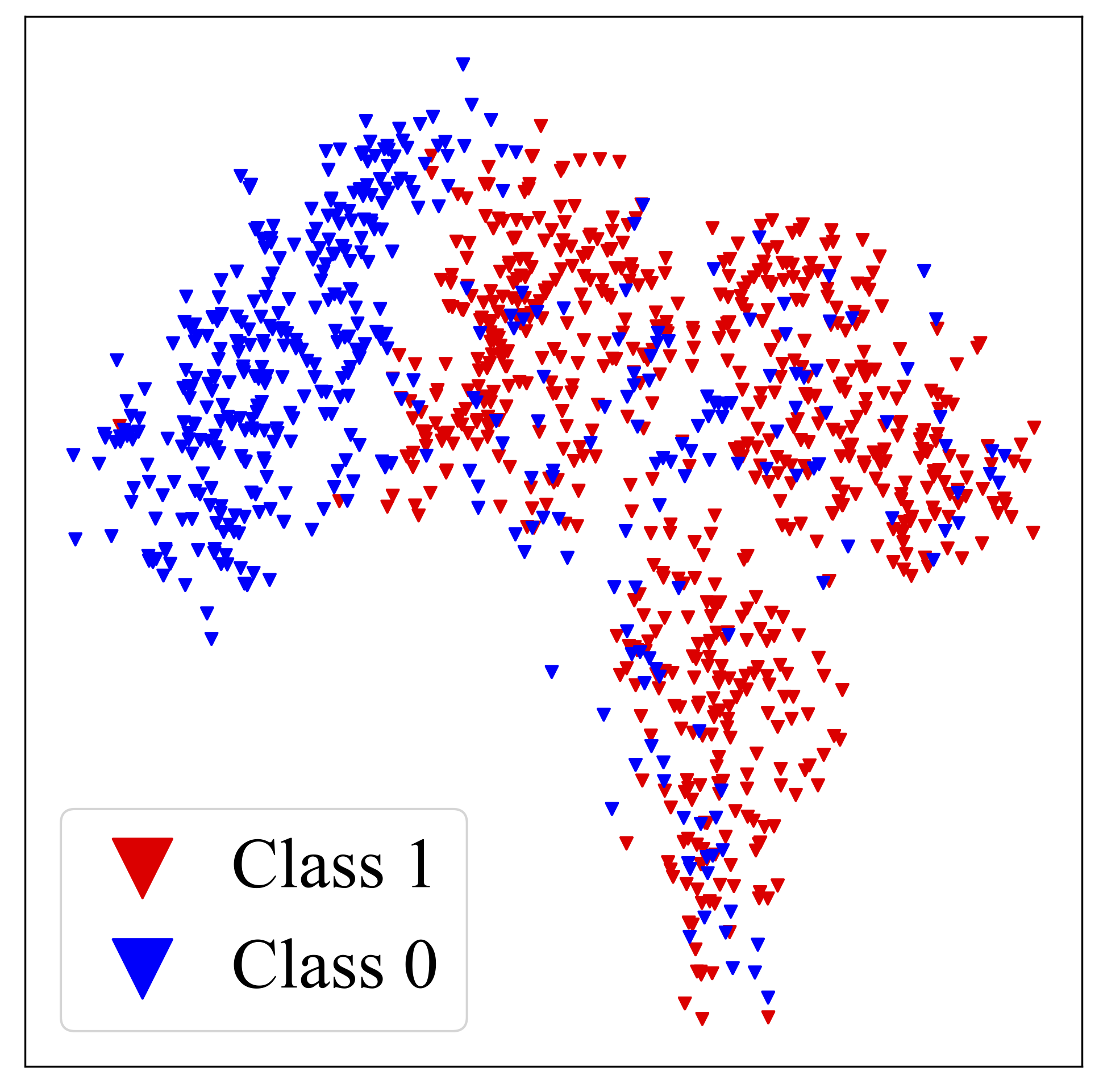}\\
        \footnotesize{$(\lambda_1 = 0.5504)$}
        \label{fig:GNN_sup_PROTEINS_path}
    \end{subfigure}
    \hfill
    \begin{subfigure}[b]{0.2\textwidth}
        \centering
        \caption{$\bm{z}^{(2)}$: tree}
        \includegraphics[width=\textwidth]{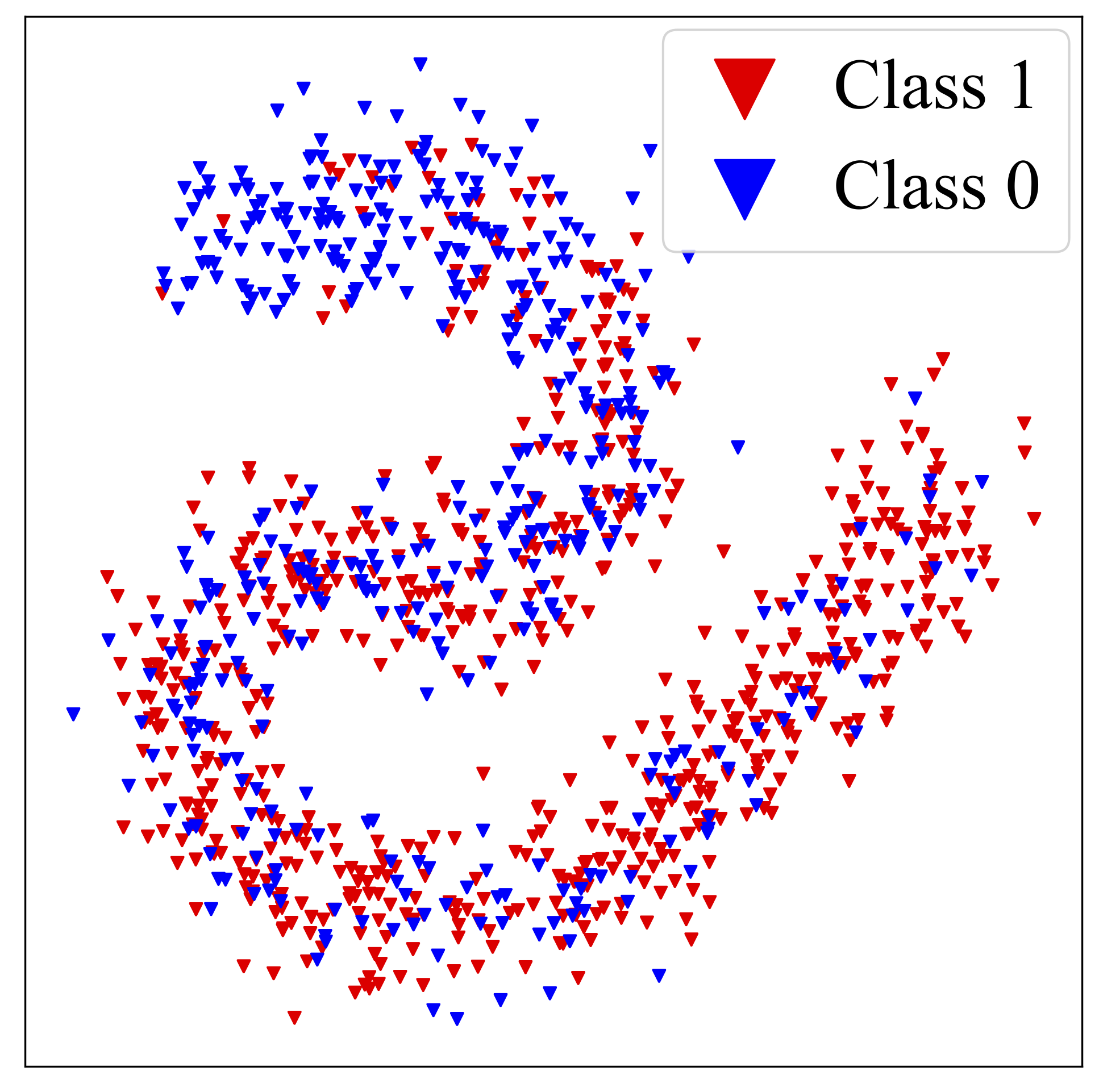}
        \footnotesize{$(\lambda_2 = 0.0746)$}
        \label{fig:GNN_sup_PROTEINS_subtree}
    \end{subfigure}
    \hfill
    \begin{subfigure}[b]{0.2\textwidth}
        \centering
    \caption{$\bm{z}^{(3)}$: graphlet}
        \includegraphics[width=\textwidth]{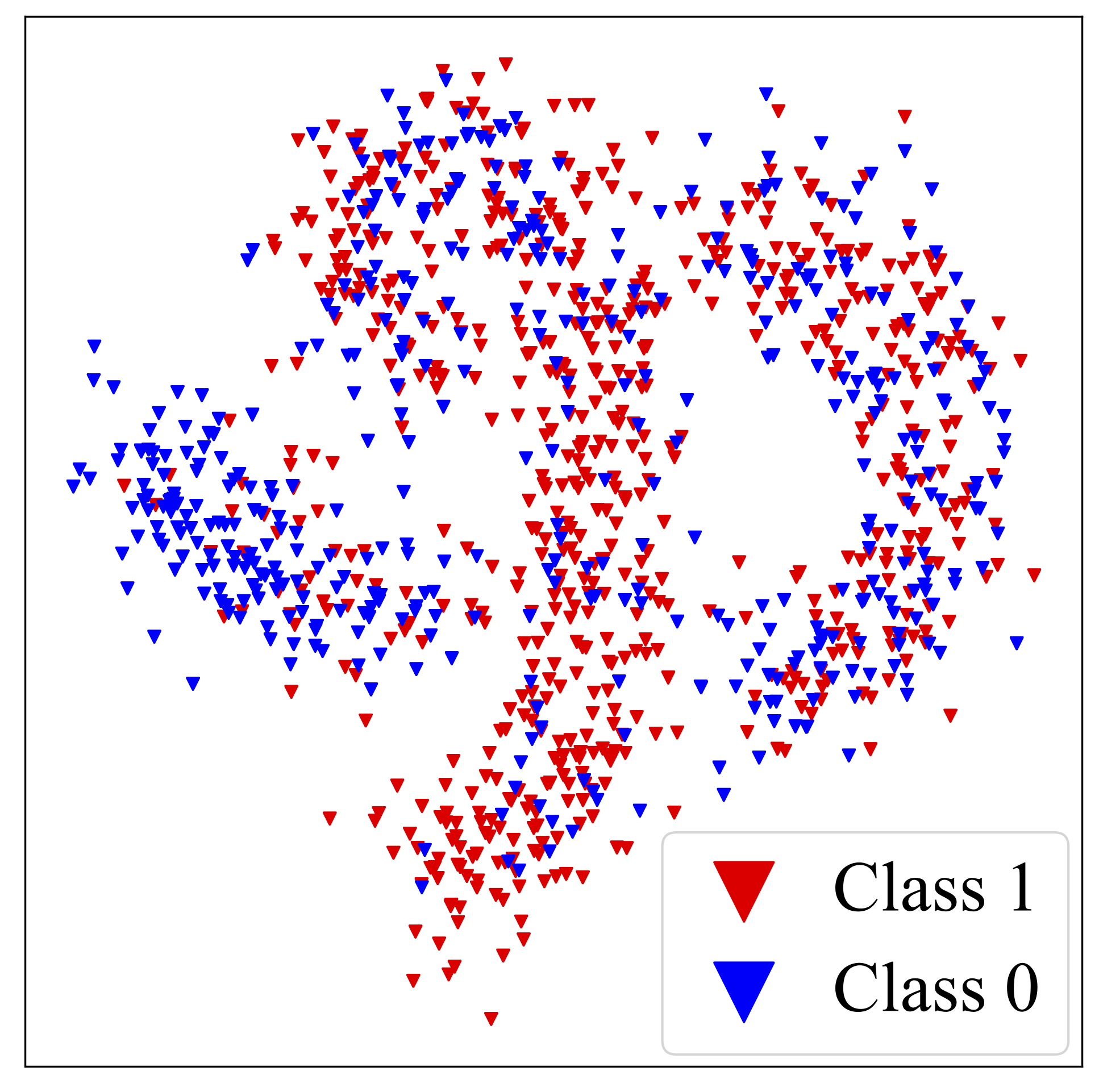}\\
        \footnotesize{$(\lambda_3 = 0.08103)$}\label{fig:GNN_sup_PROTEINS_graphlet}
    \end{subfigure}
    \caption{t-SNE visualizations of PXGL-GNN's pattern representations (supervised) for PROTEINS.}
    \label{fig:GNN_sup_pattern-protein}
\end{figure*}

\begin{table*}[h!]
\centering
\small 
\resizebox{1\linewidth}{!}{
\renewcommand{\arraystretch}{0.85}
\begin{tabular}{c|cccccccc}
\toprule
Pattern     & MUTAG & PROTEINS & DD & NCI1 & COLLAB & IMDB-B & REDDIT-B & REDDIT-M5K \\ \midrule
paths       & $\color{blue}0.095 \pm 0.014$ & $\bf0.550 \pm 0.070$ & $0.093 \pm 0.012$ & $0.022 \pm 0.002$ & $\bf0.587 \pm 0.065$ & $\color{blue}0.145 \pm 0.018$ & $0.131 \pm 0.027$ & $0.027 \pm 0.003$ \\
trees       & $0.046 \pm 0.005$ & $0.074 \pm 0.009$ & $0.054 \pm 0.006$ & $0.063 \pm 0.008$ & $0.105 \pm 0.013$ & $0.022 \pm 0.003$ & $0.055 \pm 0.007$ & $0.025 \pm 0.003$ \\
graphlets   & $0.062 \pm 0.008$ & $0.081 \pm 0.011$ & $\color{blue}0.125 \pm 0.015$ & $0.101 \pm 0.013$ & $0.063 \pm 0.008$ & $0.084 \pm 0.011$ & $0.026 \pm 0.003$ & $0.054 \pm 0.007$ \\
cycles      & $\bf 0.654 \pm 0.085$ & $0.099 \pm 0.013$ & $0.094 \pm 0.012$ & $\color{blue}0.176 \pm 0.022$ & $0.022 \pm 0.003$ & $0.123 \pm 0.016$ & $0.039 \pm 0.005$ & $0.037 \pm 0.005$ \\
cliques     & $0.082 \pm 0.011$ & $\color{blue}0.098 \pm 0.012$ & $\bf0.572 \pm 0.073$ & $\bf0.574 \pm 0.075$ & $\color{blue}0.134 \pm 0.017$ & $\bf0.453 \pm 0.054$ & $\color{blue} 0.279 \pm 0.069$ & $\color{blue}0.256 \pm 0.067$ \\
wheels      & $0.026 \pm 0.003$ & $0.039 \pm 0.005$ & $0.051 \pm 0.007$ & $0.012 \pm 0.002$ & $0.068 \pm 0.009$ & $0.037 \pm 0.004$ & $0.036 \pm 0.005$ & $0.023 \pm 0.003$ \\
stars       & $0.035 \pm 0.005$ & $0.056 \pm 0.007$ & $0.011 \pm 0.002$ & $0.052 \pm 0.007$ & $0.021 \pm 0.003$ & $0.136 \pm 0.017$ & $\bf 0.447 \pm 0.006$ & $\bf 0.578 \pm 0.033$ \\ 
\bottomrule
\end{tabular}}
\caption{The learned $\bm{\lambda}$ of PXGL-GNN (supervised). The largest value is {\bf bold} and the second largest value is {\color{blue} blue}.}
\label{tab:lambda-supervised}
\end{table*}
\begin{table*}[h!]
\centering
\small 
\resizebox{0.85\linewidth}{!}{
\renewcommand{\arraystretch}{0.85}
\begin{tabular}{c|cccccccc}
\toprule
Method    & MUTAG                 & PROTEINS         & DD               & NCI1             & COLLAB           & IMDB-B           & REDDIT-B         & REDDIT-M5K  \\ \midrule
GIN        & 84.53 $\pm$ 2.38     & 73.38 $\pm$2.16  & 76.38 $\pm$1.58  & 73.36 $\pm$1.78  & 75.83 $\pm$ 1.29 & 72.52 $\pm$ 1.62 & 83.27 $\pm$ 1.30 & 52.48 $\pm$  1.57\\
DiffPool   & 86.72 $\pm$ 1.95    & 76.07 $\pm$1.62  & 77.42 $\pm$2.14  & 75.42 $\pm$2.16  & 78.77 $\pm$ 1.36 & 73.55 $\pm$ 2.14  & 84.16 $\pm$ 1.28 & 51.39 $\pm$  1.48\\
DGCNN      & 84.29 $\pm$ 1.16   & 75.53 $\pm$2.14  & 76.57 $\pm$1.09  & 74.81 $\pm$1.53  & 77.59 $\pm$ 2.24 & 72.19 $\pm$ 1.97 & 86.33 $\pm$ 2.29 & 53.18 $\pm$  2.41\\
GRAPHSAGE  & 86.35 $\pm$ 1.31   & 74.21 $\pm$1.85  & 79.24 $\pm$2.25  & 77.93 $\pm$2.04  & 76.37 $\pm$ 2.11 & 73.86 $\pm$ 2.17  & 85.59 $\pm$ 1.92 & 51.65 $\pm$  2.55\\
SubGNN     & 87.52 $\pm$ 2.37   & 76.38 $\pm$1.57  & 82.51 $\pm$1.67  & 82.58 $\pm$1.79  & 81.26 $\pm$ 1.53 & 71.58 $\pm$ 1.20  & 88.47 $\pm$ 1.83 & 53.27 $\pm$  1.93\\
SAN        & 92.65 $\pm$ 1.53   & 75.62 $\pm$2.39  & 81.36 $\pm$2.10  &\color{blue} 83.07 $\pm$1.54  &\textbf{\color{blue}} 82.73 $\pm$ 1.92 & 75.27 $\pm$ 1.43  & 90.38 $\pm$ 1.54 & 55.49 $\pm$  1.75\\
SAGNN      &\color{blue} 93.24 $\pm$ 2.51   & 75.61 $\pm$2.28  & 84.12 $\pm$1.73  & 81.29 $\pm$1.22  & 79.94 $\pm$ 1.83 & 74.53 $\pm$ 2.57 & 89.57 $\pm$ 2.13 & 54.11 $\pm$  1.22\\
ICL        & 91.34 $\pm$ 2.19   & 75.44 $\pm$1.26  & 82.77 $\pm$1.42  & 83.45 $\pm$1.78  & 81.45 $\pm$ 1.21 & 73.29 $\pm$ 1.46   &\color{blue} 90.13 $\pm$ 1.40 &\color{blue} 56.21 $\pm$  1.35\\
S2GAE      & 89.27 $\pm$ 1.53   &\color{blue} 76.47 $\pm$1.12  &\color{blue} 84.30 $\pm$1.77  & 82.37 $\pm$2.24  & 82.35 $\pm$ 2.34 &\color{blue} 75.77 $\pm$ 1.72   & 90.21 $\pm$ 1.52 & 54.53 $\pm$  2.17\\ \hline
\addlinespace[2pt]
\textbf{PXGL-GNN}      &\bf 94.87 $\pm$ 2.26   &\bf 78.23 $\pm$2.46  &\bf 86.54 $\pm$1.95  &\bf 85.78 $\pm$2.07  &\bf 83.96 $\pm$ 1.59 &\bf 77.35 $\pm$ 2.32   &\bf 91.84 $\pm$ 1.69 &\bf 57.36 $\pm$  2.14 \\ \bottomrule
\end{tabular}}
\caption{Accuracy (\%) of Graph Classification. The best accuracy is {\bf bold} and the second best is {\color{blue} blue}.}
\label{tab:supervised-acc}
\end{table*}

\subsection{Supervised Learning}
We conduct supervised XGL via pattern analysis, the proposed PXGL-GNN, by solving optimization \eqref{eqn:solve-ensemble-lambda} with the classification loss \eqref{eqn:loss-class}. The dataset is split into 80\% training, 10\% validation, and 10\% testing data. The learned weight parameter $\bm{\lambda}$, indicating each pattern's contribution to graph representation learning, is reported in Table \ref{tab:lambda-supervised}. We also visualize the graph representation $\bm{g}$ and three pattern representations $\bm{z}^{(m)}$ of PROTEINS in Figure~\ref{fig:GNN_sup_pattern-protein}. Results show the paths pattern is most important for learning $\bm{g}$, and the ensemble representation $\bm{g}$ outperforms single pattern representations $\bm{z}^{(m)}$, which reveal underlying structural characteristics and naturally align with domain knowledge since paths are crucial for reflecting protein folding pathways~\cite{yan2011applications}.

The compared baselines include classical GNNs like GIN \cite{xu2018powerful}, DiffPool \cite{ying2018hierarchical}, DGCNN \cite{zhang2018end}, GRAPHSAGE \cite{hamilton2017inductive}; subgraph-based GNNs like SubGNN \cite{kriege2012subgraph}, SAN \cite{zhao2018substructure}, SAGNN \cite{zeng2023substructure}; and recent methods like S2GAE \cite{tan2023s2gae} and ICL \cite{zhao2024twist}. The accuracies in Table \ref{tab:supervised-acc} show that our method performs the best.

\subsection{Unsupervised Learning}

We conduct unsupervised XGL via pattern analysis, the proposed PXGL-GNN, by solving optimization \eqref{eqn:solve-ensemble-lambda} with the KL divergence loss \eqref{eqn:loss-kl-gau}.
The learned weight parameter $\bm{\lambda}$ for XGL is reported in the supplementary materials. The visualizations of unsupervised XGL results are in the supplementary materials.
Results show that the ensemble representation $\bm{g}$ outperforms single pattern representations $\bm{z}^{(m)}$.

For clustering performance, we use clustering accuracy (ACC) and Normalized Mutual Information (NMI). Baselines include four kernels: Random walk kernel (RW) \cite{borgwardt2005protein}, Sub-tree kernels \cite{da2012tree}, Graphlet kernels \cite{prvzulj2007biological}, Weisfeiler-Lehman (WL) kernels \cite{kriege2012subgraph}; and three unsupervised graph representation learning methods with Gaussian kernel in Eq.~\eqref{eqn:loss-kl-gau}: InfoGraph \cite{sun2019infograph}, GCL \cite{you2020graph}, 
GraphACL \cite{luo2023self}. The results are in Table~4 in our supplementary materials, and Table~5 reports the performance of PXGL-EGK. Our methods outperformed all benchmarks in almost all cases.

\section{Conclusion}
This paper investigates the explainability of graph representations through two novel approaches. First, we develop \textbf{PXGL-EGK} based on graph ensemble kernels that captures structural similarities while maintaining interpretability. Second, we introduce \textbf{PXGL-GNN}, a framework that incorporates diverse graph patterns (paths, trees, etc.) into GNNs to enhance both performance and explainability. We establish theoretical guarantees for our proposed methods, including robustness certification against perturbations and non-asymptotic generalization bounds. Extensive empirical evaluation demonstrates that our approaches not only achieve superior performance on classification and clustering tasks across multiple datasets, but also provide interpretable explanations for the learned graph representations.

\section*{Acknowledgements}
This work was supported by the National Natural Science Foundation of China under
Grant No.62376236, the Guangdong Provincial Key Laboratory of Mathematical Foundations for Artificial Intelligence (2023B1212010001), Shenzhen Science and Technology Program ZDSYS20230626091302006 (Shenzhen Key Lab of Multi-Modal Cognitive Computing), and Shenzhen Stability Science Program 2023. 

\section*{Contribution Statement}
Xudong Wang and Ziheng Sun contributed equally.
}
\bibliographystyle{named}
\bibliography{ijcai25}

\newpage
\onecolumn
\appendix

\section{Notations and Definitions of Patterns}\label{app:def-pattern}
In our work, graph patterns are referred to as subgraphs with practical meanings. 
Let $G = (V, E)$ be a graph. A subgraph $S = (V_S, E_S)$ of $G$ is defined such that $V_S \subseteq V$ and $E_S \subseteq E \cap (V_S \times V_S)$. The mathematical definitions of graph patterns are as follows:

\begin{itemize}
    \item \textbf{Paths:}
    $S$ is a \textit{path} if there exists a sequence of distinct vertices $v_1, \ldots, v_k \in V_S$ such that $E_S = ((v_i, v_{i+1}) : i = 1, \ldots, k-1)$.

    \item \textbf{Trees:}
    $S$ is a \textit{tree} if it is connected and contains no cycles, i.e., it is acyclic and $|E_S| = |V_S| - 1$.

    \item \textbf{Graphlets:}
    $S$ is a \textit{graphlet} if it is a small connected induced subgraph of $G$, typically consisting of 2 to 5 vertices.

    \item \textbf{Cycles:}
    $S$ is a \textit{cycle} if there exists a sequence of distinct vertices $v_1, \ldots, v_k \in V_S$ such that $E_S = ((v_i, v_{i+1}) : i = 1, \ldots, k-1) \cup ((v_k, v_1))$.

    \item \textbf{Cliques:}
    $S$ is a \textit{clique} if every two distinct vertices in $V_S$ are adjacent, thus $E_S = ((v_i, v_j) : v_i, v_j \in V_S, i \neq j)$.

    \item \textbf{Wheels:}
    $S$ is a \textit{wheel} if it consists of a cycle with vertices $v_1, \ldots, v_{k-1}$ and an additional central vertex $v_k$ such that $v_k$ is connected to all vertices of the cycle.

    \item \textbf{Stars:}
    $S$ is a \textit{star} if it consists of one central vertex $v_c$ and several leaf vertices $v_1, \ldots, v_{k-1}$, where each leaf vertex is only connected to $v_c$. Thus, $E_S = ((v_c, v_i) : i = 1, \ldots, k-1)$.
\end{itemize}

\section{Related Works} \label{app:related-work}
In this section, we introduce previous works on explainable graph learning (XGL), graph representation learning (GRL), and graph kernels.

\subsection{Explainable Graph Learning (XGL)}\label{app:XGL}
Explainable artificial intelligence (XAI) is a rapidly growing area in the AI community \cite{dovsilovic2018explainable,adadi2018peeking,angelov2021explainable,hassija2024interpreting}. Explainable graph learning (XGL) \cite{kosan2023gnnx} can be generally classified into two categories: model-level methods and instance-level methods.

\paragraph{Model-level}
Model-level or global explanations aim to understand the overall behavior of a model by identifying patterns in its predictions.
For examples, XGNN\cite{yuan2020xgnn} trains a graph generator to create graph patterns that maximize a certain prediction, providing high-level insights into GNN behavior. GLG-Explainer\cite{azzolin2022global} combines local explanations into a logical formula over graphical concepts, offering human-interpretable global explanations aligned with ground-truth or domain knowledge. GCFExplainer\cite{huang2023global} uses global counterfactual reasoning to find representative counterfactual graphs, providing a summary of global explanations through vertex-reinforced random walks on an edit map of graphs.

\paragraph{Instance-level}
Instance-level methods offer explanations tailored to specific predictions, focusing on why particular instances are classified in a certain manner. For instance, GNNExplainer \cite{ying2019gnnexplainer} identifies a compact subgraph structure and a small subset of node features crucial for a GNN's prediction. PGExplainer \cite{luo2020parameterized} trains a graph generator to incorporate global information and uses a deep neural network (DNN) to parameterize the explanation generation process. SubgraphX \cite{yuan2021explainability} efficiently explores different subgraphs using Monte Carlo tree search to explain predictions. RG-Explainer \cite{shan2021reinforcement} constructs a connected explanatory subgraph by sequentially adding nodes, consistent with the message passing scheme. MixupExplainer \cite{zhang2023mixupexplainer} introduces a general form of Graph Information Bottleneck (GIB) to address distribution shifting issues in post-hoc graph explanation. AutoGR \cite{wang2021explainable} introduces an explainable AutoML approach for graph representation learning.

\subsection{Graph Representation Learning}\label{app:GRL}
Graph representation learning is crucial for transforming complex graphs into vectors, particularly for tasks like classification. The methods for graph representation learning are mainly classified into two categories: supervised and unsupervised learning.

\paragraph{Supervised Representation Learning}
Most GNNs can be used in supervised graph representation learning tasks by aggregating all the node embeddings into a graph representation using a readout function \cite{hamilton2020graph,chami2022machine}. Besides traditional GNNs like GCN \cite{kipf2016semi}, GIN \cite{xu2018powerful}, and GAT \cite{velivckovic2017graph}, recent works include:
Template-based Fused Gromov-Wasserstein (FGW) \cite{vincent2022template} computes a vector of FGW distances to learnable graph templates, acting as an alternative to global pooling layers.
Path Isomorphism Network (PIN) \cite{truong2024weisfeiler} introduces a graph isomorphism test and a topological message-passing scheme operating on path complexes.
Graph U-Net \cite{amouzad2024graph} proposes GIUNet for graph classification, combining node features and graph structure information using a pqPooling layer.
Unified Graph Transformer Networks (UGT) \cite{lee2024transitivity} integrate local and global structural information into fixed-length vector representations using self-attention.
CIN++ \cite{giusti2023cin++} enhances topological message passing to account for higher-order and long-range interactions, achieving state-of-the-art results.
Graph Joint-Embedding Predictive Architectures (Graph-JEPA) \cite{skenderi2023graph} use masked modeling to learn embeddings for subgraphs and predict their coordinates on the unit hyperbola in the 2D plane.

\paragraph{Unsupervised Representation Learning}
Unsupervised methods aim to learn graph representations without labeled data. Notable methodologies include:
InfoGraph \cite{sun2019infograph} emphasizes mutual information between graph-level and node-level representations.
Graph Contrastive Learning techniques \cite{you2020graph,suresh2021adversarial,you2021graph} enhance graph representations through diverse augmentation strategies.
AutoGCL \cite{yin2022autogcl} introduces learnable graph view generators.
GraphACL \cite{luo2023self} adopts a novel self-supervised approach.
InfoGCL \cite{xu2021infogcl} and SFA \cite{zhang2023spectral} focus on information transfer and feature augmentation in contrastive learning.
Techniques like GCS \cite{wei2023boosting}, NCLA \cite{shen2023neighbor}, S$^3$-CL \cite{ding2023eliciting}, and ImGCL \cite{zeng2023imgcl} refine graph augmentation and learning methods.
GRADATE \cite{duan2023graph} integrates subgraph contrast into multi-scale learning networks. GeMax \cite{pmlr-v235-sun24i} explores the unsupervised representation learning via graph entropy maximization. 

\paragraph{GNNs using Subgraphs and Substructures}
Our pattern analysis method samples subgraphs from different graph patterns to conduct explainable graph representation learning. The key novelty and contribution of our paper is that graph pattern analysis provides explainability for representations. We discuss other GNN methods based on subgraphs and substructures here:
Subgraph Neural Networks (SubGNN) \cite{kriege2012subgraph} learn disentangled subgraph representations using a novel subgraph routing mechanism, but they sample subgraphs randomly, lacking explainability.
Substructure Aware Graph Neural Networks (SAGNN) \cite{zeng2023substructure} use cut subgraphs and return probability to capture structural information but focus on expressiveness rather than explainability.
Mutual Information (MI) Induced Substructure-aware GRL \cite{wang2020exploiting} maximizes MI between original and learned representations at both node and graph levels but does not provide explainable representation learning.
Substructure Assembling Network (SAN) \cite{zhao2018substructure} hierarchically assembles graph components using an RNN variant but lacks explainability in representation learning.

Several works focus on analyzing the expressiveness of methods by their ability to count substructures, but they do not provide explainable representation learning. For example:
\cite{chen2020can} analyze the expressiveness of MPNNs \cite{gilmer2017neural} and 2nd-order Invariant Graph Networks (2-IGNs) \cite{maron2019universality} based on their ability to count specific subgraphs, highlighting tasks that are challenging for classical GNN architectures but not focusing on explainability.
\cite{frasca2022understanding} compare the expressiveness of SubGNN \cite{kriege2012subgraph} and 2-IGNs \cite{maron2019universality} using symmetry analysis, establishing a link between Subgraph GNNs and Invariant Graph Networks.

\subsection{Graph Kernels}
Graph kernels evaluate the similarity between two graphs. Over the past decades, numerous graph kernels have been proposed \cite{siglidis2020grakel}. We classify them into two categories: pattern counting kernels and non-pattern counting kernels.

\paragraph{Pattern Counting Kernels}
Pattern counting kernels compare specific substructures within graphs to evaluate similarity \cite{kriege2020survey}. For examples,
Random walk kernels \cite{borgwardt2005protein,gartner2003graph} measure graph similarity by counting common random walks between graphs.
Shortest-path kernels \cite{borgwardt2005shortest} compare graphs using the shortest distance matrix generated by the Floyd-Warshall algorithm, based on edge values and node labels.
Sub-tree kernels \cite{da2012tree,smola2002fast} decompose graphs into ordered Directed Acyclic Graphs (DAGs) and use tree kernels extended to DAGs.
Graphlet kernels \cite{prvzulj2007biological} count small connected non-isomorphic subgraphs (graphlets) within graphs and compare their distributions.
Weisfeiler-Lehman subtree kernels \cite{kriege2012subgraph} use small subgraphs, like graphlets, to compare graphs, allowing flexibility to compare vertex and edge attributes with arbitrary kernel functions.

\paragraph{Non-pattern Counting Kernels}
Non-pattern counting kernels evaluate graph similarity without relying on specific substructure counts. For examples,
Neighborhood hash kernel \cite{hido2009linear} use binary arrays to represent node labels and logical operations on connected node labels. This kernel has linear time complexity.
GraphHopper kernel \cite{feragen2013scalable} compare shortest paths between node pairs using kernels on nodes encountered while hopping along shortest paths.
Graph hash kernel \cite{shi2009hash} use hashing for efficient kernel computation, suitable for data streams and sparse feature spaces, with deviation bounds from the exact kernel matrix.
Multiscale Laplacian Graph (MLG) kernel \cite{kondor2016multiscale} account for structure at different scales using Feature Space Laplacian Graph (FLG) kernels, applied recursively to subgraphs. They introduce a randomized projection procedure similar to the Nystrom method for RKHS operators.

\section{Proof for Robustness Analysis}\label{app:robustness}
Let $\Delta_A$ and $\Delta_X$ be some perturbations on adjacency matrix and node attributes, then the perturbed graph is denoted as $\tilde{G} = (\bm{A} + \Delta_A, \bm{X} + \Delta_X)$. 
Let $\bm{g}$ be the graph representation of $G$ and $\bm{\tilde{g}}$ be the graph representation of $\tilde{G}$. The robustness analysis is to find the upper bound of $\|\bm{\tilde{g}} - \bm{g} \|$. 

\paragraph{Assumptions and Notations:} Let $\bm{\tilde{A}} = \bm{A} + \Delta_A$ and $\bm{\tilde{X}} = \bm{X} + \Delta_X$. We suppose that $\|\bm{A}\|_2 \leq \beta_A$, $\|\bm{X}\|_F \leq \beta_B$ and $\|\bm{W}^{(m, l)}\|_2 \leq \beta_W,~(\forall~m \in [M],~l \in [L])$, the activation $\sigma(\cdot)$ of GCN is $\rho$-Lipschitz continuous. We denote the minimum node degree of $G$ as $\alpha$,  the effects of structural perturbation as $\kappa = \min(\bm{1}^\top \Delta_A)$, and $\Delta_D := \bm{I} - \text{diag}(\bm{1}^\top (\bm{I} + \bm{A} + \Delta_A))^{\frac{1}{2}} \text{diag}(\bm{1}^\top \bm{A})^{-\frac{1}{2}}$.

\paragraph{Theorem:} Our conclusion for robustness analysis is as follows:
\begin{equation}
    \begin{aligned}
        \|\bm{\tilde{g}}- \bm{g}  \|  &\leq \frac{1}{\sqrt{n}} \rho^{L} \beta_W^{L} \beta_X (1 + \alpha)^{-L} (1 + \beta_A + \|\Delta_A \|_2)^L\\
     &\Bigl( 1 + 2L  \|\Delta_D \|_2 + L (1 + \beta_A + \|\Delta_A \|_2)^{-1}  \| \Delta_A \|_2) \Bigr)
    \end{aligned}
\end{equation}

To provide a clearer analysis, we first use the whole graph $G$ and $\tilde{G}$ as the input of the pattern representation learning function $F$ without sampling the subgraphs. Then we consider using the subgraph sampling to analyze $\bm{g}$ and $\bm{\tilde{g}}$ and finally finish the proof of robustness analysis. 

\subsection{Learning Pattern Representations using the Whole Graph without Sampling}
In this section, we first consider using the whole graph $G$ and $\tilde{G}$ as the input of the pattern representation learning function $F$ without sampling the subgraphs, i.e., we analyze $F(\bm{A}, \bm{X}; \mathcal{W}^{(m)})$ and $F(\bm{\tilde{A}}, \bm{\tilde{X}}; \mathcal{W}^{(m)})$.

\paragraph{Representation Learning Function $F$}
In theoretical analysis, we suppose the pattern representation learning function $F$ is a $L$-layer GCN \cite{kipf2016semi} with an average pooling $\text{avg-pool}: \mathbb{R}^{n \times d} \rightarrow \mathbb{R}^d$ as the output layer. The pattern learning function for the pattern $\mathcal{P}_m$ is denoted as $F(\bm{A}, \bm{X}; \mathcal{W}^{(m)})$, where $\mathcal{W}^{(m)} = \{\bm{W}^{(m, 1)},..., \bm{W}^{(m, l)}, ..., \bm{W}^{(m, L)}\}$ and $\bm{W}^{(m, l)}$ is the 
trainable parameter of the $l$-th layer. We use the adjacency matrix $\bm{A}$ and node feature matrix $\bm{X}$ of $G$ as the input. Then the self-connected adjacency matrix is  $\bm{\hat{A}} = \bm{I} + \bm{A}$, the diagonal matrix is $\bm{\hat{D}} = \text{diag}(\bm{1}^\top \bm{\hat{A}})$, then the normalized self-connected adjacency matrix is $\bm{U} = \bm{\hat{D}}^{-\frac{1}{2}} \bm{\hat{A}} \bm{\hat{D}}^{-\frac{1}{2}}$. Let $\sigma(\cdot)$ be an activation function, 
then the hidden embedding $\bm{X}^{(m, L)}$ of the $l$-th layer is defined as follows 
\begin{equation}
    \begin{aligned}
    \bm{X}^{(m, l)} = \underbrace{\sigma(\bm{U} ... \sigma(\bm{U}}_{l~~\text{times}} \bm{X} \underbrace{\bm{W}^{(m, 1)}) ... \bm{W}^{(m, l)})}_{l~~\text{times}}, ~~ \forall~ l \in [L],
    \end{aligned}
\end{equation}
The pattern representation $\bm{z}^{(m)}$ of pattern $\mathcal{P}_m$ is obtained by
\begin{equation}
    \begin{aligned}
        \bm{z}^{(m)} = F(\bm{A}, \bm{X}; \mathcal{W}^{(m)}) = \text{avg-pool}(\bm{X}^{(m, L)}) = \frac{1}{n} \bm{1}^\top \bm{X}^{(m, L)} 
    \end{aligned}
\end{equation}

For a perturbed graph $\tilde{G}$, we use $\bm{\tilde{A}}$ and $\bm{\tilde{X}}$ to denote the adjacency matrix and feature matrix respectively. The corresponding self-connected adjacency matrix is  $\bm{\hat{A}'} = \bm{I} + \bm{\tilde{A}}$ and the degree matrix as $\bm{\hat{D}'} = \text{diag}(\bm{1}^\top \bm{\hat{A}'})$. Then the normalized self-connected adjacency matrix is $\bm{\tilde{U}} = \bm{\hat{D}'}^{-\frac{1}{2}} \bm{\hat{A}'} \bm{\hat{D}'}^{-\frac{1}{2}}$. The $l$-th layer hidden embedding of $\tilde{G}$ is defined as follows 
\begin{equation}
    \begin{aligned}
    \bm{\tilde{X}}^{(m, l)} = \underbrace{\sigma(\bm{\tilde{U}} ... \sigma(\bm{\tilde{U}} }_{l~~\text{times}} \bm{\tilde{X}} \underbrace{\bm{W}^{(m, 1)}) ... \bm{W}^{(m, l)})}_{l~~\text{times}}, ~~ \forall~ l \in [L],
    \end{aligned}
\end{equation}
The perturbed pattern representation $\bm{\tilde{z}}^{(m)}$ of pattern $\mathcal{P}_m$ is obtained by
\begin{equation}
    \begin{aligned}
        \bm{\tilde{z}}^{(m)} = F(\bm{\tilde{A}}, \bm{\tilde{X}}; \mathcal{W}^{(m)}) = \text{avg-pool}(\bm{\tilde{X}}^{(m, L)}) = \frac{1}{n} \bm{1}^\top \bm{\tilde{X}}^{(m, L)} 
    \end{aligned}
\end{equation}

\begin{lemma}
    Let $\bm{X}$ and $\bm{Y}$ be two square matrices, $\| \cdot \|_2$ be the spectral norm  and
    $\| \cdot \|_{F}$ be the Frobenius  norm , then $\|\bm{X}\|_2 \le \|\bm{X}\|_F$, 
    $\|\bm{X} \bm{Y}\|_2 \le \|\bm{X}\|_2 \|\bm{Y}\|_2$ and 
    $\|\bm{X} \bm{Y}\|_F \le \|\bm{X}\|_2 \|\bm{Y}\|_F$. 
\end{lemma}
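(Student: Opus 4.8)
The plan is to prove the three inequalities in sequence, since each is a standard fact about the spectral and Frobenius norms and the later ones build naturally on the submultiplicativity of $\|\cdot\|_2$. All three follow from the variational/singular-value characterizations of the two norms, so the first step is to fix notation: for a matrix $\bm{M}$, write $\|\bm{M}\|_2 = \sup_{\|\bm{v}\|_2=1}\|\bm{M}\bm{v}\|_2 = \sigma_{\max}(\bm{M})$ and $\|\bm{M}\|_F = \bigl(\sum_k \sigma_k(\bm{M})^2\bigr)^{1/2} = \bigl(\operatorname{tr}(\bm{M}^\top \bm{M})\bigr)^{1/2}$, where the $\sigma_k$ are the singular values.

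For the first inequality $\|\bm{X}\|_2 \le \|\bm{X}\|_F$: since $\sigma_{\max}(\bm{X})^2 \le \sum_k \sigma_k(\bm{X})^2$ (the maximum of a finite set of nonnegative numbers is at most their sum), taking square roots gives the claim. For the second, $\|\bm{X}\bm{Y}\|_2 \le \|\bm{X}\|_2\|\bm{Y}\|_2$: for any unit vector $\bm{v}$, $\|\bm{X}\bm{Y}\bm{v}\|_2 \le \|\bm{X}\|_2\,\|\bm{Y}\bm{v}\|_2 \le \|\bm{X}\|_2\,\|\bm{Y}\|_2\,\|\bm{v}\|_2 = \|\bm{X}\|_2\|\bm{Y}\|_2$, and taking the supremum over unit $\bm{v}$ yields the result. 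For the third, $\|\bm{X}\bm{Y}\|_F \le \|\bm{X}\|_2\|\bm{Y}\|_F$: write $\bm{Y}$ in terms of its columns $\bm{Y} = [\bm{y}_1,\dots,\bm{y}_n]$, so that $\|\bm{Y}\|_F^2 = \sum_j \|\bm{y}_j\|_2^2$ and $\bm{X}\bm{Y} = [\bm{X}\bm{y}_1,\dots,\bm{X}\bm{y}_n]$; then $\|\bm{X}\bm{Y}\|_F^2 = \sum_j \|\bm{X}\bm{y}_j\|_2^2 \le \sum_j \|\bm{X}\|_2^2\|\bm{y}_j\|_2^2 = \|\bm{X}\|_2^2\|\bm{Y}\|_F^2$, and taking square roots finishes the proof.

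There is no real obstacle here; the only thing to be careful about is making sure each bound uses the operator norm of $\bm{X}$ (not the Frobenius norm) on the left factor, which is exactly what is needed later when we repeatedly peel off layers $\bm{U}$ and $\bm{W}^{(m,l)}$ in the GCN recursion while keeping track of $\|\bm{X}\|_F$ for the feature matrix. I would state the lemma for square matrices as written, but remark that the second and third inequalities hold verbatim for conformable rectangular matrices by the same column-wise argument, since that generality is what the robustness and generalization proofs actually invoke.
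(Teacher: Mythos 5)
Your proof is correct: the singular-value comparison gives $\|\bm{X}\|_2\le\|\bm{X}\|_F$, the variational characterization gives submultiplicativity of the spectral norm, and the column-wise argument gives $\|\bm{X}\bm{Y}\|_F\le\|\bm{X}\|_2\|\bm{Y}\|_F$; these are exactly the standard facts intended, and the paper itself states this lemma without proof, so your argument simply supplies the routine verification. Your closing remark is also apt --- the inequalities are indeed applied to rectangular (conformable) matrices such as $\bm{X}^{(m,l)}\bm{W}^{(m,l)}$ in the later bounds, and the column-wise proof covers that case without change.
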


\begin{lemma}[Inequalities] \label{lem: ineq-graph}
Some inequalities that will be used in our proof:
\begin{equation*}
    \begin{aligned}
        &\| \bm{U} \|_2  \leq (1 + \alpha)^{-1} ( 1 + \beta_A) \\
        &\| \bm{\tilde{U}} \|_2  \leq (1 + \alpha + \kappa)^{-1} (1 + \beta_A + \|\Delta_A \|_2) \\
        &\|\Delta_{U} \|_2 \leq 2(1 + \beta_A)(1 + \alpha)^{-1} \|\Delta_D \|_2 + (1 + \alpha + \kappa)^{-1} \| \Delta_A \|_2 \\
        & \|\Delta_{X^{(m, l)}} \|_F \leq \rho^{l} \beta_W^{l} \beta_X (1 + \alpha)^{-l} (1 + \beta_A + \|\Delta_A \|_2)^l \Bigl( 1 + 2l  \|\Delta_D \|_2 + l (1 + \beta_A + \|\Delta_A \|_2)^{-1}  \| \Delta_A \|_2) \Bigr)
    \end{aligned}
\end{equation*}
\end{lemma}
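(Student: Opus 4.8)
The plan is to establish the four inequalities in the stated order, since the last one --- the layer-wise bound on $\|\Delta_{X^{(m,l)}}\|_F$ --- is an induction on $l$ that feeds on the first three at every step. For the first inequality, write $\bm{U}=\bm{\hat{D}}^{-1/2}\bm{\hat{A}}\bm{\hat{D}}^{-1/2}$ with $\bm{\hat{A}}=\bm{I}+\bm{A}$ and $\bm{\hat{D}}=\text{diag}(\bm{1}^\top\bm{\hat{A}})$, and use submultiplicativity of $\|\cdot\|_2$ (the preceding lemma) to get $\|\bm{U}\|_2\le\|\bm{\hat{D}}^{-1/2}\|_2^2\,\|\bm{\hat{A}}\|_2$. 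Because $\bm{\hat{D}}$ is diagonal with entries $1+d_v\ge 1+\alpha$, one has $\|\bm{\hat{D}}^{-1/2}\|_2^2=(\min_v(1+d_v))^{-1}\le(1+\alpha)^{-1}$, while $\|\bm{\hat{A}}\|_2\le 1+\|\bm{A}\|_2\le 1+\beta_A$ by the triangle inequality. The second inequality is the same computation with $\bm{\hat{A}}$ replaced by $\bm{\hat{A}'}=\bm{I}+\bm{A}+\Delta_A$ (so $\|\bm{\hat{A}'}\|_2\le 1+\beta_A+\|\Delta_A\|_2$) and $\bm{\hat{D}}$ by $\bm{\hat{D}'}=\text{diag}(\bm{1}^\top\bm{\hat{A}'})$, whose diagonal entries are $1+d_v+(\bm{1}^\top\Delta_A)_v\ge 1+\alpha+\kappa$, giving $\|(\bm{\hat{D}'})^{-1/2}\|_2^2\le(1+\alpha+\kappa)^{-1}$.

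For the third inequality I would first record the diagonal identity $(\bm{\hat{D}'})^{-1/2}-\bm{\hat{D}}^{-1/2}=(\bm{\hat{D}'})^{-1/2}\Delta_D$ that is encoded in the definition of $\Delta_D$ (all factors are diagonal and commute), and split
\[
\Delta_U=\Bigl[(\bm{\hat{D}'})^{-1/2}\bm{\hat{A}}(\bm{\hat{D}'})^{-1/2}-\bm{\hat{D}}^{-1/2}\bm{\hat{A}}\bm{\hat{D}}^{-1/2}\Bigr]+(\bm{\hat{D}'})^{-1/2}\Delta_A(\bm{\hat{D}'})^{-1/2}.
\]
The last term is bounded by $(1+\alpha+\kappa)^{-1}\|\Delta_A\|_2$ exactly as in the second inequality. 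For the bracketed term I telescope $P'\bm{\hat{A}}P'-P\bm{\hat{A}}P=(P'-P)\bm{\hat{A}}P'+P\bm{\hat{A}}(P'-P)$ with $P=\bm{\hat{D}}^{-1/2}$, $P'=(\bm{\hat{D}'})^{-1/2}$, substitute $P'-P=P'\Delta_D$, and apply submultiplicativity together with $\|P\|_2,\|P'\|_2\le(1+\alpha)^{-1/2}$ and $\|\bm{\hat{A}}\|_2\le 1+\beta_A$; each of the two terms is then at most $(1+\beta_A)(1+\alpha)^{-1}\|\Delta_D\|_2$, and the triangle inequality gives the claimed bound.

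For the fourth inequality I would induct on $l$, writing $a_l:=\|\Delta_{X^{(m,l)}}\|_F$ with base value $a_0=\|\Delta_X\|_F\le\beta_X$ (the statement implicitly bounds the feature perturbation by the same constant as $\|\bm{X}\|_F$). From the GCN layer recursion and $\rho$-Lipschitz continuity of $\sigma$ (applied entrywise, with $\sigma(0)=0$), $a_l\le\rho\beta_W\|\bm{\tilde{U}}\bm{\tilde{X}}^{(m,l-1)}-\bm{U}\bm{X}^{(m,l-1)}\|_F$; adding and subtracting $\bm{\tilde{U}}\bm{X}^{(m,l-1)}$ splits the right side into $\bm{\tilde{U}}\Delta_{X^{(m,l-1)}}$ and $\Delta_U\bm{X}^{(m,l-1)}$, so $a_l\le\rho\beta_W\bigl(\|\bm{\tilde{U}}\|_2\,a_{l-1}+\|\Delta_U\|_2\,\|\bm{X}^{(m,l-1)}\|_F\bigr)$. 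Substituting the second and third inequalities, the crude energy bound $\|\bm{X}^{(m,l-1)}\|_F\le c^{\,l-1}\beta_X$ (a one-line induction from $\|\bm{U}\|_2\le(1+\alpha)^{-1}(1+\beta_A)$ and $\sigma(0)=0$), and the monotone relaxations $1+\beta_A\le 1+\beta_A+\|\Delta_A\|_2$ and $(1+\alpha+\kappa)^{-1}\le(1+\alpha)^{-1}$, with $c:=\rho\beta_W(1+\alpha)^{-1}(1+\beta_A+\|\Delta_A\|_2)$, yields the recursion $a_l\le c\,a_{l-1}+c^{\,l}\beta_X\varepsilon$ where $\varepsilon:=2\|\Delta_D\|_2+\|\Delta_A\|_2/(1+\beta_A+\|\Delta_A\|_2)$. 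Unrolling gives $a_l\le c^{\,l}\beta_X(1+l\varepsilon)$, which is precisely the stated inequality.

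The step I expect to be the main obstacle is the third inequality and its use inside the induction: one must choose the add/subtract splitting that routes the degree perturbation entirely through $\Delta_D$, and then keep the two competing normalizations $(1+\alpha)^{-1}$ and $(1+\alpha+\kappa)^{-1}$ aligned (using $\kappa\ge0$) so that the fourth inequality closes with exactly the advertised constants rather than looser ones. This is also where the implicit assumptions $\kappa\ge0$ and $\sigma(0)=0$ are quietly used.
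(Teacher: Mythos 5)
Your proposal is correct and takes essentially the same route as the paper's proof: identical spectral bounds for $\bm{U}$ and $\bm{\tilde{U}}$, the same splitting of $\Delta_U$ into a degree-perturbation term routed through $\Delta_D$ (via $(\bm{\hat{D}}')^{-1/2}-\bm{\hat{D}}^{-1/2}=(\bm{\hat{D}}')^{-1/2}\Delta_D$) plus a $(\bm{\hat{D}}')^{-1/2}\Delta_A(\bm{\hat{D}}')^{-1/2}$ term, and the same layer recursion $\|\Delta_{X^{(m,l)}}\|_F\le\rho\beta_W\bigl(\|\bm{\tilde{U}}\|_2\|\Delta_{X^{(m,l-1)}}\|_F+\|\Delta_U\|_2\|\bm{X}^{(m,l-1)}\|_F\bigr)$. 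Your only departure is bookkeeping: packaging the unrolling as $a_l\le c\,a_{l-1}+c^{l}\beta_X\varepsilon$ lands exactly on the stated constants (and usefully makes explicit the implicit assumptions $\kappa\ge 0$, $\sigma(0)=0$, and $\|\Delta_X\|_F\le\beta_X$ that the paper also relies on), whereas the paper unrolls to a sum and states its final bound in a slightly different form with $\|\Delta_X\|_F$ kept explicit.
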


\begin{proof}
    Since the minimum node degree of $G$ is $\alpha$, then we have $\|\bm{\hat{D}}^{-\frac{1}{2}}\|_2 \leq (1 + \alpha)^{-\frac{1}{2}}$. Since $\|\bm{A}\|_2 \leq \beta_A$, then $\|\bm{\hat{A}}\|_2 \leq 1 + \beta_A$. We have
    \begin{equation}\label{eqn:ineq-U}
        \| \bm{U} \|_2 \leq \|\bm{\hat{D}}^{-\frac{1}{2}}\|_2 \|\bm{\hat{A}}\|_2 \|\bm{\hat{D}}^{-\frac{1}{2}}\|_2 \leq (1 + \alpha)^{-1} ( 1 + \beta_A).
    \end{equation}
    Similarly, since the effects of structural perturbation is $\kappa = \min(\bm{1}^\top \Delta_A)$, we have $\|\bm{\hat{D}'}^{-\frac{1}{2}}\|_2 \leq (1 + \alpha + \kappa)^{-\frac{1}{2}}$. Since $\|\bm{\tilde{A}'}\|_2 \leq \|\bm{\hat{A}}\|_2 + \|\Delta_A \|_2 \leq 1 + \beta_A + \|\Delta_A \|_2 $,  we obtain
    \begin{equation}\label{eqn:ineq-til-U}
        \| \bm{\tilde{U}} \|_2 \leq \|\bm{\hat{D}'}^{-\frac{1}{2}}\|_2 \|\bm{\hat{A}'}\|_2 \|\bm{\hat{D}'}^{-\frac{1}{2}}\|_2 \leq (1 + \alpha + \kappa)^{-1} (1 + \beta_A + \|\Delta_A \|_2).
    \end{equation}

Letting $\Delta_{U} = \bm{\tilde{U}} - \bm{U}$, we have
    \begin{equation}\label{eqn:ineq-det-U}
    \begin{aligned}
        \|\Delta_{U} \|_2  & = \|\bm{\tilde{U}} - \bm{U}\|_2\\ 
        & = \| \bm{\hat{D}'}^{-\frac{1}{2}} (\bm{\hat{A}} + \Delta_A) \bm{\hat{D}'}^{-\frac{1}{2}} -  \bm{\hat{D}}^{-\frac{1}{2}} \bm{\hat{A}} \bm{\hat{D}}^{-\frac{1}{2}}\|_2 \\
        & = \|\bm{\hat{D}'}^{-\frac{1}{2}} \bm{\hat{A}} \bm{\hat{D}'}^{-\frac{1}{2}} -
        \bm{\hat{D}'}^{-\frac{1}{2}} \bm{\hat{A}} \bm{\hat{D}}^{-\frac{1}{2}} + \bm{\hat{D}'}^{-\frac{1}{2}} \bm{\hat{A}} \bm{\hat{D}}^{-\frac{1}{2}} -\bm{\hat{D}}^{-\frac{1}{2}} \bm{\hat{A}} \bm{\hat{D}}^{-\frac{1}{2}} +
        \bm{\hat{D}'}^{-\frac{1}{2}} \Delta_A\bm{\hat{D}'}^{-\frac{1}{2}}\|_2 \\
        & \leq \|\bm{\hat{D}'}^{-\frac{1}{2}} \bm{\hat{A}} (\bm{\hat{D}'}^{-\frac{1}{2}} - \bm{\hat{D}}^{-\frac{1}{2}} )  \|_2 +  \|(\bm{\hat{D}'}^{-\frac{1}{2}} - \bm{\hat{D}}^{-\frac{1}{2}} ) \bm{\hat{A}} \bm{\hat{D}}^{-\frac{1}{2}}  \|_2
      +\| \bm{\hat{D}'}^{-\frac{1}{2}} \Delta_A\bm{\hat{D}'}^{-\frac{1}{2}}\|_2 \\
    & \leq (\|\bm{\hat{D}}^{-\frac{1}{2}}\|_2 + \|\bm{\hat{D}'}^{-\frac{1}{2}}\|_2) \|\bm{\hat{A}}\|_2 \|\bm{\hat{D}'}^{-\frac{1}{2}} - \bm{\hat{D}}^{-\frac{1}{2}} \|_2 + \| \bm{\hat{D}'}^{-\frac{1}{2}}\|_2 \| \Delta_A \|_2 \|\bm{\hat{D}'}^{-\frac{1}{2}}\|_2 \\
    & \leq ((1 + \alpha)^{-\frac{1}{2}} + (1 + \alpha + \kappa)^{-\frac{1}{2}}) (1 + \beta_A) 
    \|\bm{\hat{D}'}^{-\frac{1}{2}} - \bm{\hat{D}}^{-\frac{1}{2}} \|_2 + (1 + \alpha + \kappa)^{-1} \| \Delta_A \|_2 \\
    & \leq  2(1 + \beta_A)(1 + \alpha)^{-\frac{1}{2}} \|\bm{\hat{D}'}^{-\frac{1}{2}} - \bm{\hat{D}}^{-\frac{1}{2}} \|_2 + (1 + \alpha + \kappa)^{-1} \| \Delta_A \|_2\\
    & \leq 2(1 + \beta_A)(1 + \alpha)^{-\frac{1}{2}}(1 + \alpha + \kappa)^{-\frac{1}{2}}
    \|\bm{I} - \bm{\hat{D}'}^{\frac{1}{2}} \bm{\hat{D}}^{-\frac{1}{2}}\|_2 + (1 + \alpha + \kappa)^{-1} \| \Delta_A \|_2\\
    & = 2(1 + \beta_A)(1 + \alpha)^{-\frac{1}{2}}(1 + \alpha + \kappa)^{-\frac{1}{2}}
    \|\Delta_D \|_2 + (1 + \alpha + \kappa)^{-1} \| \Delta_A \|_2\\
    & \leq 2(1 + \beta_A)(1 + \alpha)^{-1} \|\Delta_D \|_2 + (1 + \alpha + \kappa)^{-1} \| \Delta_A \|_2
    \end{aligned}
\end{equation}
where $\Delta_D = \bm{I} - \bm{\hat{D}'}^{\frac{1}{2}} \bm{\hat{D}}^{-\frac{1}{2}} 
= \bm{I} - \text{diag}(\bm{1}^\top (\bm{I} + \bm{A} + \Delta_A))^{\frac{1}{2}} 
\text{diag}(\bm{1}^\top \bm{A})^{-\frac{1}{2}}$.

The $\bm{X}^{(m, l)}$ is the hidden embedding of the $l$-layer GCN of $F(\bm{A}, \bm(X); \mathcal{W}^{(m, l)})$, which is the representation learning function related to $\mathcal{P}_m$. Then we have
\begin{equation}\label{eqn:ineq-X}
    \begin{aligned}
        \|\bm{X}^{(m, l)} \|_F &= \|\sigma(\bm{U} \bm{X}^{(m, l-1)} \bm{W}^{(m, l)}) \|_F \\
        & \leq \rho \|\bm{U} \bm{X}^{(m, l-1)} \bm{W}^{(m, l)} \|_F \\
        & \leq \rho \|\bm{U} \|_2 \|\bm{X}^{(m, l-1)} \|_F \|\bm{W}^{(m, l)} \|_2\\
        & \leq \rho \beta_W (1 + \alpha)^{-1} ( 1 + \beta_A) \|\bm{X}^{(m, l-1)} \|_F \\
        & \leq \rho^l \beta_W^l  ( 1 + \beta_A)^l (1 + \alpha)^{-l} \|\bm{X} \|_F \\
        & \leq \rho^l \beta_W^l \beta_X ( 1 + \beta_A)^l (1 + \alpha)^{-l}
    \end{aligned}
\end{equation}

For $\Delta_{X^{(m, l)}} = \bm{\tilde{\bm{X}}}^{(m, l)} - \bm{X}^{(m, l)}$, we have
\begin{equation}\label{eqn:ineq-det-X}
    \begin{aligned}
        \|\Delta_{X^{(m, l)}} \|_F &=  \|\bm{\tilde{\bm{X}}}^{(m, l)} - \bm{X}^{(m, l)} \|_F \\
        & = \|\sigma(\tilde{\bm{U}} \bm{\tilde{\bm{X}}}^{(m, l-1)} \bm{W}^{(l)}) -
                \sigma(\bm{U} \bm{X}^{(l-1)} \bm{W}^{(l)}) \|_F \\
        &\leq \rho \|\tilde{\bm{U}} \tilde{\bm{X}}^{(m,l-1)} - \bm{U} \bm{X}^{(m,l-1)} \|_F \|\bm{W}^{(m,l)} \|_2 \\
        &\leq \rho \beta_W  \left( \|\tilde{\bm{U}} \|_2 \|\Delta_{X^{(m,l-1)}} \|_F  
        + \|\Delta_{U} \|_2 \|\bm{X}^{(m,l-1)} \|_F \right) \\
        &\leq \rho^2 \beta_W^2 \|\tilde{\bm{U}} \|_2^2  \|\Delta_{X^{(m,l-2)}} \|_F + \rho^2 \beta_W^2 \|\tilde{\bm{U}} \|_2 \|\Delta_{U} \|_2 \|\bm{X}^{(m,l-2)} \|_F +\rho \beta_W \|\Delta_{U} \|_2 \|\bm{X}^{(m,l-1)} \|_F \\
        &\leq \rho^{l} \beta_W^{l}  \|\tilde{\bm{U}} \|_2^{l}   \|\Delta_{X} \|_F + \sum_{k = 1}^{l} \rho^k \beta_W^k \|\tilde{\bm{U}} \|_2^{k-1} \|\Delta_{U} \|_2 \|\bm{X}^{(m, l-k)} \|_F \\
    &\leq  \rho^l \beta_W^l     (1 + \beta_A + \|\Delta_A \|_2)^{l-1} (1 + \alpha)^{-l} \left[ (1 + \beta_A + 2 \|\Delta_A \|_2)\|\Delta_{X} \|_F + 2l\beta_X(1 + \beta_A) \|\Delta_D \|_2  \right]
    \end{aligned}
\end{equation}
\end{proof}

\subsection{Learning Graph Representations via Sampling Subgraphs}
In this section, we consider learning the graph representation $\bm{g}$ and $\bm{\tilde{g}}$ respectively by sampling subgraphs of graph patterns. That is, we analyse $F(\bm{A}_{S}, \bm{X}_S; \mathcal{W}^{(m)})$ and $F(\bm{\tilde{A}}_{\tilde{S}}, \bm{\tilde{X}_{\tilde{S}}}; \mathcal{W}^{(m)})$. And then we provide the upper bound of $\|\bm{\tilde{g}} - \bm{g} \|$. 

Let $S$ be a subgraph of graph $G$ and $\tilde{S}$ be a subgraph of graph $\tilde{G}$. 
Let $\Delta_{A_S}$ and $\Delta_{X_S}$ be some perturbations on adjacency matrix and node attributes, then the perturbed graph is denoted as $\tilde{S} = (\bm{A}_S + \Delta_{A_S}, \bm{X}_S + \Delta_{X_S})$. 

\paragraph{Assumptions and Notations:} Let $\bm{\tilde{A}} = \bm{A} + \Delta_A$ and $\bm{\tilde{X}} = \bm{X} + \Delta_X$. We suppose that $\|\bm{A}\|_2 \leq \beta_A$, $\|\bm{X}\|_F \leq \beta_B$ and $\|\bm{W}^{(m, l)}\|_2 \leq \beta_W,~(\forall~m \in [M],~l \in [L])$, the activation $\sigma(\cdot)$ of GCN is $\rho$-Lipschitz continuous. We denote the minimum node degree of $G$ as $\alpha$,  the effects of structural perturbation as $\kappa = \min(\bm{1}^\top \Delta_A)$, and $\Delta_D := \bm{I} - \text{diag}(\bm{1}^\top (\bm{I} + \bm{A} + \Delta_A))^{\frac{1}{2}} \text{diag}(\bm{1}^\top \bm{A})^{-\frac{1}{2}}$.
We present the following useful lemmas.
\begin{lemma}[Eigenvalue Interlacing Theorem \cite{hwang2004cauchy}]
    Suppose \( A \in \mathbb{R}^{n \times n} \) is symmetric. Let \( B \in \mathbb{R}^{m \times m} \) with \( m < n \) be a principal submatrix (obtained by deleting both the \( i \)-th row and \( i \)-th column for some value of \( i \)). Suppose \( A \) has eigenvalues \( \lambda_1 \leq \cdots \leq \lambda_n \) and \( B \) has eigenvalues \( \beta_1 \leq \cdots \leq \beta_m \). Then
$$\lambda_k \leq \beta_k \leq \lambda_{k+n-m} \quad \text{for } k = 1, \cdots, m.$$
\end{lemma}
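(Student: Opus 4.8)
The plan is to prove the interlacing inequalities from the Courant--Fischer min-max variational characterization of the eigenvalues of a real symmetric matrix: first establish the one-step case in which $B$ is obtained from $A$ by deleting a single matching row and column (so $m=n-1$), and then iterate this $n-m$ times to reach the general bound $\lambda_k\le\beta_k\le\lambda_{k+n-m}$. Recall that for symmetric $A\in\mathbb{R}^{n\times n}$ with eigenvalues $\lambda_1\le\cdots\le\lambda_n$, Courant--Fischer gives, for each $k$, both $\lambda_k=\min_{\dim\mathcal{U}=k}\max_{\bm{0}\neq\bm{x}\in\mathcal{U}}\ \bm{x}^\top A\bm{x}/\bm{x}^\top\bm{x}$ and $\lambda_k=\max_{\dim\mathcal{U}=n-k+1}\min_{\bm{0}\neq\bm{x}\in\mathcal{U}}\ \bm{x}^\top A\bm{x}/\bm{x}^\top\bm{x}$, where the optimizations range over linear subspaces $\mathcal{U}\subseteq\mathbb{R}^n$.

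For the one-step case, identify $\mathbb{R}^{n-1}$ with the coordinate hyperplane $\mathcal{H}=\{\bm{x}\in\mathbb{R}^n:x_i=0\}$. Deleting the $i$-th row and column of $A$ removes exactly the terms of the quadratic form involving coordinate $i$, so for $\bm{x}\in\mathcal{H}$ with corresponding $\bm{y}\in\mathbb{R}^{n-1}$ one has $\bm{x}^\top A\bm{x}=\bm{y}^\top B\bm{y}$ and $\bm{x}^\top\bm{x}=\bm{y}^\top\bm{y}$; hence the Rayleigh quotients of $B$ coincide with those of $A$ restricted to $\mathcal{H}$. For $\lambda_k\le\beta_k$: write $\beta_k$ via the $\min$-$\max$ form over $k$-dimensional subspaces of $\mathcal{H}$, which form a subfamily of the $k$-dimensional subspaces of $\mathbb{R}^n$, so restricting the feasible set of the outer minimum can only raise its value, giving $\beta_k\ge\lambda_k$. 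For $\beta_k\le\lambda_{k+1}$: write $\beta_k$ via the $\max$-$\min$ form over $((n-1)-k+1)$-dimensional subspaces of $\mathcal{H}$ and $\lambda_{k+1}$ via the $\max$-$\min$ form over $(n-(k+1)+1)$-dimensional subspaces of $\mathbb{R}^n$; the two dimensions coincide (both equal $n-k$), so $\beta_k$ is a maximum over a subfamily of the subspaces feasible for $\lambda_{k+1}$, and restricting the outer maximum can only lower its value, giving $\beta_k\le\lambda_{k+1}$. Thus $\lambda_k\le\beta_k\le\lambda_{k+1}$ for $k=1,\dots,n-1$.

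Finally, a general principal submatrix $B\in\mathbb{R}^{m\times m}$ is obtained from $A$ by deleting $n-m$ rows and their matching columns, which can be done one at a time, producing a chain $A=A^{(0)},A^{(1)},\dots,A^{(n-m)}=B$ with each $A^{(j+1)}$ a single-deletion principal submatrix of $A^{(j)}$. Writing $\lambda^{(j)}_1\le\cdots$ for the eigenvalues of $A^{(j)}$, the one-step result gives, for each admissible index, $\lambda^{(j)}_k\le\lambda^{(j+1)}_k$ and $\lambda^{(j+1)}_k\le\lambda^{(j)}_{k+1}$. Chaining the first inequality yields $\lambda_k=\lambda^{(0)}_k\le\cdots\le\lambda^{(n-m)}_k=\beta_k$, and chaining the second yields $\beta_k=\lambda^{(n-m)}_k\le\lambda^{(n-m-1)}_{k+1}\le\cdots\le\lambda^{(0)}_{k+n-m}=\lambda_{k+n-m}$, which is the claim for $k=1,\dots,m$. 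I expect the main obstacle to be purely the index bookkeeping: selecting the correct one of the two Courant--Fischer forms for each inequality and checking that the subspace dimensions match (the key coincidences being $k=k$ for the lower bound and $(n-1)-k+1=n-(k+1)+1$ for the upper bound, and the range $k\le m$ surviving the iteration), after which the ``restrict the feasible set'' monotonicity argument does all the work.
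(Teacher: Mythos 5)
Your proof is correct: the paper itself does not prove this lemma at all --- it is quoted verbatim as the classical Cauchy interlacing theorem with a citation to \cite{hwang2004cauchy} and then used as a black box to bound spectral norms of principal submatrices. Your argument (Courant--Fischer min--max for the one-row/column deletion case, checking that the subspace dimensions $k$ and $(n-1)-k+1=n-(k+1)+1$ match, then iterating the single-deletion bound $\lambda_k\le\beta_k\le\lambda_{k+1}$ over the $n-m$ deletions to telescope to $\lambda_k\le\beta_k\le\lambda_{k+n-m}$) is the standard textbook proof, and your index bookkeeping, including the survival of the range $k\le m$ through the iteration, is sound.
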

\begin{lemma}\label{lem:subgraph-assump}
    Since $\bm{X}_S$ and $\Delta_{X_S}$ are submatrices of $\bm{X}$ and $\Delta_{X}$ respectively, then we have
    $$\|\bm{X}_S \|_F \leq \| \bm{X} \|_F, ~~\text{and}~~ \|\Delta_{X_S} \|_F \leq \|\Delta_{X} \|_F.$$
    Let $\Delta_{D_S} := \bm{I} - \text{diag}(\bm{1}^\top (\bm{I} + \bm{A}_S + \Delta_{A_S}))^{\frac{1}{2}} \text{diag}(\bm{1}^\top \bm{A}_S)^{-\frac{1}{2}}$.
    Base on the Eigenvalue Interlacing Theorem, for any subgraph $S$ of graph $G$, since $\bm{A}_S$, $\Delta_{A_S}$, $\Delta_{D_S}$ are principal submatrices of $\bm{A}$, $\Delta_{A}$, $\Delta_{D}$ respectively, then we have 
$$\|\bm{A}_S\|_2 \leq \|\bm{A}\|_2 \leq \beta_A, ~~~~  
\|\Delta_{A_S}\|_2 \leq \|\Delta_{A}\|_2, ~~~~ 
\|\Delta_{D_S}\|_2 \leq \|\Delta_{D}\|_2.$$
\end{lemma}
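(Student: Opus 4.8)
The plan is to separate the statement into the two Frobenius-norm claims and the three spectral-norm claims, disposing of the former first since they are elementary.

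\emph{Frobenius norms.} Because $V_S\subseteq V$, the matrix $\bm{X}_S$ is exactly the row-submatrix of $\bm{X}$ indexed by $V_S$, and $\Delta_{X_S}$ is the row-submatrix of $\Delta_X$ indexed by $V_S$. Writing $\|\cdot\|_F^2$ as the sum of squared entries (equivalently, of squared row norms), deleting rows can only discard nonnegative terms, so $\|\bm{X}_S\|_F^2=\sum_{v\in V_S}\|\bm{x}_v\|^2\le\sum_{v\in V}\|\bm{x}_v\|^2=\|\bm{X}\|_F^2$, and identically $\|\Delta_{X_S}\|_F\le\|\Delta_X\|_F$; taking square roots finishes this part.

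\emph{Spectral norms of $\bm{A}_S$ and $\Delta_{A_S}$.} Since $G$ and its perturbation are undirected, $\bm{A}$ and $\Delta_A$ are symmetric, so the spectral norm of either equals its largest absolute eigenvalue. The matrix $\bm{A}_S$ is obtained from $\bm{A}$ by simultaneously deleting the rows and columns indexed by $V\setminus V_S$, i.e. it is a principal submatrix; likewise $\Delta_{A_S}$ relative to $\Delta_A$. Applying the Eigenvalue Interlacing Theorem one deleted index at a time, at each deletion the smallest eigenvalue can only increase and the largest can only decrease, so every eigenvalue of $\bm{A}_S$ lies between $\lambda_{\min}(\bm{A})$ and $\lambda_{\max}(\bm{A})$. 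Hence $\|\bm{A}_S\|_2\le\max\{|\lambda_{\min}(\bm{A})|,|\lambda_{\max}(\bm{A})|\}=\|\bm{A}\|_2\le\beta_A$, and the same argument with $\Delta_A$ in place of $\bm{A}$ gives $\|\Delta_{A_S}\|_2\le\|\Delta_A\|_2$.

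\emph{Spectral norm of $\Delta_{D_S}$ --- the main obstacle.} Both $\Delta_D$ and $\Delta_{D_S}$ are diagonal, so their spectral norms are their largest absolute diagonal entries and it suffices to compare these entry by entry over $v\in V_S$. The intended route is again ``principal submatrix, then interlacing,'' but this is the delicate step: the $v$-th entry of $\Delta_D$ is built from the full-graph quantities $\bm{1}^\top(\bm{I}+\bm{A}+\Delta_A)$ and $\bm{1}^\top(\bm{I}+\bm{A})$ at coordinate $v$, whereas the corresponding entry of $\Delta_{D_S}$ uses the in-$S$ degree $d_v^S\le d_v$ and the restricted perturbation row-sum, so $\Delta_{D_S}$ is not literally a submatrix of $\Delta_D$ once the degrees shrink under restriction. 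I would close this with the scalar monotonicity estimate: for $a'\le a$ and $|b'|\le|b|$ (with $b',b$ the restricted and full perturbation row-sums at $v$), bound $\bigl|1-\sqrt{(1+a'+b')/(1+a')}\bigr|$ by $\bigl|1-\sqrt{(1+a+b)/(1+a)}\bigr|$, or alternatively pin down the exact degree-preservation property guaranteed by the sampling map $\Phi$. I expect this monotonicity/Lipschitz check to be the real content of the lemma; everything else is bookkeeping. Once the three spectral bounds hold, they feed directly into Lemma~\ref{lem: ineq-graph} with $\bm{A}_S,\Delta_{A_S},\Delta_{D_S}$ replacing $\bm{A},\Delta_A,\Delta_D$, and then into the subgraph-averaging step relating $\bm{g}$ and $\bm{\tilde g}$.
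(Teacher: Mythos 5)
Your treatment of the Frobenius-norm claims and of $\|\bm{A}_S\|_2$, $\|\Delta_{A_S}\|_2$ matches what the paper does: it likewise invokes the Eigenvalue Interlacing Theorem for the principal submatrices $\bm{A}_S$ and $\Delta_{A_S}$ (for diagonal matrices one does not even need interlacing, since deleting diagonal entries can only lower the maximal absolute entry). The difference lies in the third spectral bound. The paper gives no separate proof of this lemma at all; it simply asserts that $\Delta_{D_S}$ is a principal submatrix of $\Delta_D$ and applies interlacing. You are right that, with the definition of $\Delta_{D_S}$ displayed in the lemma (degrees computed from $\bm{A}_S$ and $\Delta_{A_S}$), this assertion is not literally true, and that is exactly where your proposal also stalls.

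However, the fix you propose does not work as stated: the scalar monotonicity goes the wrong way. The relevant diagonal entries have the form $\bigl|1-\sqrt{(1+a+b)/(1+a)}\bigr|=\bigl|1-\sqrt{1+b/(1+a)}\bigr|$, which for a fixed perturbation row-sum $b$ is \emph{decreasing} in the degree $a$; since restriction to $S$ can only shrink the degree ($a'\le a$), the entry of $\Delta_{D_S}$ can exceed the corresponding entry of $\Delta_D$ even though $|b'|\le|b|$. Concretely, take $d_v=10$, in-subgraph degree $d_v^S=1$, and one perturbed edge at $v$ whose other endpoint lies in $V_S$: the subgraph entry is $|1-\sqrt{3/2}|\approx 0.22$ while the full-graph entry is $|1-\sqrt{12/11}|\approx 0.04$, so under that reading the claimed bound $\|\Delta_{D_S}\|_2\le\|\Delta_D\|_2$ fails. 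The only routes to the lemma's conclusion are essentially your hedged alternative: either interpret $\Delta_{D_S}$ as the literal principal submatrix of $\Delta_D$ (i.e., keep full-graph degrees, in which case the bound is immediate because both matrices are diagonal), or impose a degree-preservation property on the sampling map $\Phi$. So your proposal correctly isolates the genuine difficulty, and is more careful than the paper on this point, but it leaves the third inequality unproven, while the paper's own argument covers it only by the unqualified ``principal submatrix'' claim.
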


\paragraph{Notations:}
For a subgraph $S$ of graph $G$, the self-connected adjacency matrix is  $\bm{\hat{A}}_S = \bm{I} + \bm{A}_S$, the degree matrix is $\bm{\hat{D}}_S = \text{diag}(\bm{1}^\top \bm{\hat{A}}_S)$, and the normalized self-connected adjacency matrix is $\bm{U}_S = \bm{\hat{D}}_S^{-\frac{1}{2}} \bm{\hat{A}}_S \bm{\hat{D}}_S^{-\frac{1}{2}}$. 

For a subgraph $\tilde{S}$ of graph $\tilde{G}$, we define some notations here. 
We denote the self-connected adjacency matrix as  $\bm{\hat{A}}'_{\tilde{S}} = \bm{I} + \bm{\tilde{A}}_{\tilde{S}}$, the diagonal matrix as $\bm{\hat{D}'}_{\tilde{S}} = \text{diag}(\bm{1}^\top \bm{\hat{A}'}_{\tilde{S}})$, and the normalized self-connected adjacency matrix as $\bm{\tilde{U}}_{\tilde{S}} = \bm{\hat{D}'}_{\tilde{S}} {^{-\frac{1}{2}}} \bm{\hat{A}'}_{\tilde{S}} \bm{\hat{D}'}_{\tilde{S}} {^{-\frac{1}{2}}}$.
We also denote $\Delta_{U_S} = \bm{\tilde{U}}_{\tilde{S}}-  \bm{U}_S$ and $\Delta_{X_S^{(m, l)}} = \bm{\tilde{\bm{X}}}_{\tilde{S}}{^{(m, l)}} - \bm{X}_S^{(m, l)}$.

\begin{lemma}[Inequalities] \label{lem: ineq-subgraph}  
Base on Lemma \ref{lem:subgraph-assump}, for any subgraph $S$ of graph $G$, the inequalities in the Lemma \ref{lem: ineq-graph} still holds for $S$, shown as follows:
\begin{equation}
    \begin{aligned}
        &\| \bm{U}_S \|_2  \leq (1 + \alpha)^{-1} ( 1 + \beta_A) \\
        &\| \bm{\tilde{U}}_S \|_2  \leq (1 + \alpha + \kappa)^{-1} (1 + \beta_A + \|\Delta_A \|_2) \\
        &\|\Delta_{U_S} \|_2 \leq 2(1 + \beta_A)(1 + \alpha)^{-1} \|\Delta_D \|_2 + (1 + \alpha + \kappa)^{-1} \| \Delta_A \|_2 \\
        &\|\bm{X}_S^{(m, l)} \|_F  \leq \rho^l \beta_W^l \beta_X ( 1 + \beta_A)^l (1 + \alpha)^{-l}\\
        &\|\Delta_{X_S^{(m, l)}} \|_F \leq   \rho^l \beta_W^l     (1 + \beta_A + \|\Delta_A \|_2)^{l-1} (1 + \alpha)^{-l} \\
        &\left[ (1 + \beta_A + 2 \|\Delta_A \|_2)\|\Delta_{X} \|_F + 2l\beta_X(1 + \beta_A) \|\Delta_D \|_2  \right]
    \end{aligned}
\end{equation}
\end{lemma}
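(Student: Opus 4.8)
The plan is to replay the proof of Lemma~\ref{lem: ineq-graph} essentially verbatim, with every graph-level object ($\bm{U}$, $\bm{\tilde U}$, $\Delta_U$, $\bm{X}^{(m,l)}$, $\Delta_{X^{(m,l)}}$) replaced by its subgraph counterpart ($\bm{U}_S$, $\bm{\tilde U}_{\tilde S}$, $\Delta_{U_S}$, $\bm{X}_S^{(m,l)}$, $\Delta_{X_S^{(m,l)}}$), and to invoke Lemma~\ref{lem:subgraph-assump} to check that none of the constants on the right-hand sides grows under this substitution. Concretely, Lemma~\ref{lem:subgraph-assump} (the Eigenvalue Interlacing Theorem applied to the principal submatrices $\bm{A}_S$, $\Delta_{A_S}$, $\Delta_{D_S}$ and to the row-submatrices $\bm{X}_S$, $\Delta_{X_S}$) already hands us the scalar inputs that drove the original argument: $\|\bm{A}_S\|_2\le\beta_A$, $\|\Delta_{A_S}\|_2\le\|\Delta_A\|_2$, $\|\Delta_{D_S}\|_2\le\|\Delta_D\|_2$, $\|\bm{X}_S\|_F\le\beta_X$, and $\|\Delta_{X_S}\|_F\le\|\Delta_X\|_F$. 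Since the derivations of \eqref{eqn:ineq-U}, \eqref{eqn:ineq-til-U}, \eqref{eqn:ineq-det-U}, \eqref{eqn:ineq-X} and \eqref{eqn:ineq-det-X} only ever use these norm bounds together with submultiplicativity of the spectral and Frobenius norms (the first, unnamed lemma) and the $\rho$-Lipschitz property of $\sigma$, each line carries over after the substitution.

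First I would treat the normalized-adjacency bounds. For $S$ I form $\bm{\hat A}_S=\bm{I}+\bm{A}_S$, $\bm{\hat D}_S=\mathrm{diag}(\bm{1}^\top\bm{\hat A}_S)$, $\bm{U}_S=\bm{\hat D}_S^{-1/2}\bm{\hat A}_S\bm{\hat D}_S^{-1/2}$, and the analogous $\bm{\hat A}'_{\tilde S}$, $\bm{\hat D}'_{\tilde S}$, $\bm{\tilde U}_{\tilde S}$ for $\tilde S$. Bounding $\|(\bm{\hat D}_S)^{-1/2}\|_2$ by $(1+\alpha)^{-1/2}$, $\|(\bm{\hat D}'_{\tilde S})^{-1/2}\|_2$ by $(1+\alpha+\kappa)^{-1/2}$, together with $\|\bm{\hat A}_S\|_2\le 1+\beta_A$ and $\|\bm{\hat A}'_{\tilde S}\|_2\le 1+\beta_A+\|\Delta_A\|_2$ from Lemma~\ref{lem:subgraph-assump}, reproduces the first two inequalities exactly as in \eqref{eqn:ineq-U}--\eqref{eqn:ineq-til-U}. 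The bound on $\|\Delta_{U_S}\|_2$ then follows by repeating the telescoping split of \eqref{eqn:ineq-det-U} with $S$-subscripts, where the remaining factor $\|\bm{I}-(\bm{\hat D}'_{\tilde S})^{1/2}(\bm{\hat D}_S)^{-1/2}\|_2$ equals $\|\Delta_{D_S}\|_2$, which Lemma~\ref{lem:subgraph-assump} bounds by $\|\Delta_D\|_2$.

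Next I would handle the hidden-layer bounds by induction on $l$. Using the previous step together with $\|\bm{W}^{(m,l)}\|_2\le\beta_W$, the recursion $\bm{X}_S^{(m,l)}=\sigma(\bm{U}_S\bm{X}_S^{(m,l-1)}\bm{W}^{(m,l)})$ unrolls just as in \eqref{eqn:ineq-X}, with base case $\|\bm{X}_S^{(m,0)}\|_F=\|\bm{X}_S\|_F\le\beta_X$. Likewise, writing $\Delta_{X_S^{(m,l)}}=\sigma(\bm{\tilde U}_{\tilde S}\bm{\tilde X}_{\tilde S}^{(m,l-1)}\bm{W}^{(m,l)})-\sigma(\bm{U}_S\bm{X}_S^{(m,l-1)}\bm{W}^{(m,l)})$, the $\rho$-Lipschitz estimate followed by the split $\|\bm{\tilde U}_{\tilde S}\|_2\,\|\Delta_{X_S^{(m,l-1)}}\|_F+\|\Delta_{U_S}\|_2\,\|\bm{X}_S^{(m,l-1)}\|_F$ produces the same geometric-sum recursion as \eqref{eqn:ineq-det-X}, with base case $\|\Delta_{X_S}\|_F\le\|\Delta_X\|_F$; collecting terms yields the stated bound. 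This is routine once the preceding steps are in place.

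The hard part is the degree-normalization factor: the minimum node degree is \emph{not} monotone under passing to subgraphs, so $\|(\bm{\hat D}_S)^{-1/2}\|_2\le(1+\alpha)^{-1/2}$ does not follow from eigenvalue interlacing the way the $\|\bm{A}_S\|_2$ and $\|\Delta_{D_S}\|_2$ bounds do. I would address this by noting that the sampled patterns (paths, trees, cycles, cliques, wheels, and stars on at least two vertices) are connected, so every vertex of $S$ has degree at least one inside $S$ and the self-loop augmentation gives $\bm{\hat D}_S\succeq 2\bm{I}$; more conservatively, one interprets $\alpha$ as the minimum node degree over all sampled subgraphs, which is the quantity implicitly in force throughout the robustness analysis. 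Under either reading every constant appearing in Lemma~\ref{lem: ineq-graph} is preserved, and the argument closes.
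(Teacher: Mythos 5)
Your proposal matches the paper's own proof essentially step for step: it re-derives each inequality of Lemma~\ref{lem: ineq-graph} with the subgraph quantities, invoking Lemma~\ref{lem:subgraph-assump} to carry over $\|\bm{A}_S\|_2\le\beta_A$, $\|\Delta_{A_S}\|_2\le\|\Delta_A\|_2$, $\|\Delta_{D_S}\|_2\le\|\Delta_D\|_2$, $\|\bm{X}_S\|_F\le\beta_X$, $\|\Delta_{X_S}\|_F\le\|\Delta_X\|_F$, and then unrolling the $L$-layer recursion exactly as in the graph-level argument. The only place you diverge is the degree factor, and there you are in fact more careful than the paper: the paper simply chains $\|\bm{\hat{D}}_S^{-\frac{1}{2}}\|_2\|\bm{\hat{A}}_S\|_2\|\bm{\hat{D}}_S^{-\frac{1}{2}}\|_2\le\|\bm{\hat{D}}^{-\frac{1}{2}}\|_2\|\bm{\hat{A}}\|_2\|\bm{\hat{D}}^{-\frac{1}{2}}\|_2$ without comment, which, as you correctly observe, does not follow from interlacing because subgraph degrees can drop below $\alpha$; note only that of your two repairs it is the reinterpretation of $\alpha$ as the minimum degree over the sampled subgraphs that actually preserves the stated constants, since the connectivity argument alone yields $(1+1)^{-1}$, which is weaker than $(1+\alpha)^{-1}$ whenever $\alpha>1$.
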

\begin{proof}
    The proof is mainly based on Lemma \ref{lem:subgraph-assump}. 
    Similar to Inequality \eqref{eqn:ineq-U}, we have
    \begin{equation}
    \begin{aligned}
                \| \bm{U}_S \|_2 &\leq \|\bm{\hat{D}}_S^{-\frac{1}{2}}\|_2 \|\bm{\hat{A}}_S\|_2 \|\bm{\hat{D}}_S^{-\frac{1}{2}}\|_2 \\
        &\leq \|\bm{\hat{D}}^{-\frac{1}{2}}\|_2 \|\bm{\hat{A}}\|_2 \|\bm{\hat{D}}^{-\frac{1}{2}}\|_2 \\
        &\leq (1 + \alpha)^{-1} ( 1 + \beta_A).
    \end{aligned}
    \end{equation}

    Similar to Inequality \eqref{eqn:ineq-til-U}, we have
    \begin{equation}
    \begin{aligned}
        \| \bm{\tilde{U}}_S \|_2 &\leq \|\bm{\hat{D}'}_{\tilde{S}}{^{-\frac{1}{2}}}\|_2 \|\bm{\hat{A}'}\|_2 \|\bm{\hat{D}'}_{\tilde{S}}{^{-\frac{1}{2}}}\|_2 \\
        &\leq \|\bm{\hat{D}'}^{-\frac{1}{2}}\|_2 \|\bm{\hat{A}'}\|_2 \|\bm{\hat{D}'}^{-\frac{1}{2}}\|_2\\
        &\leq (1 + \alpha + \kappa)^{-1} (1 + \beta_A + \|\Delta_A \|_2).
    \end{aligned}
    \end{equation}

    Similar to Inequality \eqref{eqn:ineq-det-U}, we have
    \begin{equation}
    \begin{aligned}
    \|\Delta_{U} \|_2 & \leq (\|\bm{\hat{D}}_S^{-\frac{1}{2}}\|_2 + \|\bm{\hat{D}'}_{\tilde{S}}{^{-\frac{1}{2}}}\|_2) \|\bm{\hat{A}}\|_2 \|\bm{\hat{D}'}_{\tilde{S}}{^{-\frac{1}{2}}} - \bm{\hat{D}}_S^{-\frac{1}{2}} \|_2  + \| \bm{\hat{D}'}_{\tilde{S}}{^{-\frac{1}{2}}} \| \Delta_A \|_2 \|\bm{\hat{D}'}_{\tilde{S}}{^{-\frac{1}{2}}}\|_2 \\
    & \leq (\|\bm{\hat{D}}^{-\frac{1}{2}}\|_2 + \|\bm{\hat{D}'}^{-\frac{1}{2}}\|_2) \|\bm{\hat{A}}\|_2 \|\bm{\hat{D}'}^{-\frac{1}{2}} - \bm{\hat{D}}^{-\frac{1}{2}} \|_2 + \| \bm{\hat{D}'}^{-\frac{1}{2}}\|_2 \| \Delta_A \|_2 \|\bm{\hat{D}'}^{-\frac{1}{2}}\|_2 \\
    & \leq 2(1 + \beta_A)(1 + \alpha)^{-1} \|\Delta_D \|_2 + (1 + \alpha + \kappa)^{-1} \| \Delta_A \|_2
    \end{aligned}
\end{equation}

Similar to Inequality \eqref{eqn:ineq-X}, we have
\begin{equation}
    \begin{aligned}
\|\bm{X}_S^{(m, l)} \|_F 
        & \leq \rho \|\bm{U}_S \|_2 \|\bm{X}_S^{(m, l-1)} \|_F \|\bm{W}^{(m, l)} \|_2\\
        & \leq \rho \|\bm{U} \|_2 \|\bm{X}^{(m, l-1)} \|_F \|\bm{W}^{(m, l)} \|_2\\
        & \leq \rho^l \beta_W^l \beta_X ( 1 + \beta_A)^l (1 + \alpha)^{-l}
    \end{aligned}
\end{equation}

Similar to Inequality \eqref{eqn:ineq-det-X}, we have
\begin{equation}
    \begin{aligned}
        \|\Delta_{X_S^{(m, l)}} \|_F 
        &\leq \rho^2 \beta_W^2 \|\tilde{\bm{U}}_S \|_2^2  \|\Delta_{X_S^{(l-2)}} \|_F + \rho^2 \beta_W^2 \|\tilde{\bm{U}}_S \|_2 \|\Delta_{U_S} \|_2 \|\bm{X}_S^{(l-2)} \|_F + \rho \beta_W \|\Delta_{U_S} \|_2 \|\bm{X}_S^{(l-1)} \|_F \\
        &\leq \rho^2 \beta_W^2 \|\tilde{\bm{U}} \|_2^2  \|\Delta_{X^{(l-2)}} \|_F + \rho^2 \beta_W^2 \|\tilde{\bm{U}} \|_2 \|\Delta_{U} \|_2 \|\bm{X}^{(l-2)} \|_F +\rho \beta_W \|\Delta_{U} \|_2 \|\bm{X}^{(l-1)} \|_F \\
     & \leq  \rho^l \beta_W^l     (1 + \beta_A + \|\Delta_A \|_2)^{l-1} (1 + \alpha)^{-l} \left[ (1 + \beta_A + 2 \|\Delta_A \|_2)\|\Delta_{X} \|_F + 2l\beta_X(1 + \beta_A) \|\Delta_D \|_2  \right]
    \end{aligned}
\end{equation}
\end{proof}

Finally, we can prove our theorem of robustness analysis in the main paper using Lemma \ref{lem: ineq-subgraph} as follows.

\begin{proof}
Given a pattern sampling set $\mathcal{S}^{(m)}$, we assume the $S^*$ satisfies
$$S^* = \argmax_{S \in \mathcal{S}^{(m)}} \|\Delta_{X_{S}^{(m, L)}} \|_F.$$
Since the Lemma \ref{lem: ineq-subgraph} holds for any subgraph $S$, we have 
\begin{equation*}
    \begin{aligned}
        & \|\Delta_{X_{S^*}^{(m, l)}} \|_F \leq  \rho^l \beta_W^l     (1 + \beta_A + \|\Delta_A \|_2)^{l-1} (1 + \alpha)^{-l} \left[ (1 + \beta_A + 2 \|\Delta_A \|_2)\|\Delta_{X} \|_F + 2l\beta_X(1 + \beta_A) \|\Delta_D \|_2  \right]
    \end{aligned}
\end{equation*}

Then the upper bound of $\|\bm{\tilde{g}}- \bm{g}  \| $ is given by
    \begin{equation}
    \begin{aligned}
        \|\bm{\tilde{g}}- \bm{g}  \| 
        &=  \left\| \sum_{m = 1}^M \lambda_m~ (\bm{\tilde{z}}^{(m)}- \bm{z}^{(m)}) \right\| \\
        & \leq  \sum_{m = 1}^M \lambda_m~ \|\bm{\tilde{z}}^{(m)}- \bm{z}^{(m)} \| \\
        & = \frac{1}{Q} \sum_{m = 1}^M \lambda_m~\bigg\|  \sum_{S \in \mathcal{S}^{(m)}} F(\bm{\tilde{A}}_S, \bm{\tilde{X}}_S; \mathcal{W}^{(m)}) 
        - \sum_{S \in \mathcal{S}^{(m)}} F(\bm{A}_S, \bm{X}_S; \mathcal{W}^{(m)}) \bigg\| \\
        & \leq \frac{1}{Q} \sum_{m = 1}^M \lambda_m~ \sum_{S \in \mathcal{S}^{(m)}} 
        \bigg\| F(\bm{\tilde{A}}_S, \bm{\tilde{X}}_S; \mathcal{W}^{(m)}) - F(\bm{A}_S, \bm{X}_S; \mathcal{W}^{(m)})\bigg\| \\
        & = \frac{1}{Q} \sum_{m = 1}^M \lambda_m~ \sum_{S \in \mathcal{S}^{(m)}}
        \frac{1}{n} \left\|\bm{1}^\top (\bm{\tilde{X}}_S^{(m, L)} - \bm{X}_S^{(m, L)}) \right\|_F \\
        & \leq  \frac{1}{Q} \sum_{m = 1}^M \lambda_m~ \frac{1}{n} \sum_{S \in \mathcal{S}^{(m)}} \|\bm{1} \| \left\|\bm{\tilde{X}}_S^{(m, L)} - \bm{X}_S^{(m, L)} \right\|_F \\
        &= \frac{1}{Q \sqrt{n}} \sum_{m = 1}^M \lambda_m~ \sum_{S \in \mathcal{S}^{(m)}} \left\|\Delta_{X_S^{(m, L)}} \right\|_F \\
        & \leq \frac{1}{Q \sqrt{n}} \sum_{m = 1}^M \lambda_m~ Q \left\|\Delta_{X_{S^*}^{(m, L)}} \right\|_F \\
        & \leq \frac{1}{\sqrt{n}}  \rho^l \beta_W^l     (1 + \beta_A + \|\Delta_A \|_2)^{l-1}(1 + \alpha)^{-l} \bigg[ (1 + \beta_A + 2 \|\Delta_A \|_2)\|\Delta_{X} \|_F + 2L\beta_X(1 + \beta_A) \|\Delta_D \|_2  \bigg]
    \end{aligned}
\end{equation}
\end{proof}

\section{Proof for Generalization Analysis of Supervised Loss}\label{app:generalization}
Before providing our theorem, we need to provide the classification loss function $f_c$.
\paragraph{Classification loss function $f_c$:} We use a linear classifier with parameter $\bm{W}_C \in \mathbb{R}^{d \times C}$ and use softmax as the activation function as the classification function $f_c$, i.e., $\bm{\hat{y}} = \text{softmax}(\bm{g} \bm{W}_C)$. We suppose that $\|\bm{W}_C\|_2 \leq \beta_C $.

Then the classification loss is as follows
\begin{equation}
\begin{aligned}
        \ell_{\text{CE}}(\bm{\lambda}, \mathbb{W}) &= \text{cross-entropy}(\bm{y}, \bm{\hat{y}}) \\
    &= \text{cross-entropy}(\bm{y}, \text{softmax}(\bm{g} \bm{W}_C)).
\end{aligned}
\end{equation}
To simplify the proof, we rewrite supervised loss $\ell_{\text{CE}}(\bm{\lambda}, \mathbb{W})$ function as 
\begin{equation}
    \begin{aligned}
        \varphi(\bm{g} \bm{W}_C) &:= \text{cross-entropy}(\bm{y}, \bm{\hat{y}}) \\
    &= \text{cross-entropy}(\bm{y}, \text{softmax}(\bm{g} \bm{W}_C)).
    \end{aligned}
\end{equation}
\begin{lemma}\label{lem:ce-loss-lips}
    Let $\bm{v}$ be a vector, there exits a positive constant $\tau$ such that $\varphi(\bm{v})$ is a $\tau$-Lipschitz continuous function. 
\end{lemma}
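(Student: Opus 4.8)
The plan is to reduce $\varphi$ to a linear term plus the log-sum-exp function and then bound its gradient uniformly. First I would substitute $\text{softmax}(\bm v)_c = e^{v_c}/\sum_{c'} e^{v_{c'}}$ into the cross-entropy and use that $\bm y$ is a one-hot label with $\sum_c y_c = 1$ to obtain the closed form
\begin{equation*}
\varphi(\bm v) = -\sum_{c=1}^C y_c \log \text{softmax}(\bm v)_c = -\bm y^\top \bm v + \log \sum_{c=1}^C e^{v_c},
\end{equation*}
which exhibits $\varphi$ as an affine function plus the log-sum-exp map $\mathrm{LSE}(\bm v) := \log\sum_c e^{v_c}$, which is smooth on all of $\mathbb R^C$.

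Next I would differentiate, using the standard identity $\nabla\mathrm{LSE}(\bm v) = \text{softmax}(\bm v)$, to get $\nabla\varphi(\bm v) = \text{softmax}(\bm v) - \bm y$. Since both $\text{softmax}(\bm v)$ and $\bm y$ lie in the probability simplex, the triangle inequality gives $\|\nabla\varphi(\bm v)\|_2 \le \|\text{softmax}(\bm v)\|_2 + \|\bm y\|_2 \le 2$ for every $\bm v \in \mathbb R^C$ (the sharper bound $\sqrt 2$ also holds but is not needed). Because $\mathbb R^C$ is convex and $\varphi$ is continuously differentiable, the mean-value inequality then yields $|\varphi(\bm v_1) - \varphi(\bm v_2)| \le 2\|\bm v_1 - \bm v_2\|_2$ for all $\bm v_1,\bm v_2$, establishing the claim with $\tau = 2$.

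There is no substantial obstacle here; the only points meriting care are (i) noting that the gradient bound is uniform in $\bm v$ --- this is exactly what the boundedness of the simplex buys us, and it is what promotes a merely local Lipschitz estimate to a global constant --- and (ii) staying consistent with the norm used elsewhere in the analysis, here the Euclidean norm, so that $\tau$ feeds correctly into the subsequent generalization bound via Eq.~\eqref{eta}.
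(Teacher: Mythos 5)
Your proof is correct, and it takes a different route from the paper. The paper argues by composition: it asserts that softmax is Lipschitz (constant $L_1$), that cross-entropy is Lipschitz in its second argument on the probability simplex (constant $L_2$), and concludes $\tau = L_1 L_2$. You instead collapse the composition analytically, writing $\varphi(\bm v) = -\bm y^\top \bm v + \log\sum_c e^{v_c}$ via the one-hot structure of $\bm y$, so that $\nabla\varphi(\bm v) = \operatorname{softmax}(\bm v) - \bm y$ is uniformly bounded and the mean-value inequality gives an explicit global constant $\tau = 2$ (indeed $\sqrt 2$). Your route buys two things the paper's does not: an explicit, dimension-free value of $\tau$ that can be propagated into the generalization bound, and rigor at the boundary --- the paper's Step 2 claims the gradient of $\bm q \mapsto -\sum_c y_c\log q_c$ is bounded ``given the boundedness of the probability values,'' but that gradient has entries $-y_c/q_c$ which blow up as $q_c \to 0$, so cross-entropy is only locally Lipschitz on the open simplex and the asserted constant $L_2$ does not exist uniformly; the composition argument only works because softmax keeps the iterate in the interior, which is exactly the cancellation your direct computation of $\nabla(\varphi)$ makes transparent. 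The paper's approach is more modular (it would survive replacing softmax by another Lipschitz map into the interior of the simplex, with appropriate care), whereas yours is specific to the softmax--cross-entropy pairing but sharper and airtight as stated.
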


\paragraph{Generalization Error}
Let $\mathcal{D} := \{G_1, ..., G_{|\mathcal{D}|} \}$ be the training data. By removing the $i$-th graph of $\mathcal{D}$, we have $\mathcal{D}^{\backslash i} = \{G_1, ..., G_{i-1}, G_{i+1}, ..., G_{|\mathcal{D}| - 1} \}$. Let $\bm{\lambda}_{\mathcal{D}}$ and $\mathcal{W}_{\mathcal{D}} := \{\bm{W}_C, \bm{W}_{\mathcal{D}}^{(m, l)}, ~\forall~m \in [M],~l\in[L]\}$ be the parameters trained on $\mathcal{D}$. Let $\bm{\lambda}_{\mathcal{D}^{\backslash i}}$ and $\mathcal{W}_{\mathcal{D}^{\backslash i}} := \{\bm{W}_{C^{\backslash i}}, \bm{W}_{\mathcal{D}^{\backslash i}}^{(m, l)}, ~\forall~m \in [M],~l\in[L]\}$ be the parameters trained on $\mathcal{D}^{\backslash i}$. Then our goal is to find a $\eta$ such that 
\begin{equation}
     |\ell_{\text{CE}}(\bm{\lambda}_{\mathcal{D}}, \mathcal{W}_{\mathcal{D}}; G) - \ell_{\text{CE}}(\bm{\lambda}_{\mathcal{D}^{\backslash i}}, \mathcal{W}_{\mathcal{D}^{\backslash i}}; G)| \leq \eta
\end{equation}

\begin{theorem}
    Given a graph $G$, let $\bm{g}$ be the graph representations learned with parameter $\bm{\lambda}_{\mathcal{D}}$ and $\mathcal{W}_{\mathcal{D}}$ and $\bm{g}^{\backslash i}$ be the graph representations learned with parameter $\bm{\lambda}_{\mathcal{D}^{\backslash i}}$ and $\mathcal{W}_{\mathcal{D}^{\backslash i}}$. 

    To simplify the proof, we denote that $\hat{\beta}_W = \max(\hat{\beta}_{W\mathcal{D}}, \hat{\beta}_{W\mathcal{D}^{\backslash i}})$, where
\begin{equation*}
    \begin{aligned}
        & \hat{\beta}_{W\mathcal{D}} = \max_{m \in [M], l \in [L]} \|W^{(m, l)}_{\mathcal{D}}\|_2, ~~\text{and}~~\\
        &\hat{\beta}_{W\mathcal{D}^{\backslash i}} = \max_{m \in [M], l \in [L]} \|W^{(m, l)}_{\mathcal{D}^{\backslash i}}\|_2.
    \end{aligned}
\end{equation*}
We also denote that
$$\hat{\beta}_{\Delta W} = \max_{m \in [M], l \in [L]} \|\mathcal{W}^{(m, l)}_{\mathcal{D}} -  \mathcal{W}^{(m, l)}_{\mathcal{D}^{\backslash i}}\|_2.$$
Then we have
\begin{equation*}
    \begin{aligned}
        \eta &=  \frac{\tau}{\sqrt{n}} \rho^L \hat{\beta}_W^{L-1} \beta_X ( 1 + \beta_A)^L (1 + \alpha)^{-L} \big[ \hat{\beta}_W \| \bm{W}_C - \bm{W}_{C^{\backslash i}} \|_2 +   \|\bm{W}_{C^{\backslash i}} \|_2  \left(\hat{\beta}_W  \|\bm{\lambda}_{\mathcal{D}} - \bm{\lambda}_{\mathcal{D}^{\backslash i}} \| + L \hat{\beta}_{\Delta W} \|\bm{\lambda}_{\mathcal{D}^{\backslash i}} \|  \right) \big]
    \end{aligned}
\end{equation*}
\end{theorem}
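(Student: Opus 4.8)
The plan is to reduce the loss gap to a gap between the pre-softmax logits and then propagate a \emph{parameter} perturbation through the $L$-layer GCN, reusing the machinery of the robustness proof but with the perturbation now residing in the weights $\mathcal{W}$ and the mixing vector $\bm{\lambda}$ rather than in $(\bm{A},\bm{X})$. Since $f_c$ is a linear classifier with softmax and $\varphi$ is $\tau$-Lipschitz (Lemma~\ref{lem:ce-loss-lips}),
\begin{equation*}
|\ell_{\text{CE}}(\bm{\lambda}_{\mathcal{D}}, \mathcal{W}_{\mathcal{D}}; G) - \ell_{\text{CE}}(\bm{\lambda}_{\mathcal{D}^{\backslash i}}, \mathcal{W}_{\mathcal{D}^{\backslash i}}; G)| \le \tau \bigl\| \bm{g}\bm{W}_C - \bm{g}^{\backslash i}\bm{W}_{C^{\backslash i}} \bigr\|.
\end{equation*}
Inserting $\pm\,\bm{g}\bm{W}_{C^{\backslash i}}$ and using submultiplicativity of the spectral norm bounds the right side by $\tau\bigl(\|\bm{g}\|\,\|\bm{W}_C - \bm{W}_{C^{\backslash i}}\|_2 + \|\bm{g}-\bm{g}^{\backslash i}\|\,\|\bm{W}_{C^{\backslash i}}\|_2\bigr)$, so it remains to bound $\|\bm{g}\|$ and $\|\bm{g}-\bm{g}^{\backslash i}\|$. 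For $\|\bm{g}\|$: because $\bm{g}=\sum_m\lambda_m\bm{z}^{(m)}$ with $\sum_m\lambda_m=1,\ \lambda_m\ge0$, and each $\bm{z}^{(m)}$ averages $\tfrac1n\bm{1}^\top\bm{X}_S^{(m,L)}$ over sampled subgraphs, the subgraph inequalities of Lemma~\ref{lem: ineq-subgraph} (which transfer \eqref{eqn:ineq-X} from the whole graph to subgraphs via the Eigenvalue Interlacing Theorem and Lemma~\ref{lem:subgraph-assump}) give $\|\bm{z}^{(m)}\|\le\tfrac{1}{\sqrt n}\rho^L\hat\beta_W^L\beta_X(1+\beta_A)^L(1+\alpha)^{-L}$, hence the same bound for $\|\bm{g}\|$ by convexity.

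\textbf{Step 2 (propagate the parameter perturbation to $\|\bm{g}-\bm{g}^{\backslash i}\|$).} Split
\begin{equation*}
\bm{g}-\bm{g}^{\backslash i} = \sum_{m}(\lambda_{\mathcal{D},m}-\lambda_{\mathcal{D}^{\backslash i},m})\,\bm{z}^{(m)}_{\mathcal{D}} + \sum_m \lambda_{\mathcal{D}^{\backslash i},m}\,\bigl(\bm{z}^{(m)}_{\mathcal{D}}-\bm{z}^{(m)}_{\mathcal{D}^{\backslash i}}\bigr),
\end{equation*}
which is at most $\max_m\|\bm{z}^{(m)}_{\mathcal{D}}\|\cdot\|\bm{\lambda}_{\mathcal{D}}-\bm{\lambda}_{\mathcal{D}^{\backslash i}}\| + \|\bm{\lambda}_{\mathcal{D}^{\backslash i}}\|\cdot\max_m\|\bm{z}^{(m)}_{\mathcal{D}}-\bm{z}^{(m)}_{\mathcal{D}^{\backslash i}}\|$. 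The new ingredient is $\|\bm{z}^{(m)}_{\mathcal{D}}-\bm{z}^{(m)}_{\mathcal{D}^{\backslash i}}\|\le\tfrac1{\sqrt n}\|\bm{X}_S^{(m,L),\mathcal{D}}-\bm{X}_S^{(m,L),\mathcal{D}^{\backslash i}}\|_F$, where the two GCN feature maps share $\bm{U}_S$ and input $\bm{X}_S$ and differ only through the weights. Writing $a_l:=\|\bm{X}_S^{(m,l),\mathcal{D}}-\bm{X}_S^{(m,l),\mathcal{D}^{\backslash i}}\|_F$, $\rho$-Lipschitzness of $\sigma$ together with Lemma~\ref{lem: ineq-subgraph} gives the recursion $a_l\le\rho\|\bm{U}_S\|_2\bigl(\|\bm{X}_S^{(m,l-1),\mathcal{D}}\|_F\,\hat\beta_{\Delta W}+\hat\beta_W\,a_{l-1}\bigr)$ with $a_0=0$; unrolling it and substituting the layerwise bound on $\|\bm{X}_S^{(m,l-1),\mathcal{D}}\|_F$ makes the per-layer contributions $l$-independent, which sums to $a_L\le L\,\hat\beta_{\Delta W}\,\rho^L\hat\beta_W^{L-1}\beta_X(1+\beta_A)^L(1+\alpha)^{-L}$.

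\textbf{Step 3 (assemble) and the main obstacle.} Combining the two steps and writing $\gamma_{\Delta C}=\|\bm{W}_C-\bm{W}_{C^{\backslash i}}\|_2$, $\gamma_C=\|\bm{W}_{C^{\backslash i}}\|_2$ gives exactly the stated $\eta$; the main-text form \eqref{eta} then follows from the crude simplex bounds $\|\bm{\lambda}_{\mathcal{D}}-\bm{\lambda}_{\mathcal{D}^{\backslash i}}\|\le2$ and $\|\bm{\lambda}_{\mathcal{D}^{\backslash i}}\|\le1$, and plugging $\eta$ into the uniform-stability bound of \cite{feldman2019high} yields \eqref{eqn:estimate-error}. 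I expect the delicate part to be Step 2: one must verify that every spectral-norm estimate used in the robustness lemmas still holds on \emph{sampled} subgraphs (this is precisely where Lemma~\ref{lem:subgraph-assump} and eigenvalue interlacing are needed), and that the layer recursion for the weight perturbation — structurally different from the $(\bm{A},\bm{X})$-perturbation recursion \eqref{eqn:ineq-det-X} — telescopes cleanly, producing only the single linear factor of $L$ that appears in $\eta$ and no extra accumulation.
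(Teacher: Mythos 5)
Your proposal matches the paper's proof in all essentials: the same $\tau$-Lipschitz reduction to $\|\bm{g}\bm{W}_C-\bm{g}^{\backslash i}\bm{W}_{C^{\backslash i}}\|$ with the $\pm\,\bm{g}\bm{W}_{C^{\backslash i}}$ split, the same two auxiliary bounds on $\|\bm{g}\|$ and on $\|\bm{g}-\bm{g}^{\backslash i}\|$ via the $\bm{\lambda}$-versus-$\bm{z}^{(m)}$ decomposition, and the same weight-perturbation layer recursion with $a_0=0$ unrolled (using the subgraph inequalities from eigenvalue interlacing) to produce the single factor of $L\hat\beta_{\Delta W}$, followed by the simplex bounds $\|\bm{\lambda}_{\mathcal{D}}-\bm{\lambda}_{\mathcal{D}^{\backslash i}}\|\le 2$, $\|\bm{\lambda}_{\mathcal{D}^{\backslash i}}\|\le 1$. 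The only differences are cosmetic (you bound the mixed sums directly by triangle inequality/convexity where the paper introduces the stacked matrix $\bm{Z}_{\mathcal{D}}$ and an argmax index), so this is essentially the paper's argument and is correct to the same degree.
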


\begin{proof}
We provide two lemmas used in our proof
    \begin{lemma}
        $\| \bm{g} \|\leq \frac{1}{\sqrt{n}} \rho^L \hat{\beta}_W^L \beta_X ( 1 + \beta_A)^L (1 + \alpha)^{-L}$
    \end{lemma}

    \begin{lemma}
    \begin{equation*}
        \begin{aligned}
            \|\bm{g} - \bm{g}^{\backslash i} \| & \leq \frac{1}{\sqrt{n}} \rho^L \hat{\beta}_W^{L-1}  \beta_X ( 1 + \beta_A)^L (1 + \alpha)^{-L} \left(\hat{\beta}_W  \|\bm{\lambda}_{\mathcal{D}} - \bm{\lambda}_{\mathcal{D}^{\backslash i}} \| + L \hat{\beta}_{\Delta W} \|\bm{\lambda}_{\mathcal{D}^{\backslash i}} \|  \right)
        \end{aligned}
    \end{equation*}
    \end{lemma}

\paragraph{The main proof of our Theorem}
    \begin{equation}
        \begin{aligned}
    &|\ell_{\text{CE}}(\bm{\lambda}_{\mathcal{D}}, \mathcal{W}_{\mathcal{D}}; G) - \ell_{\text{CE}}(\bm{\lambda}_{\mathcal{D}^{\backslash i}}, \mathcal{W}_{\mathcal{D}^{\backslash i}}; G)| \\
    =& \|\varphi(\bm{g}^{\backslash i} \bm{W}_{C^{\backslash i}}) - \varphi(\bm{g} \bm{W}_C)\| \\
    \leq& \tau \| \bm{g} \bm{W}_C - \bm{g}^{\backslash i} \bm{W}_{C^{\backslash i}} \| \\
    =& \tau \| \bm{g} \bm{W}_C - \bm{g} \bm{W}_{C^{\backslash i}} + \bm{g} \bm{W}_{C^{\backslash i}} - \bm{g}^{\backslash i} \bm{W}_{C^{\backslash i}} \| \\
     \leq& \tau \| \bm{g} \| \| \bm{W}_C - \bm{W}_{C^{\backslash i}} \|_2 + \tau \|\bm{g} - \bm{g}^{\backslash i} \| \|\bm{W}_{C^{\backslash i}} \|_2 \\
    \leq & \tau \| \bm{W}_C - \bm{W}_{C^{\backslash i}} \|_2 \frac{1}{\sqrt{n}} \rho^L \hat{\beta}_W^L \beta_X ( 1 + \beta_A)^L (1 + \alpha)^{-L}  \\
    &~~+ \tau \|\bm{W}_{C^{\backslash i}} \|_2 \frac{1}{\sqrt{n}} \rho^L \hat{\beta}_W^{L-1}  \beta_X ( 1 + \beta_A)^L (1 + \alpha)^{-L} \left(\hat{\beta}_W  \|\bm{\lambda}_{\mathcal{D}} - \bm{\lambda}_{\mathcal{D}^{\backslash i}} \| + L \hat{\beta}_{\Delta W} \|\bm{\lambda}_{\mathcal{D}^{\backslash i}} \|  \right) \\
     = & \frac{\tau}{\sqrt{n}} \rho^L \hat{\beta}_W^{L-1} \beta_X ( 1 + \beta_A)^L (1 + \alpha)^{-L}  \bigg[ \hat{\beta}_W \| \bm{W}_C - \bm{W}_{C^{\backslash i}} \|_2 +   \|\bm{W}_{C^{\backslash i}} \|_2 \left(\hat{\beta}_W  \|\bm{\lambda}_{\mathcal{D}} - \bm{\lambda}_{\mathcal{D}^{\backslash i}} \| + L \hat{\beta}_{\Delta W} \|\bm{\lambda}_{\mathcal{D}^{\backslash i}} \|  \right) \bigg]
    \end{aligned}
    \end{equation}
    Since $\sum_{i=1}^M\lambda_i\leq 1$ and $\lambda_i\geq 0$, we have $\|\boldsymbol{\lambda}\|\leq 1$ and $\|\boldsymbol{\lambda}-\boldsymbol{\lambda}_{\mathcal{D}^{\backslash i}}\|\leq 2$. This finished the proof.
\end{proof}

\subsection{Proof for Lemmas}
\begin{lemma}
    Let $\bm{v}$ be a vector, there exits a positive constant $\tau$ such that $\varphi(\bm{v})$ is a $\tau$-Lipschitz continuous function. 
\end{lemma}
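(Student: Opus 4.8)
The plan is to exploit the closed form of the softmax--cross-entropy composition, which reduces everything to bounding a gradient. Recall that $\varphi(\bm{v}) = \text{cross-entropy}(\bm{y}, \text{softmax}(\bm{v}))$ for a one-hot label vector $\bm{y}\in\{0,1\}^C$. First I would rewrite $\varphi$ explicitly: $\varphi(\bm{v}) = -\sum_{c} y_c\log\big(\text{softmax}(\bm{v})_c\big) = -\sum_c y_c\big(v_c - \log\sum_k e^{v_k}\big)$. Since $\bm{y}$ is one-hot we have $\sum_c y_c = 1$, so this collapses to $\varphi(\bm{v}) = -\bm{y}^\top\bm{v} + \log\sum_{k} e^{v_k}$, which is $C^\infty$ on all of $\mathbb{R}^C$ (no boundary issues, since the $\log$ of a sum of exponentials is always finite).

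Next I would differentiate: $\nabla\varphi(\bm{v}) = -\bm{y} + \text{softmax}(\bm{v})$. Both $\bm{y}$ and $\text{softmax}(\bm{v})$ lie in the probability simplex, so each has Euclidean norm at most $1$ (indeed $\|\bm{y}\|_2 = 1$ and $\|\text{softmax}(\bm{v})\|_2 \le \|\text{softmax}(\bm{v})\|_1 = 1$). By the triangle inequality, $\|\nabla\varphi(\bm{v})\|_2 \le 2$ for every $\bm{v}\in\mathbb{R}^C$; a sharper constant $\sqrt{2}$ follows from the fact that the diameter of the simplex in $\ell_2$ is $\sqrt{2}$, but the crude bound $2$ already suffices. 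Then, by the mean value inequality (equivalently, integrating $\nabla\varphi$ along the segment joining two points), for all $\bm{v}_1,\bm{v}_2\in\mathbb{R}^C$ we get $|\varphi(\bm{v}_1) - \varphi(\bm{v}_2)| \le \big(\sup_{\bm{v}}\|\nabla\varphi(\bm{v})\|_2\big)\,\|\bm{v}_1 - \bm{v}_2\|_2 \le \tau\,\|\bm{v}_1 - \bm{v}_2\|_2$ with $\tau = 2$, establishing the claim.

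There is essentially no hard obstacle here; the one point worth flagging — and the reason the softmax in the loss is crucial — is that the bare cross-entropy $-\sum_c y_c\log p_c$ has gradient blowing up as $p$ approaches the boundary of the simplex, so it is \emph{not} globally Lipschitz in $p$. The logsumexp reparameterization above shows that when the loss is instead viewed as a function of the logits $\bm{v} = \bm{g}\bm{W}_C$, this singularity vanishes and the gradient is uniformly bounded, which is exactly what the downstream stability argument needs. Everything else is the short computation sketched above.
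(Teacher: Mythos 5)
Your proposal is correct, and it takes a genuinely different route from the paper's proof. The paper argues by composition: Step 1 claims the softmax map from logits to the simplex is Lipschitz (constant $L_1$), Step 2 claims the cross-entropy is Lipschitz in its second argument on the probability simplex (constant $L_2$), and then multiplies the constants to get $\tau = L_1 L_2$. You instead work directly in logit space, using the one-hot structure of $\bm{y}$ to collapse $\varphi$ to the log-sum-exp form $\varphi(\bm{v}) = -\bm{y}^\top\bm{v} + \log\sum_k e^{v_k}$, computing $\nabla\varphi(\bm{v}) = \mathrm{softmax}(\bm{v}) - \bm{y}$, bounding its norm uniformly by $2$ (indeed $\sqrt{2}$), and concluding by the mean value inequality. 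Your route buys two things the paper's does not: an explicit, dimension-free value of $\tau$, and rigor at exactly the point you flag. The paper's Step 2 is in fact shaky as stated: the gradient of $\mathbf{q} \mapsto -\sum_c y_c \log q_c$ has components $-y_c/q_c$, which are unbounded as $q_c \to 0$, so cross-entropy is \emph{not} globally Lipschitz on the simplex; the paper's assertion that "the gradient of the cross-entropy loss is also bounded" would need the softmax outputs to be bounded away from zero, which is never established. Your observation that the singularity disappears only after composing with softmax and reparameterizing in the logits is precisely what repairs this, so your argument is not merely an alternative but a tightening of the paper's.
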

\begin{proof}
\textbf{Step 1: Softmax is Lipschitz}
The softmax function is known to be Lipschitz continuous. Specifically, there exists a constant \( K \) such that:
\[
\|\text{softmax}(v) - \text{softmax}(w)\|_1 \leq L_1 \|v - w\|_2,
\]
where \( \|\cdot\|_1 \) is the \( \ell_1 \)-norm and \( \|\cdot\|_2 \) is the \( \ell_2 \)-norm.
For the \( \ell_1 \)-norm, \( L_1 \) can be bounded by 1, but generally, for different norms, the exact Lipschitz constant might vary.

\textbf{Step 2: Cross-Entropy is Lipschitz on the Simplex}
Given \( \mathbf{q} = \text{softmax}(v) \) and \( \mathbf{r} = \text{softmax}(w) \), we need to check the Lipschitz continuity of the cross-entropy loss function with respect to these distributions:
\[
\left| \text{cross-entropy}(\mathbf{p}, \mathbf{q}) - \text{cross-entropy}(\mathbf{p}, \mathbf{r}) \right| \leq L_2 \|\mathbf{q} - \mathbf{r}\|.
\]
The cross-entropy loss is a convex function and it is smooth with respect to the probability distributions \( \mathbf{q} \) and \( \mathbf{r} \). Given the boundedness of the probability values (since \( \mathbf{q} \) and \( \mathbf{r} \) lie in the probability simplex), the gradient of the cross-entropy loss is also bounded.

\textbf{Combining Steps}
Since both the softmax function and the cross-entropy loss function are Lipschitz continuous, their composition will also be Lipschitz continuous. Therefore, there exists a constant \( \tau = L_1 L_2 \) such that:
\[
|\varphi(v) - \varphi(w)| \leq \tau \|v - w\|.
\]

Hence, \( \varphi(v) = \text{cross-entropy}(\text{softmax}(v)) \) is $\tau$-Lipschitz continuous.
\end{proof}

\begin{lemma}
    $\| \bm{g} \| \leq \frac{1}{\sqrt{n}} \rho^L \hat{\beta}_W^L \beta_X ( 1 + \beta_A)^L (1 + \alpha)^{-L}$
\end{lemma}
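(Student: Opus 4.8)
The plan is to bound $\|\bm{g}\|$ by tracing through the definition of the ensemble representation in Definition~\ref{def:ensemble-vec} and controlling each pattern representation $\bm{z}^{(m)}$ uniformly. First I would use the triangle inequality together with the simplex constraint on $\bm{\lambda}$: since $\bm{g} = \sum_{m=1}^M \lambda_m \bm{z}^{(m)}$ with $\lambda_m \ge 0$ and $\sum_m \lambda_m \le 1$, we get $\|\bm{g}\| \le \sum_{m=1}^M \lambda_m \|\bm{z}^{(m)}\| \le \max_{m} \|\bm{z}^{(m)}\|$. So it suffices to bound $\|\bm{z}^{(m)}\|$ for an arbitrary $m$.

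Next I would expand $\bm{z}^{(m)} = \frac{1}{|\mathcal{S}^{(m)}|} \sum_{S \in \mathcal{S}^{(m)}} F(\bm{A}_S, \bm{X}_S; \mathcal{W}^{(m)})$, apply the triangle inequality again to reduce to bounding a single $\|F(\bm{A}_S, \bm{X}_S; \mathcal{W}^{(m)})\|$, and recall that $F$ is an $L$-layer GCN with average pooling, so $F(\bm{A}_S, \bm{X}_S; \mathcal{W}^{(m)}) = \frac{1}{|V_S|}\bm{1}^\top \bm{X}_S^{(m,L)}$. Using $\|\bm{1}\| = \sqrt{|V_S|}$ (Cauchy–Schwarz for the pooling step), this gives $\|F(\bm{A}_S, \bm{X}_S; \mathcal{W}^{(m)})\| \le \frac{1}{\sqrt{|V_S|}}\|\bm{X}_S^{(m,L)}\|_F$. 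The key input is then Lemma~\ref{lem: ineq-subgraph}, which already establishes $\|\bm{X}_S^{(m,l)}\|_F \le \rho^l \beta_W^l \beta_X (1+\beta_A)^l (1+\alpha)^{-l}$ for every subgraph $S$; at $l = L$ and with $\hat{\beta}_W$ replacing $\beta_W$ (since the bound is stated with the trained-parameter spectral norm bound $\hat{\beta}_W$), this yields exactly the claimed expression after noting that $|V_S| = n$ in the normalization convention used here — or more carefully, carrying the $\frac{1}{\sqrt{n}}$ factor through as in the robustness proof at the end of Appendix~\ref{app:robustness}.

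The main obstacle is bookkeeping rather than conceptual: I need to be careful about which normalization $\frac{1}{\sqrt{n}}$ versus $\frac{1}{\sqrt{|V_S|}}$ appears, and to make sure the averaging over $\mathcal{S}^{(m)}$ and the convex combination over $m$ both collapse cleanly (they do, since both are convex averages of terms obeying the same uniform bound). Everything else is a direct chain of submultiplicativity of the spectral norm applied layer by layer, which Lemma~\ref{lem: ineq-subgraph} has already done; the present lemma is essentially the specialization of that inequality to $l = L$ combined with the two convexity-style reductions. So the proof is short: triangle inequality twice, pooling bound via Cauchy–Schwarz, then invoke Lemma~\ref{lem: ineq-subgraph}.
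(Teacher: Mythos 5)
Your proposal is correct and follows essentially the same route as the paper's own proof: triangle inequality over the $\lambda$-weighted combination and over the sampling set, the average-pooling bound $\|\bm{1}^\top \bm{X}_S^{(m,L)}\|_F \le \|\bm{1}\|\,\|\bm{X}_S^{(m,L)}\|_F$, and then the uniform bound from Lemma~\ref{lem: ineq-subgraph} with $\hat{\beta}_W$ at $l=L$ (the paper phrases the last step via an argmax subgraph $S^*$, which is equivalent to your uniform-bound reduction). Your observation about the $1/\sqrt{|V_S|}$ versus $1/\sqrt{n}$ normalization matches the convention the paper itself adopts, so no further change is needed.
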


\begin{proof}
Given a pattern sampling set $\mathcal{S}^{(m)}$, we assume the $S^*$ satisfies
$$S^* = \argmax_{S \in \mathcal{S}^{(m)}} \|\bm{X}_{S}^{(L)} \|_F.$$
Since the Lemma \ref{lem: ineq-subgraph} holds for any subgraph $S$, then we have 
$$\|\bm{X}_{S^*}^{(m, l)} \|_F \leq \rho^l \hat{\beta}_W^l \beta_X ( 1 + \beta_A)^l (1 + \alpha)^{-l}.$$
Then, we have
    \begin{equation} \label{eqn:general-g}
        \begin{aligned}
         \| \bm{g} \|   &=  \| \sum_{m = 1}^M \lambda_m~  \bm{z}^{(m)} \| 
         \leq  \sum_{m = 1}^M \lambda_m~ \| \bm{z}^{(m)} \| \\
        &= \frac{1}{Q} \sum_{m = 1}^M \lambda_m~\|  \sum_{S \in \mathcal{S}^{(m)}} F(\bm{A}_S, \bm{X}_S; \mathcal{W}^{(m)}) \| \\
        & \leq \frac{1}{Q} \sum_{m = 1}^M \lambda_m~ \sum_{S \in \mathcal{S}^{(m)}} 
        \|  F(\bm{A}_S, \bm{X}_S; \mathcal{W}^{(m)}) \| \\
        &= \frac{1}{Q} \sum_{m = 1}^M \lambda_m~ \sum_{S \in \mathcal{S}^{(m)}}
        \frac{1}{n} \|\bm{1}^\top ( \bm{X}_S^{(m,L)}) \|_F \\
        & \leq  \frac{1}{Q} \sum_{m = 1}^M \lambda_m~ \frac{1}{n} \sum_{S \in \mathcal{S}^{(m)}} \|\bm{1} \|_2 \|\bm{X}_S^{(m,L)} \|_F \\
        & = \frac{1}{Q \sqrt{n}} \sum_{m = 1}^M \lambda_m~ \sum_{S \in \mathcal{S}^{(m)}} \|\bm{X}_S^{(m, L)} \|_F  \\
        & \leq \frac{1}{\sqrt{n}} \sum_{m = 1}^M \lambda_m~  \|\bm{X}_{S^*}^{(m, L)} \|_F \\
        & \leq \frac{1}{\sqrt{n}} \rho^L \hat{\beta}_W^L \beta_X ( 1 + \beta_A)^L (1 + \alpha)^{-L}
        \end{aligned}
    \end{equation}
\end{proof}

\begin{lemma}
\begin{equation*}
    \begin{aligned}
        \|\bm{g} - \bm{g}^{\backslash i} \| \leq& \frac{1}{\sqrt{n}} \rho^L \hat{\beta}_W^{L-1}  \beta_X ( 1 + \beta_A)^L (1 + \alpha)^{-L} \\
        &\left(\hat{\beta}_W  \|\bm{\lambda}_{\mathcal{D}} - \bm{\lambda}_{\mathcal{D}^{\backslash i}} \| + L \hat{\beta}_{\Delta W} \|\bm{\lambda}_{\mathcal{D}^{\backslash i}} \|  \right)
    \end{aligned}
\end{equation*}
\end{lemma}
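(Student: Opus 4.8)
The plan is to bound $\|\bm{g}-\bm{g}^{\backslash i}\|$ by a two-level decomposition: first separate the effect of the ensemble weights $\bm{\lambda}$ from the effect of the pattern representations $\bm{z}^{(m)}$, and then, inside each pattern, propagate the weight perturbation $\mathcal{W}^{(m)}_{\mathcal{D}}\!\to\!\mathcal{W}^{(m)}_{\mathcal{D}^{\backslash i}}$ through the $L$ GCN layers. For the outer step I would write $\bm{g}-\bm{g}^{\backslash i}=\sum_{m=1}^M(\lambda_m^{\mathcal{D}}-\lambda_m^{\mathcal{D}^{\backslash i}})\,\bm{z}^{(m)}_{\mathcal{D}}+\sum_{m=1}^M\lambda_m^{\mathcal{D}^{\backslash i}}\,(\bm{z}^{(m)}_{\mathcal{D}}-\bm{z}^{(m)}_{\mathcal{D}^{\backslash i}})$, then apply the triangle inequality together with $\sum_m|a_m|\,\|\bm{v}_m\|\le(\max_m\|\bm{v}_m\|)\sum_m|a_m|$. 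On the first sum the already-proven bound $\|\bm{z}^{(m)}_{\mathcal{D}}\|\le\frac{1}{\sqrt n}\rho^L\hat\beta_W^L\beta_X(1+\beta_A)^L(1+\alpha)^{-L}$ gives exactly the $\hat\beta_W\|\bm{\lambda}_{\mathcal{D}}-\bm{\lambda}_{\mathcal{D}^{\backslash i}}\|$ contribution after the common prefactor $\rho^L\hat\beta_W^{L-1}\beta_X(1+\beta_A)^L(1+\alpha)^{-L}/\sqrt n$ is pulled out.

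For the inner step the key observation is that, since the test graph $G$ is fixed, every sampled subgraph $S\in\mathcal{S}^{(m)}$ has the same $\bm{A}_S$ and $\bm{X}_S$ (hence the same normalized operator $\bm{U}_S$) for both models, so $\bm{z}^{(m)}_{\mathcal{D}}-\bm{z}^{(m)}_{\mathcal{D}^{\backslash i}}$ is driven purely by the change in the GCN weights. Writing $\bm{X}_S^{(m,l)}$ and $\tilde{\bm{X}}_S^{(m,l)}$ for the $l$-th hidden layers under $\mathcal{W}^{(m)}_{\mathcal{D}}$ and $\mathcal{W}^{(m)}_{\mathcal{D}^{\backslash i}}$ and setting $a_l:=\|\bm{X}_S^{(m,l)}-\tilde{\bm{X}}_S^{(m,l)}\|_F$, I would use the $\rho$-Lipschitz property of $\sigma$ and add and subtract $\bm{U}_S\bm{X}_S^{(m,l-1)}\bm{W}^{(m,l)}_{\mathcal{D}^{\backslash i}}$ inside the norm to obtain $a_l\le\rho\|\bm{U}_S\|_2\bigl(\|\bm{X}_S^{(m,l-1)}\|_F\,\hat\beta_{\Delta W}+\hat\beta_W\,a_{l-1}\bigr)$ with $a_0=0$. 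Substituting $\|\bm{U}_S\|_2\le(1+\alpha)^{-1}(1+\beta_A)$ and the a-priori layer bound $\|\bm{X}_S^{(m,l-1)}\|_F\le\rho^{l-1}\hat\beta_W^{l-1}\beta_X(1+\beta_A)^{l-1}(1+\alpha)^{-(l-1)}$ from Lemma~\ref{lem: ineq-subgraph}, unrolling the recursion, and observing that all $L$ resulting summands coincide, one gets $a_L\le L\,\hat\beta_{\Delta W}\,\rho^L\hat\beta_W^{L-1}\beta_X(1+\beta_A)^L(1+\alpha)^{-L}$.

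To finish, I would pass through the average-pooling output layer and the average over the $Q$ sampled subgraphs exactly as in the previous lemma's proof—each step only contributing the factor $\tfrac1n\|\bm{1}\|_2=\tfrac1{\sqrt n}$ once the maximum over $S$ is taken—to conclude $\|\bm{z}^{(m)}_{\mathcal{D}}-\bm{z}^{(m)}_{\mathcal{D}^{\backslash i}}\|\le\frac{L}{\sqrt n}\hat\beta_{\Delta W}\rho^L\hat\beta_W^{L-1}\beta_X(1+\beta_A)^L(1+\alpha)^{-L}$. The second sum is then at most $\|\bm{\lambda}_{\mathcal{D}^{\backslash i}}\|$ times this quantity, and adding the two contributions and collecting the shared prefactor yields the claimed inequality. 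The main obstacle is the inner weight-perturbation recursion: one must split each layer correctly (perturb the weight at the current layer while carrying the accumulated feature perturbation forward), feed in the right bound on $\|\bm{X}_S^{(m,l-1)}\|_F$, and verify that the unrolled expression collapses to a clean multiplicative factor $L$ rather than a genuine geometric series—which happens precisely because the decay $\rho\|\bm{U}_S\|_2\hat\beta_W$ per layer is exactly matched by the growth of $\|\bm{X}_S^{(m,l-1)}\|_F$.
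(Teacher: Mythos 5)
Your proposal is correct and follows essentially the same route as the paper's proof: the same add-and-subtract decomposition separating the $\bm{\lambda}$-perturbation term from the $\bm{z}^{(m)}$-perturbation term, the same layerwise recursion for the weight perturbation (with zero initial perturbation, the per-layer contraction $\rho\|\bm{U}_S\|_2\hat{\beta}_W$ cancelling the growth of $\|\bm{X}_S^{(m,l-1)}\|_F$ so the unrolled sum collapses to the factor $L$), and the same average-pooling and subgraph-averaging step producing the $1/\sqrt{n}$ factor. The only cosmetic difference is in the first term: you bound it termwise, which strictly yields the $\ell_1$ norm of $\bm{\lambda}_{\mathcal{D}}-\bm{\lambda}_{\mathcal{D}^{\backslash i}}$, whereas the paper stacks the $\bm{z}^{(m)}$ into a matrix $\bm{Z}_{\mathcal{D}}$ and uses a spectral-norm argument; since the downstream theorem only uses $\|\bm{\lambda}\|\leq 1$ and $\|\bm{\lambda}_{\mathcal{D}}-\bm{\lambda}_{\mathcal{D}^{\backslash i}}\|\leq 2$, which hold for either norm under the simplex constraint, this does not affect the final bound.
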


\begin{proof}
To simplify the proof, we denote
\begin{equation}
    \begin{aligned}
        \hat{\beta}_W &= \max\{\max_{m \in [M], l \in [L]} \|\bm{W}^{(m, l)}_{\mathcal{D}}\|_2, \max_{m \in [M], l \in [L]} \|\bm{W}^{(m, l)}_{\mathcal{D}^{\backslash i}}\|_2\} \\
\hat{\beta}_{\Delta W} &= \max_{m \in [M], l \in [L]} \|\mathcal{W}^{(m, l)}_{\mathcal{D}} -  \mathcal{W}^{(m, l)}_{\mathcal{D}^{\backslash i}}\|_2.
    \end{aligned}
\end{equation}

Let $\bm{X}_{S \mathcal{D}}^{(m,l)}$ be the embedding features of the $l$-th layer GCN with the parameter $\mathcal{W}^{(m)}_{\mathcal{D}}$ learned from dataset $\mathcal{D}$. 
Let $\bm{X}_{S \mathcal{D}^{\backslash i}}^{(m,l)}$ be the embedding features of the $l$-th layer GCN with the parameter $\mathcal{W}^{(m)}_{\mathcal{D}^{\backslash i}}$ learned from dataset $\mathcal{D}^{\backslash i}$. 

We denote $\bm{Z}_{\mathcal{D}} = [\bm{z}_{\mathcal{D}}^{(1)}, ..., \bm{z}_{\mathcal{D}}^{(m)}]^\top$ and 
$\bm{Z}_{\mathcal{D}^{\backslash i}} = [\bm{z}_{\mathcal{D}^{\backslash i}}^{(1)}, ..., \bm{z}_{\mathcal{D}^{\backslash i}}^{(m)}]^\top $. Let
$$q_1 = \argmax_{m \in [M]} \|\bm{z}_{\mathcal{D}}^{(m)}\|, ~~~ 
q_2 = \argmax_{m \in [M]} \| \bm{z}_{\mathcal{D}}^{(l_2)} - \bm{z}_{\mathcal{D}^{\backslash i}}^{(l_2)}\|. $$
Then we have
$$\|\bm{Z}_{\mathcal{D}} \|_2 \leq  \| \bm{z}_{\mathcal{D}}^{(q_1)}\|, ~~~ 
\|\bm{Z}_{\mathcal{D}} - \bm{Z}_{\mathcal{D}^{\backslash i}} \|_2 \leq \| \bm{z}_{\mathcal{D}}^{(q_2)} - \bm{z}_{\mathcal{D}^{\backslash i}}^{(q_2)}\|.$$

Similar to inequality \eqref{eqn:general-g}, we have
\begin{equation}
    \begin{aligned}
        \| \bm{z}_{\mathcal{D}}^{(q_1)}\| \leq \frac{1}{\sqrt{n}} \rho^L \hat{\beta}_W^L \beta_X ( 1 + \beta_A)^L (1 + \alpha)^{-L}
    \end{aligned}
\end{equation}

Denote $\Delta_{X^{(q_2, l)}_{S\mathcal{D}}} := \bm{X}_{S \mathcal{D}}^{(q_2,l)} -  \bm{X}_{S \mathcal{D}^{\backslash i}}^{(q_2,l)}$, then, similar to inequality \eqref{eqn:ineq-det-X} we have
\begin{equation}
    \begin{aligned}
        \|\Delta_{X^{(q_2, l)}_{S\mathcal{D}}} \|_F 
        &= \|\sigma(\bm{U}_S \bm{X}_{S \mathcal{D}}^{(q_2,l-1)} \mathcal{W}^{(q_2)}_{\mathcal{D}}) -
                \sigma(\bm{U}_S \bm{X}_{S \mathcal{D}^{\backslash i}}^{(q_2,l)} \mathcal{W}^{(q_2)}_{\mathcal{D}^{\backslash i}}) \|_F \\
        & \leq \rho \|\bm{U}_S \|_2  \|\bm{X}_{S \mathcal{D}}^{(q_2,l-1)} W^{(q_2, l-1)}_{\mathcal{D}} - \bm{X}_{S \mathcal{D}^{\backslash i}}^{(q_2,l-1)} W^{(q_2, l-1)}_{\mathcal{D}^{\backslash i}} \|_F \\
        & \leq \rho \|\bm{U}_S \|_2  \|\bm{X}_{S \mathcal{D}}^{(q_2,l-1)} W^{(q_2, l-1)}_{\mathcal{D}} - 
        \bm{X}_{S \mathcal{D}}^{(q_2,l-1)} W^{(q_2, l-1)}_{\mathcal{D}^{\backslash i}} + \bm{X}_{S \mathcal{D}}^{(q_2,l-1)} W^{(q_2, l-1)}_{\mathcal{D}^{\backslash i}} -
        \bm{X}_{S \mathcal{D}^{\backslash i}}^{(q_2,l-1)} W^{(q_2, l-1)}_{\mathcal{D}^{\backslash i}} \|_F \\
        & \leq \rho \|\bm{U}_S \|_2  \|\bm{X}_{S \mathcal{D}}^{(q_2,l-1)} (W^{(q_2, l-1)}_{\mathcal{D}} -  W^{(q_2, l-1)}_{\mathcal{D}^{\backslash i}}) +(\bm{X}_{S \mathcal{D}}^{(q_2,l-1)} - \bm{X}_{S \mathcal{D}^{\backslash i}}^{(q_2,l-1)}) W^{(q_2, l-1)}_{\mathcal{D}^{\backslash i}} \|_F \\
        & \leq \rho \|\bm{U}_S \|_2 ( \|\bm{X}_{S \mathcal{D}}^{(q_2,l-1)} \|_F \|W^{(q_2, l-1)}_{\mathcal{D}} -  W^{(q_2, l-1)}_{\mathcal{D}^{\backslash i}}\|_2 +\|\bm{X}_{S \mathcal{D}}^{(q_2,l-1)} - \bm{X}_{S \mathcal{D}^{\backslash i}}^{(q_2,l-1)} \|_F \|W^{(q_2, l-1)}_{\mathcal{D}^{\backslash i}} \|_2) \\
        &= \rho \|\bm{U}_S \|_2 \hat{\beta}_W \|\Delta_{X^{(q_2, l-1)}_{S\mathcal{D}}} \|_F + 
        \rho \|\bm{U}_S \|_2 \hat{\beta}_{\Delta W} \|\bm{X}_{S \mathcal{D}}^{(q_2,l-1)} \|_F \\
        & \leq  \rho^l \|\bm{U}_S\|_2^l \hat{\beta}_W^l \|\Delta_{X^{(q_2, 0)}_{S\mathcal{D}}} \|_F + \sum_{k=1}^l \rho^k \|\bm{U}_S\|_2^k \hat{\beta}_W^{k-1} \hat{\beta}_{\Delta W} \|\bm{X}_{S \mathcal{D}}^{(q_2,l-k)} \|_F
    \end{aligned}
\end{equation}
where $\|\Delta_{X^{(q_2, 0)}_{S\mathcal{D}}} \|_F = \|\bm{X}_S - \bm{X}_S \|_F = 0$. 
We can directly use the inequality \eqref{eqn:ineq-X}, such that 
\begin{equation}
    \begin{aligned}
        \|\bm{X}_{S\mathcal{D}}^{(m, l)} \|_F  \leq \rho^l \hat{\beta}_W^l \beta_X ( 1 + \beta_A)^l (1 + \alpha)^{-l}
    \end{aligned}
\end{equation}
Thus, we continue the proof
\begin{equation}
    \begin{aligned}
        \|\Delta_{X^{(q_2, l)}_{S\mathcal{D}}} \|_F 
        & \leq  \rho^l \|\bm{U}_S\|_2^l \hat{\beta}_W^l \|\Delta_{X^{(q_2, 0)}_{S\mathcal{D}}} \|_F + \sum_{k=1}^l \rho^k \|\bm{U}_S\|_2^k \hat{\beta}_W^{k-1} \hat{\beta}_{\Delta W} \|\bm{X}_{S \mathcal{D}}^{(q_2,l-k)} \|_F \\
        &\leq l \rho^l  (1 + \alpha)^{-l} ( 1 + \beta_A)^l \hat{\beta}_W^{l-1} \hat{\beta}_{\Delta W}   \beta_X 
    \end{aligned}
\end{equation}

Also similar to inequality \eqref{lem: ineq-subgraph}, we have
\begin{equation}
    \begin{aligned}
    \| \bm{z}_{\mathcal{D}}^{(q_2)} - \bm{z}_{\mathcal{D}^{\backslash i}}^{(q_2)}\| 
    &=\| F(\bm{A}_S, \bm{X}_S; \mathcal{W}^{(q_2)}_{\mathcal{D}}) - F(\bm{A}_S, \bm{X}_S; \mathcal{W}^{(q_2)}_{\mathcal{D}^{\backslash i}}) \| \\
    & = \frac{1}{n} \|\bm{1}^\top ( \bm{X}_{S \mathcal{D}}^{(q_2,L)}) - \bm{1}^\top ( \bm{X}_{S \mathcal{D}^{\backslash i}}^{(q_2,L)})\|\\
    & = \frac{1}{\sqrt{n}} \|\bm{X}_{S \mathcal{D}}^{(q_2,L)} -  \bm{X}_{S \mathcal{D}^{\backslash i}}^{(q_2,L)}\|_F = \frac{1}{\sqrt{n}} \|\Delta_{X^{(q_2, L)}_{S\mathcal{D}}}\|_F\\
    & \leq \frac{L}{\sqrt{n}} \rho^L  (1 + \alpha)^{-L} ( 1 + \beta_A)^L \hat{\beta}_W^{L-1} \hat{\beta}_{\Delta W}   \beta_X
    \end{aligned}
\end{equation}

Finally, we have
    \begin{equation}
        \begin{aligned}
        \|\bm{g} - \bm{g}^{\backslash i} \| 
        & = \|\bm{\lambda}_{\mathcal{D}}^\top \bm{Z}_{\mathcal{D}} - 
        \bm{\lambda}_{\mathcal{D}^{\backslash i}}^\top \bm{Z}_{\mathcal{D}^{\backslash i}}\|   \\
        & = \|\bm{\lambda}_{\mathcal{D}}^\top \bm{Z}_{\mathcal{D}} -\bm{\lambda}_{\mathcal{D}^{\backslash i}}^\top \bm{Z}_{\mathcal{D}}  + 
        \bm{\lambda}_{\mathcal{D}^{\backslash i}}^\top \bm{Z}_{\mathcal{D}} - 
        \bm{\lambda}_{\mathcal{D}^{\backslash i}}^\top \bm{Z}_{\mathcal{D}^{\backslash i}}\| \\
        & = \|(\bm{\lambda}_{\mathcal{D}} - \bm{\lambda}_{\mathcal{D}^{\backslash i}})^\top \bm{Z}_{\mathcal{D}} +  \bm{\lambda}_{\mathcal{D}^{\backslash i}}^\top 
        (\bm{Z}_{\mathcal{D}} - \bm{Z}_{\mathcal{D}^{\backslash i}})\| \\
        & \leq \|\bm{\lambda}_{\mathcal{D}} - \bm{\lambda}_{\mathcal{D}^{\backslash i}} \| \|\bm{Z}_{\mathcal{D}} \|_2 + \|\bm{\lambda}_{\mathcal{D}^{\backslash i}} \| 
        \|\bm{Z}_{\mathcal{D}} - \bm{Z}_{\mathcal{D}^{\backslash i}} \|_2 \\
        & \leq \|\bm{\lambda}_{\mathcal{D}} - \bm{\lambda}_{\mathcal{D}^{\backslash i}} \| 
        \| \bm{z}_{\mathcal{D}}^{(q_1)}\| + 
        \|\bm{\lambda}_{\mathcal{D}^{\backslash i}} \|
        \| \bm{z}_{\mathcal{D}}^{(q_2)} - \bm{z}_{\mathcal{D}^{\backslash i}}^{(q_2)}\| \\
        & \leq \|\bm{\lambda}_{\mathcal{D}} - \bm{\lambda}_{\mathcal{D}^{\backslash i}} \| 
        \frac{1}{\sqrt{n}} \rho^L \hat{\beta}_W^L \beta_X ( 1 + \beta_A)^L (1 + \alpha)^{-L} + \|\bm{\lambda}_{\mathcal{D}^{\backslash i}} \| \frac{L}{\sqrt{n}} \rho^L  (1 + \alpha)^{-L} ( 1 + \beta_A)^L \hat{\beta}_W^{L-1} \hat{\beta}_{\Delta W} \beta_X\\
        & = \frac{1}{\sqrt{n}} \rho^L \hat{\beta}_W^{L-1}  \beta_X ( 1 + \beta_A)^L (1 + \alpha)^{-L} \left( \hat{\beta}_W  \|\bm{\lambda}_{\mathcal{D}} - \bm{\lambda}_{\mathcal{D}^{\backslash i}} \| + L \hat{\beta}_{\Delta W} \|\bm{\lambda}_{\mathcal{D}^{\backslash i}} \|  \right)
        \end{aligned}
    \end{equation}
\end{proof}

\section{Experiment}\label{app:experiment}
In this section, we present additional details of the experiment.

\subsection{Supervised Learning}
We conduct supervised XGL via pattern analysis, the proposed PXGL-GNN, by solving optimization with the classification loss. The dataset is split into 80\% training, 10\% validation, and 10\% testing data. The weight parameter $\bm{\lambda}$, indicating each pattern's contribution to graph representation learning, is reported in Table \ref{tab:lambda-supervised}. We also visualize the graph representation $\bm{g}$ and three pattern representations $\bm{z}^{(m)}$ of PROTEINS. Results show the paths pattern is most important for learning $\bm{g}$, and the ensemble representation $\bm{g}$ outperforms single pattern representations $\bm{z}^{(m)}$.

\begin{table*}[h]
\centering
\caption{The learned $\bm{\lambda}$ of PXGL-GNN (supervised). The largest value is {\bf bold} and the second largest value is {\color{blue} blue}.}
\label{tab:lambda-supervised-appendix}
\resizebox{1\linewidth}{!}{
\setlength{\tabcolsep}{1.2mm}
\renewcommand{\arraystretch}{0.85}
\begin{tabular}{c|cccccccc}
\hline
Pattern     & MUTAG & PROTEINS & DD & NCI1 & COLLAB & IMDB-B & REDDIT-B & REDDIT-M5K \\ \hline
paths       & $\color{blue}0.095 \pm 0.014$ & $\bf0.550 \pm 0.070$ & $0.093 \pm 0.012$ & $0.022 \pm 0.002$ & $\bf0.587 \pm 0.065$ & $\color{blue}0.145 \pm 0.018$ & $0.131 \pm 0.027$ & $0.027 \pm 0.003$ \\
trees       & $0.046 \pm 0.005$ & $0.074 \pm 0.009$ & $0.054 \pm 0.006$ & $0.063 \pm 0.008$ & $0.105 \pm 0.013$ & $0.022 \pm 0.003$ & $0.055 \pm 0.007$ & $0.025 \pm 0.003$ \\
graphlets   & $0.062 \pm 0.008$ & $0.081 \pm 0.011$ & $\color{blue}0.125 \pm 0.015$ & $0.101 \pm 0.013$ & $0.063 \pm 0.008$ & $0.084 \pm 0.011$ & $0.026 \pm 0.003$ & $0.054 \pm 0.007$ \\
cycles      & $\bf 0.654 \pm 0.085$ & $0.099 \pm 0.013$ & $0.094 \pm 0.012$ & $\color{blue}0.176 \pm 0.022$ & $0.022 \pm 0.003$ & $0.123 \pm 0.016$ & $0.039 \pm 0.005$ & $0.037 \pm 0.005$ \\
cliques     & $0.082 \pm 0.011$ & $\color{blue}0.098 \pm 0.012$ & $\bf0.572 \pm 0.073$ & $\bf0.574 \pm 0.075$ & $\color{blue}0.134 \pm 0.017$ & $\bf0.453 \pm 0.054$ & $\color{blue} 0.279 \pm 0.069$ & $\color{blue}0.256 \pm 0.067$ \\
wheels      & $0.026 \pm 0.003$ & $0.039 \pm 0.005$ & $0.051 \pm 0.007$ & $0.012 \pm 0.002$ & $0.068 \pm 0.009$ & $0.037 \pm 0.004$ & $0.036 \pm 0.005$ & $0.023 \pm 0.003$ \\
stars       & $0.035 \pm 0.005$ & $0.056 \pm 0.007$ & $0.011 \pm 0.002$ & $0.052 \pm 0.007$ & $0.021 \pm 0.003$ & $0.136 \pm 0.017$ & $\bf 0.447 \pm 0.006$ & $\bf 0.578 \pm 0.033$ \\ 
\hline
\end{tabular}
}
\end{table*}

The compared baselines include classical GNNs like GIN \cite{xu2018powerful}, DiffPool \cite{ying2018hierarchical}, DGCNN \cite{zhang2018end}, GRAPHSAGE \cite{hamilton2017inductive}; subgraph-based GNNs like SubGNN \cite{kriege2012subgraph}, SAN \cite{zhao2018substructure}, SAGNN \cite{zeng2023substructure}; and recent methods like S2GAE \cite{tan2023s2gae} and ICL \cite{zhao2024twist}. The accuracies in Table \ref{tab:supervised-acc} show that our method performs the best.

\begin{table*}[h]
\centering
\caption{Accuracy (\%) of Graph Classification. The best accuracy is {\bf bold} and the second best is {\color{blue} blue}.}
\label{tab:supervised-acc-appendix}
\resizebox{1\linewidth}{!}{
\setlength{\tabcolsep}{1.2mm}
\renewcommand{\arraystretch}{0.85}
\begin{tabular}{c|cccccccc}
\hline
Method    & MUTAG                 & PROTEINS         & DD               & NCI1             & COLLAB           & IMDB-B           & REDDIT-B         & REDDIT-M5K  \\ \hline
GIN        & 84.53 $\pm$ 2.38     & 73.38 $\pm$2.16  & 76.38 $\pm$1.58  & 73.36 $\pm$1.78  & 75.83 $\pm$ 1.29 & 72.52 $\pm$ 1.62 & 83.27 $\pm$ 1.30 & 52.48 $\pm$  1.57\\
DiffPool   & 86.72 $\pm$ 1.95    & 76.07 $\pm$1.62  & 77.42 $\pm$2.14  & 75.42 $\pm$2.16  & 78.77 $\pm$ 1.36 & 73.55 $\pm$ 2.14  & 84.16 $\pm$ 1.28 & 51.39 $\pm$  1.48\\
DGCNN      & 84.29 $\pm$ 1.16   & 75.53 $\pm$2.14  & 76.57 $\pm$1.09  & 74.81 $\pm$1.53  & 77.59 $\pm$ 2.24 & 72.19 $\pm$ 1.97 & 86.33 $\pm$ 2.29 & 53.18 $\pm$  2.41\\
GRAPHSAGE  & 86.35 $\pm$ 1.31   & 74.21 $\pm$1.85  & 79.24 $\pm$2.25  & 77.93 $\pm$2.04  & 76.37 $\pm$ 2.11 & 73.86 $\pm$ 2.17  & 85.59 $\pm$ 1.92 & 51.65 $\pm$  2.55\\
SubGNN     & 87.52 $\pm$ 2.37   & 76.38 $\pm$1.57  & 82.51 $\pm$1.67  & 82.58 $\pm$1.79  & 81.26 $\pm$ 1.53 & 71.58 $\pm$ 1.20  & 88.47 $\pm$ 1.83 & 53.27 $\pm$  1.93\\
SAN        & 92.65 $\pm$ 1.53   & 75.62 $\pm$2.39  & 81.36 $\pm$2.10  &\color{blue} 83.07 $\pm$1.54  &\textbf{\color{blue}} 82.73 $\pm$ 1.92 & 75.27 $\pm$ 1.43  & 90.38 $\pm$ 1.54 & 55.49 $\pm$  1.75\\
SAGNN      &\color{blue} 93.24 $\pm$ 2.51   & 75.61 $\pm$2.28  & 84.12 $\pm$1.73  & 81.29 $\pm$1.22  & 79.94 $\pm$ 1.83 & 74.53 $\pm$ 2.57 & 89.57 $\pm$ 2.13 & 54.11 $\pm$  1.22\\
ICL        & 91.34 $\pm$ 2.19   & 75.44 $\pm$1.26  & 82.77 $\pm$1.42  & 83.45 $\pm$1.78  & 81.45 $\pm$ 1.21 & 73.29 $\pm$ 1.46   &\color{blue} 90.13 $\pm$ 1.40 &\color{blue} 56.21 $\pm$  1.35\\
S2GAE      & 89.27 $\pm$ 1.53   &\color{blue} 76.47 $\pm$1.12  &\color{blue} 84.30 $\pm$1.77  & 82.37 $\pm$2.24  & 82.35 $\pm$ 2.34 &\color{blue} 75.77 $\pm$ 1.72   & 90.21 $\pm$ 1.52 & 54.53 $\pm$  2.17\\ \hline
PXGL-GNN       &\bf 94.87 $\pm$ 2.26   &\bf 78.23 $\pm$2.46  &\bf 86.54 $\pm$1.95  &\bf 85.78 $\pm$2.07  &\bf 83.96 $\pm$ 1.59 &\bf 77.35 $\pm$ 2.32   &\bf 91.84 $\pm$ 1.69 &\bf 57.36 $\pm$  2.14 \\ \hline
\end{tabular} 
}
\end{table*}

\subsection{Unsupervised Learning}
We conduct unsupervised XGL via pattern analysis, the proposed PXGL-GNN, by solving optimization with the KL divergence loss.
The weight parameter $\bm{\lambda}$ for XGL is reported in Table \ref{tab:lambda-unsupervised}.
Results show that the ensemble representation $\bm{g}$ outperforms single pattern representations $\bm{z}^{(m)}$.

\begin{table*}[h]
\centering
\caption{The learned $\bm{\lambda}$ of PXGL-GNN (unsupervised). The largest value is {\bf bold} and the second largest value is {\color{blue} blue}.}
\label{tab:lambda-unsupervised}
\resizebox{1\linewidth}{!}{
\setlength{\tabcolsep}{1.2mm}
\renewcommand{\arraystretch}{0.85}
\begin{tabular}{c|ccccccccc}
\hline
Pattern    & MUTAG & PROTEINS & DD & NCI1 & COLLAB & IMDB-B & REDDIT-B & REDDIT-M5K \\ \hline
paths       & $0.085 \pm 0.021$ & $\bf0.463 \pm 0.057$ & $0.083 \pm 0.010$ & $0.023 \pm 0.001$ & $\bf0.478 \pm 0.046$ & $0.153 \pm 0.018$ & $0.101 \pm 0.007$ & $0.084 \pm 0.006$ \\
trees       & $0.027 \pm 0.005$ & $0.082 \pm 0.008$ & $0.069 \pm 0.007$ & $0.042 \pm 0.002$ & $0.127 \pm 0.017$ & $0.082 \pm 0.009$ & $0.060 \pm 0.003$ & $0.036 \pm 0.002$ \\
graphlets   & $0.074 \pm 0.009$ & $0.085 \pm 0.010$ & $\color{blue}0.172 \pm 0.020$ & $0.105 \pm 0.012$ & $0.055 \pm 0.006$ & $0.098 \pm 0.011$ & $0.025 \pm 0.002$ & $0.055 \pm 0.005$ \\
cycles      & $\bf0.546 \pm 0.065$ & $0.095 \pm 0.011$ & $0.108 \pm 0.013$ & $\color{blue}0.276 \pm 0.033$ & $0.022 \pm 0.002$ & $\color{blue}0.124 \pm 0.014$ & $0.043 \pm 0.005$ & $0.028 \pm 0.003$ \\
cliques     & $\color{blue}0.197 \pm 0.023$ & $\color{blue}0.207 \pm 0.025$ & $\bf0.527 \pm 0.063$ & $\bf0.482 \pm 0.058$ & $\color{blue}0.243 \pm 0.029$ & $\bf0.423 \pm 0.051$ & $ \color{blue}0.212 \pm 0.061$ & $\color{blue}0.157 \pm 0.067$ \\
wheels      & $0.032 \pm 0.003$ & $0.036 \pm 0.004$ & $0.018 \pm 0.002$ & $0.013 \pm 0.001$ & $0.044 \pm 0.005$ & $0.035 \pm 0.004$ & $0.036 \pm 0.003$ & $0.025 \pm 0.013$ \\
stars       & $0.039 \pm 0.004$ & $0.032 \pm 0.002$ & $0.023 \pm 0.003$ & $0.059 \pm 0.007$ & $0.031 \pm 0.001$ & $0.085 \pm 0.010$ & $\bf0.455 \pm 0.019$ & $ \bf 0.585 \pm 0.022$ \\
\hline
\end{tabular}
}
\end{table*}

For clustering performance, we use clustering accuracy (ACC) and Normalized Mutual Information (NMI). Baselines include four kernels: Random walk kernel (RW) \cite{borgwardt2005protein}, Sub-tree kernels \cite{da2012tree,smola2002fast}, Graphlet kernels \cite{prvzulj2007biological}, Weisfeiler-Lehman (WL) kernels \cite{kriege2012subgraph}; and three unsupervised graph representation learning methods with Gaussian kernel: InfoGraph \cite{sun2019infograph}, GCL \cite{you2020graph}, 
GraphACL \cite{luo2023self}. The results are in Table \ref{tab:unsupervised-nmi}. Our method outperformed all competitors in almost all cases.

\begin{table*}[h!]
\centering
\caption{ACC and NMI of Graph Clustering. The best ACC is {\bf bold} and the the second best ACC is {\color{blue} blue}. The best NMI is {\color{green} green} and the second best NMI is with $^*$.}
\label{tab:unsupervised-nmi}
\resizebox{1\linewidth}{!}{
\setlength{\tabcolsep}{1.2mm}
\renewcommand{\arraystretch}{0.85}
\begin{tabular}{c|c|cccccccc}
\hline
Method  &Metric  & MUTAG & PROTEINS & DD & NCI1 & COLLAB & IMDB-B & REDDIT-B& REDDIT-M5K  \\ \hline
\multirow{2}{*}{\makecell{RW}}        
            & ACC & 0.724 $\pm$0.023 & 0.718 $\pm$ 0.019  & 0.529 $\pm$ 0.017 & 0.519 $\pm$0.025 &\color{blue} 0.596 $\pm$0.019 & 0.669 $\pm$0.028 & $\geq$ 1 day  & $\geq$ 1 day \\
            & NMI & 0.283 $\pm$0.008 & 0.226 $\pm$ 0.008 & 0.207 $\pm$ 0.003  & 0.218 $\pm$0.009 & 0.356$^*$ $\pm$0.002  & 0.295  $\pm$0.006  & $\geq$ 1 day  & $\geq$ 1 day \\ \hline
\multirow{2}{*}{\makecell{sub-tree}}      
            & ACC & 0.716 $\pm$0.017  & 0.683 $\pm$ 0.023 & 0.563 $\pm$ 0.026 & 0.532 $\pm$0.016 & 0.533 $\pm$0.021  & 0.627 $\pm$0.022  & $\geq$ 1 day  & $\geq$ 1 day \\
            & NMI & 0.217 $\pm$0.005 & 0.167 $\pm$  0.004 & 0.225 $\pm$ 0.005 & 0.295 $\pm$0.004  & 0.198 $\pm$0.005  & 0.254  $\pm$0.007  & $\geq$ 1 day  & $\geq$ 1 day \\ \hline
\multirow{2}{*}{\makecell{Graphlet}}      
            & ACC & 0.727 $\pm$0.020 & 0.654 $\pm$  0.017 &\bf 0.581 $\pm$ 0.014 & 0.526 $\pm$0.032 & 0.525 $\pm$0.026  & 0.617 $\pm$0.019  & $\geq$ 1 day  & $\geq$ 1 day \\
            & NMI & 0.225 $\pm$0.003 & 0.131 $\pm$  0.009 &\color{green} 0.320 $\pm$ 0.009 & 0.273 $\pm$0.005  & 0.217 $\pm$0.003  & 0.210 $\pm$0.004  & $\geq$ 1 day  & $\geq$ 1 day \\ \hline
\multirow{2}{*}{\makecell{WL}}          
            & ACC & 0.695 $\pm$0.031 & 0.647 $\pm$  0.032 & 0.517 $\pm$ 0.020 & 0.517 $\pm$0.028  & 0.569 $\pm$0.017  & 0.635 $\pm$0.017  & $\geq$ 1 day  & $\geq$ 1 day \\
            & NMI & 0.185 $\pm$0.007 & 0.135 $\pm$  0.001 & 0.192 $\pm$ 0.008 & 0.234 $\pm$0.007  & 0.253  $\pm$0.007  & 0.261 $\pm$0.003  & $\geq$ 1 day  & $\geq$ 1 day \\ \hline
\multirow{2}{*}{\makecell{InfoGraph}}     
            & ACC & 0.729 $\pm$0.021 & 0.716 $\pm$ 0.019 & 0.549 $\pm$ 0.035 & 0.535 $\pm$0.012 & 0.597  $\pm$0.020  & 0.624 $\pm$0.016  & 0.582 $\pm$0.023  &\color{blue} 0.597 $\pm$0.019   \\
            & NMI & 0.236 $\pm$0.005 & 0.231 $\pm$ 0.003 & 0.266 $\pm$ 0.004 & 0.263 $\pm$0.005 & 0.311  $\pm$0.008  & 0.198 $\pm$0.005  & 0.206 $\pm$0.006  & 0.286$^*$ $\pm$0.006   \\ \hline
\multirow{2}{*}{\makecell{GCL }}       
            & ACC &\color{blue} 0.761 $\pm$0.014 & 0.723 $\pm$ 0.025 & 0.563 $\pm$ 0.016 &\color{blue} 0.558 $\pm$0.010 & 0.582  $\pm$0.015  & 0.653 $\pm$0.024  & 0.573 $\pm$0.015  & 0.582 $\pm$0.017  \\
            & NMI & 0.337 $\pm$0.003 & 0.258 $\pm$ 0.002 & 0.289 $\pm$ 0.009 &\color{green} 0.341 $\pm$0.002 & 0.293 $\pm$ 0.009 & 0.253 $\pm$0.008  & 0.195 $\pm$0.005  & 0.266 $\pm$0.005  \\ \hline
\multirow{2}{*}{\makecell{GraphACL }}      
            & ACC & 0.742 $\pm$0.023 &\color{blue} 0.731 $\pm$ 0.027 & 0.572 $\pm$ 0.027 & 0.522 $\pm$0.013 & 0.554  $\pm$0.013  &\color{blue} 0.679 $\pm$0.013   &\color{blue} 0.594 $\pm$0.014   & 0.567 $\pm$0.023   \\   
            & NMI & 0.347$^*$ $\pm$0.007 & 0.274$^*$ $\pm$ 0.008 & 0.312 $\pm$ 0.003 & 0.260 $\pm$0.007 & 0.236 $\pm$0.006  & 0.315$^*$ $\pm$0.007  & 0.215$^*$ $\pm$ 0.006  & 0.238 $\pm$ 0.009 \\ \hline
\multirow{2}{*}{\makecell{PXGL-GNN}}         
            & ACC &\bf 0.778 $\pm$0.029 &\bf 0.746 $\pm$ 0.019 &\color{blue} 0.576 $\pm$ 0.035 &\bf 0.564 $\pm$0.013 &\bf 0.612  $\pm$0.014 &\bf 0.686 $\pm$0.027 &\bf 0.616 $\pm$0.017  &\bf 0.608 $\pm$0.023 \\ 
            & NMI &\color{green} 0.352 $\pm$0.006 &\color{green} 0.292 $\pm$ 0.010 & 0.317$^*$ $\pm$ 0.003 & 0.327$^*$ $\pm$0.008 &\color{green} 0.372 $\pm$0.007 &\color{green} 0.324 $\pm$0.011 & 0.224 $\pm$ 0.009  &\color{green} 0.295 $\pm$0.012 \\ \hline
\end{tabular} 
}
\end{table*}

\subsection{Additional Evaluation on the Ensemble Kernel (PXGL-EGK)}
Here, we compare our ensemble kernel (PXGL-EGK) with individual kernels $K_{\mathcal{P}}$. We report the results as follows. Specifically, we use three pattern counting kernels in the ensemble method: Random Walk (RW) kernels \cite{borgwardt2005protein,gartner2003graph}, Sub-tree kernels \cite{da2012tree,smola2002fast}, and Graphlet kernels \cite{prvzulj2007biological}. Since graph kernels are unsupervised learning methods, we compare the clustering accuracy and Normalized Mutual Information (NMI) of each kernel, as shown in Table \ref{tab:unsupervised-nmi-ker}. The result shows that PXGL-EGK outperforms each individual kernel it used.

\begin{table*}[h!]
\centering
\caption{ACC and NMI of Graph Clustering. The best ACC is \textbf{bold} and the best NMI is {\color{green} green}.}
\label{tab:unsupervised-nmi-ker}
\resizebox{0.65\linewidth}{!}{
\setlength{\tabcolsep}{1.2mm}
\renewcommand{\arraystretch}{0.85}
\begin{tabular}{c|c|cccccccc}
\hline
Method & Metric & MUTAG & PROTEINS & DD & IMDB-B \\ \hline
\multirow{2}{*}{RW}         & ACC & 0.743 $\pm$ 0.052 & 0.712 $\pm$ 0.021 & 0.516 $\pm$ 0.015 & 0.658 $\pm$ 0.014 \\
                            & NMI & 0.238 $\pm$ 0.016 & 0.268 $\pm$ 0.016 & 0.187 $\pm$ 0.002 & 0.266 $\pm$ 0.019 \\ \hline
\multirow{2}{*}{Sub-tree}   & ACC & 0.729 $\pm$ 0.013 & 0.692 $\pm$ 0.027 & 0.542 $\pm$ 0.016 & 0.612 $\pm$ 0.018 \\
                            & NMI & 0.195 $\pm$ 0.047 & 0.151 $\pm$ 0.028 & 0.229 $\pm$ 0.015 & 0.242 $\pm$ 0.013 \\ \hline
\multirow{2}{*}{Graphlet}   & ACC & 0.735 $\pm$ 0.026 & 0.636 $\pm$ 0.017 & 0.568 $\pm$ 0.013 & 0.614 $\pm$ 0.012 \\
                            & NMI & 0.214 $\pm$ 0.019 & 0.154 $\pm$ 0.026 & 0.285 $\pm$ 0.011 & 0.214 $\pm$ 0.025 \\ \hline
\multirow{2}{*}{PXGL-EGK}   & ACC &\bf 0.761 $\pm$ 0.025 &\bf 0.721 $\pm$ 0.028 &\bf 0.572 $\pm$ 0.025 &\bf 0.672 $\pm$ 0.023 \\
                            & NMI &\color{green} 0.328 $\pm$ 0.046 &\color{green} 0.321 $\pm$ 0.019 & \color{green}0.296 $\pm$ 0.013 &\color{green} 0.310 $\pm$ 0.021 \\ \hline
\end{tabular}
}
\end{table*}

\end{document}